\newcommand{\U}{\mathbb{I}}
\newcommand{\Perm}{\Pp}
\newcommand{\Matr}{\Ss}
\newcommand{\regul}[1]{J_{p,q}(#1)}
\newcommand{\cost}{C}
\newcommand{\Cost}[2]{ \cost_{#1,#2} }
\title{Regularized Discrete Optimal Transport}
\author{Sira Ferradans\footnotemark[2] 
		\and
		Nicolas Papadakis\footnotemark[3]	
		\and~\\ 
        Gabriel Peyr\'e\footnotemark[2] 
        \and 
        Jean-Fran\c{c}ois~Aujol\footnotemark[3] \ \footnotemark[4]
        }
\begin{document}

\maketitle

\renewcommand{\thefootnote}{\fnsymbol{footnote}}
\footnotetext[1]{This work has been supported by the European Research Council (ERC project SIGMA-Vision) and the French National Research Agency (project NatImages).}
\footnotetext[2]{CEREMADE, Universit\'e Paris-Dauphine, Place du Marechal De Lattre De Tassigny, 75775 PARIS CEDEX 16, FRANCE.}
\footnotetext[3]{IMB, UMR 5251, Universit\'e Bordeaux 1, 351 cours de la Lib\'eration F-33405 TALENCE, France}
\footnotetext[4]{Member of Institut Universitaire de France.}

\begin{abstract}
	This article introduces a generalization of  the discrete optimal transport, with applications to color image manipulations. This new formulation includes a relaxation of the mass conservation constraint and a regularization term.  These two features are crucial for image processing tasks, which necessitate to take into account families of multimodal histograms, with large mass variation across modes.  	 
	 The corresponding relaxed and regularized transportation problem is the solution of a convex optimization problem. Depending on the regularization used, this minimization can be solved using standard linear programming methods or first order proximal splitting schemes.
	 The resulting transportation plan can be used as a color transfer map, which is robust to mass variation across images color palettes. Furthermore, the regularization of the transport plan helps to remove colorization artifacts due to noise amplification.
	We also extend this framework to the computation of barycenters of distributions. The barycenter is the solution of an optimization problem, which is separately convex with respect to the barycenter and the transportation plans, but not jointly convex. A block coordinate descent scheme converges to a stationary point of the energy. We show that the resulting algorithm can be used for color normalization across several images. The relaxed and regularized barycenter defines a common color palette for those images. Applying color transfer toward this average palette performs a color normalization of the input images.  
\end{abstract}


\section{Introduction}

A large class of image processing problems involves probability densities estimated from local or global image features. In contrast to most distances from information theory (e.g. the Kullback-Leibler divergence), optimal transport (OT) takes into account the spatial location of the density modes~\cite{Villani03}. Furthermore, it also provides as a by-product a warping (the so-called transport plan) between the densities. This plan can be used to perform image modifications such as color transfer. However, an important flaw of this OT plan is that it is in general highly irregular, thus introducing unwanted artifacts in the modified images. In this article, we propose a variational formalism to relax and regularize the transport. This novel regularized OT improves visually the results for color image modifications. 

\subsection{Color Normalization and Color Transfer}

The problem of imposing some histogram on an image has been tackled since the beginning of image processing. Classic problems are histogram equalization or histogram specification (see for example~\cite{Gonzalez:2001}). Given two images, the goal of color transfer is to impose on one of the images the histogram of the other one. An approach to color transfer based on matching statistical properties (mean and covariance) is proposed by Reinhard et al.~\cite{Reinhard01} for the $\ell\al\beta$ color space, and generalized by Xiao and Ma~\cite{Xiao:2006} to any color space. Wang and Huang~\cite{WangH04} use similar ideas to generate a sequence of the same image with a changing histogram.  Morovic and Sun~\cite{Morovic03} and Delon~\cite{Delon04} show that histogram transfer is directly related to the OT problem.

A special case of color transfer is color normalization where the goal is to impose the same histogram, normally some ``average'' histogram, on a set of different images. An application for the color balancing of videos is proposed by Delon~\cite{Delon:2006} to correct flickering in old movies. In the context of canceling illumination, this problem is also known as color constancy and it has been thoroughly studied by Land and McCann who propose the Retinex theory (see~\cite{Land:71} and~\cite{Amestoy09} for a modern formulation). Canceling the illumination of a scene is an important component in the computer vision pipeline, and it is regularly used as a preprocessing to register/compare several images taken with different cameras or illumination conditions, as a preprocessing before registration, see~\cite{Csink98} for instance. 

\subsection{Optimal Transport and Imaging}
\label{subsec-ot-imaging}

\paragraph{Discrete optimal transport}

The discrete OT is the solution of a convex linear program originally introduced by Kantorovitch~\cite{Kantorovitch-OT}. It corresponds to the convex relaxation of a combinatorial problem when the densities are sums of the same number of Dirac masses. This relaxation is tight (i.e. the solution of the linear program is an assignment) and it extends the notion of OT to an arbitrary sum of weighted Diracs, see for instance~\cite{Villani03}. Although there exist dedicated linear solvers (transportation simplex~\cite{Dantzig-Book}) and combinatorial algorithms (such as the Hungarian~\cite{Kuhn-hungarian} and auction algorithms~\cite{Bertsekas1988}), computing OT is still a challenging task for densities composed of thousands of Dirac masses. 

\paragraph{Optimal transport distance}

The OT distance (also known as the Wasserstein distance or the Earth Mover distance) has been shown to produce state of the art results for the comparison of statistical descriptors, see for instance~\cite{Rubner98}. Image retrieval performance as well as computational time are both greatly improved by using non-convex cost functions, see~\cite{Pele-ICCV}. 



\paragraph{Optimal transport map}

Another line of applications of OT makes use of the transport plan to warp an input density onto another. OT is strongly connected to fluid dynamic partial differential equations~\cite{Benamou00}. These connections have been used to perform image registration~\cite{haker-ijcv}.  The estimation of the transport plan is also an interesting way of tackling  the challenging problem of color transfer between images, see for instance~\cite{Reinhard01,Morovic03,McCollum07}. For grayscale images, the usual histogram equalization algorithm corresponds to the application of the 1-D OT plan to an image, see for instance~\cite{Delon04}. It thus makes sense to consider the 3-D OT as a mathematically-sound way to perform color palette transfer, see for instance~\cite{Pitie07} for an approximate transport method. When doing so, it is important to cope with variations in the modes of the color palette across images, which makes the mass conservation constraint of OT problematic. A workaround is to consider parametric densities such as Gaussian mixtures and defines ad-hoc matching between the components of the mixture, see~\cite{Tai-cvpr-colortransfer}. In our work, we tackle this issue by defining a novel notion of OT well adapted to colors manipulation. 

\paragraph{Optimal transport barycenter}

It is natural to extend the classical barycenter of points to barycenter of densities by minimizing a weighted sum of OT distances toward a family of input distributions. In the special case of two input distributions, this corresponds to the celebrated displacement interpolation defined by McCann~\cite{mccann1997convexity}. Existence and uniqueness of such a barycenter is proved by Agueh and Carlier~\cite{Carlier_wasserstein_barycenter}, which also show the equivalence with the multi-marginal transportation problem introduced by Gangbo and {\'S}wi\c{e}ch~\cite{gangbo1998optimal}.  Displacement interpolation (i.e. barycenter between a pair of distributions) is used by Bonneel et al.~\cite{Bonneel-displacement} for computer graphics applications.  Rabin et al.~\cite{Rabin_ssvm11} apply this OT barycenter for texture synthesis and mixing. The image mixing is achieved by computing OT barycenters of empirical distributions of wavelet coefficients.  A similar approach is proposed by Ferradans et al.~\cite{2013-ssvm-mixing} for static and dynamic texture mixing using Gaussian distributions. \\

\subsection{Regularized and relaxed transport}
\label{subsec-regul-intro}


\paragraph{Removing transport artifacts}

The OT map between complicated densities is usually irregular. Using directly this transport plan to perform color transfer creates artifacts and amplifies the noise in flat areas of the image. Since the transfer is computed over the 3-D color space, it does not take into account the pixel-domain regularity of the image. The visual quality of the transfer is thus improved by denoising the resulting transport using a pixel-domain regularization either as a post-processing~\cite{Papadakis_ip11} or by solving a variational problem~\cite{Papadakis_ip11,Rabin_icip11}.


\paragraph{Transport regularization}

A more theoretically grounded way to tackle the problem of colorization artifacts should use directly a regularized OT. This corresponds to adding a regularization penalty to the OT energy. This however leads to difficult non-convex variational problems, that have not yet been solved in a satisfying manner either theoretically or numerically. The only theoretical contribution we are aware of is the recent work of Louet and Santambrogio~\cite{louet-regularizaton-1d}. They show that in 1-D the (un-regularized) OT is also the solution of the Sobolev regularized transport problem.
\paragraph{Graph regularization and matching}

For imaging applications, we use regularizations built on top of a graph structure connecting neighboring points in the input density. This follows ideas introduced in manifold learning~\cite{isomap}, that have been applied to various image processing problems, see for instance~\cite{elmoataz-graph}. Using graphs enables us to design regularizations that are adapted to the geometry of the input density, that often has a manifold-like structure. 

This idea of graph-based regularization of OT can be interpreted as a soft version of the graph matching problem, which is at the heart of many computer vision tasks, see~\cite{Belongie-graph-match,Yefeng-graph-match}. Graph matching is a quadratic assignment problem, known to be NP-hard to solve.  Similarly to our regularized OT formulation, several convex approximations have been proposed, including for instance linear programming~\cite{Almohamad-graph-match} and SDP programming~\cite{schellewald-ivc}. 

\paragraph{Transport relaxation}

The result of Louet and Santambrogio~\cite{louet-regularizaton-1d} is deceiving from the applications point of view, since it shows that, in 1-D, no regularization is possible if one  maintains a 1:1 assignment between the two densities. This is our first motivation for introducing a relaxed transport which is not a bijection between the densities.  Another (more practical) motivation is that relaxation is crucial to solve imaging problems such as color transfer. Indeed, the color distributions of natural images are multi-modals. An ideal color transfer should match the modes together. This cannot be achieved by classical OT because these modes often do not have the same mass. A typical example is for two images with strong foreground and background dominant colors (thus having bi-modal densities) but where the proportion of pixels in foreground and background are not the same. Such simple examples cannot be handled properly with OT. Allowing a controlled variation of the matched densities thus requires an appropriate relaxation of the mass conservation constraint. Mass conservation relaxation is related to the relaxation of the bijectivity constraint in graph matching, for which a convex formulation is proposed in~\cite{Zaslavskiy-graph-match}.

\subsection{Contributions}

In this paper, we generalize the discrete formulation of OT to tackle the two major flaws that we just mentioned: i) the lack of regularity of the transport and ii) the need for a relaxed matching between densities. Our main contribution is the integration of these two properties in a unified variational formulation to compute a regular transport map between two empirical densities. The corresponding optimization problem is convex and can be solved using standard convex optimization procedures. We propose two optimization algorithms adapted to the different class of regularizations. We apply this framework to the color transfer problem and obtain results that are comparable to the state of the art. 
Our second contribution takes advantage of the proposed regularized OT energy to compute the barycenter of several empirical densities. 
We develop a block-coordinate descent method that converges to a stationary point of the non-convex barycenter energy. We show an application
 to color normalization between a set of photographs. Numerical results show the relevance of these approaches to imaging problems. The matlab code to reproduce the figures of this article is available online\footnote{\url{https://www.ceremade.dauphine.fr/~sira/regularizeddiscreteOT}}.

Part of this work was presented at the conference SSVM 2013~\cite{2013-ssvm-regul-ot}.

\section{Discrete Optimal Transport}

Monge's original formulation of the OT problem corresponds to minimizing the cost for transporting a distribution $\mu_X$ onto another distribution $\mu_Y$ using a map $T$
\eql{ 
	\min_{T} \int_X c(x,T(x)) \d \mu_X(x),   
	\qwhereq
	 T\#\mu_X=\mu_Y.
}
Here, $\mu_X,\mu_Y$ are measures in $\RR^d$, $T: \RR^d \rightarrow \RR^d$ is a $\mu_X$-measurable function, $c : \RR^d \times \RR^d \rightarrow \RR^+$ is a $\mu_X \otimes \mu_Y$-measurable function, and $\#$ is the push forward operator. 

We focus here on the case where the measures are discrete, have the same number of points, and all points have the same mass, thus 
\eq{
	\mu_X = \frac{1}{N} \sum_{i=1}^{N} \delta_{X_i} 
	\qandq
	\mu_Y = \frac{1}{N} \sum_{j=1}^{N} \delta_{Y_j},
}
where $\delta_x$ is the Dirac measure at location $x \in \RR^d$, and where the position of the supporting points are $X = (X_i)_{i=1}^N$, and $Y = (Y_j)_{j=1}^N$, where  $X_i,Y_j \in \RR^d$. In this context, the transport between $X$ and $Y$ is a one-to-one assignment,  i.e. 
$T(X_i) = Y_{\sigma(i)}$ where $\sigma$ is a permutation of $\{1,\ldots,N\}$, which can be encoded using a permutation matrix $\Sigma$ such that
\eq{ \Sigma_{i,j} = 
 	\choice{ 
		1 \qifq j=\sigma(i), \\ 
		0 \quad \text{otherwise}.  
	}
} 
A more compact way to denote the transport is $T(X_i)=\left(\Sig Y\right)_i, \forall i=\{1,\ldots,N\}$. Introducing the cost matrix
\eq{
	C_{X,Y} \in \RR^{N \times N}
	\qwhereq
	\foralls (i,j) \in \{1,\ldots,N\}^2, \quad (C_{X,Y})_{i,j} = c(X_i,Y_j),
}
this permutation matrix $\Sig$ is thus the solution to the following optimization problem
\eql{
	\umin{\Sig \in \Perm}  
		\dotp{C_{X,Y}}{\Sigma} = \sum_{i,j=1}^N c(X_i,Y_j)  \Sig_{i,j},
\label{eqW}} 
where $\Perm$ is the set of permutation matrices  
\eq{
	\Perm = \enscond{\Sig \in \RR^{N \times N}}{ \Sig^* \U = \U, \Sig \U = \U, \Sig_{i,j} \in \{0,1\}},
} 
see~\cite{Villani03} for more details. We have denoted $\U=(1,\ldots,1)^* \in  \RR^N$, and $A^*$ as the adjoint of the matrix $A$, that for real matrices amounts to the transpose operation.

In the special case where 
\eq{
	\left(C_{X,Y}\right)_{i,j} = c(X_i,Y_j)=\norm{X_i-Y_j}^{\al}
} where $\norm{\cdot}$ is the Euclidean norm in $\RR^d$ and $\al \ge 1$,  the value of the optimization problem~\eqref{eqW} is called the $L^\alpha$-Wasserstein distance (to the power $\alpha$), and is denoted $W_\alpha(\mu_X,\mu_Y)^\alpha$. It can be shown that $W_\alpha$ defines a distance on the set of distributions that have moments of order $\al$.


\paragraph{ Kantorovich OT formulation}

The set of permutation matrices $\Perm$ is not convex.  Its convex hull is the set of bi-stochastic matrices
\eq{
	\Matr_1 = \enscond{\Sig \in \RR^{N \times N}}{ \Sig \U = \U, \Sig^* \U = \U, \Sig_{i,j} \in [0,1]}.
}
One can show that the relaxation
\eql{\label{eqMK}
	\min_{\Sig \in \Matr_1}  \dotp{C_{X,Y}}{\Sigma}
} 
of~\eqref{eqW} is tight, meaning that there exists a solution of~\eqref{eqMK} which is a binary matrix, hence being also a solution of the original non-convex problem~\eqref{eqW}, see~\cite{Villani03}.

\section{Relaxed and Regularized Transport}

In the previous section, we introduced the Monge-Kantorovich formulation for the computation of the OT between two distributions, as the minimization of the energy~\eqref{eqMK}. In this section, we modify this energy in order to obtain a regular OT mapping, which is important for applications such as color transfer. 

\subsection{Relaxed Transport}

Section~\ref{sec-appli-color} tackles the color transfer problem, where, as in many applications in imaging, strict mass conservation should be avoided.  As a consequence, it is not desirable to impose a one-to-one mapping between the points in $X$ and $Y$. 

The relaxation we propose allows each point of $X$ to be transported to multiple points of $Y$ and vice versa. This corresponds to imposing the constraints 
\eq{
	k_X \U \leq \Sig \U \leq K_X \U
	\qandq 
	k_Y \U \leq \Sigma^* \U \leq K_Y \U
} 
on the matrix $\Sig$, where 
$\kappa=(k_X,K_X,k_Y,K_Y) \in (\RR^+)^4$
are the parameters of the method. 
To impose the total amount of mass $M$ transported between the densities, we further impose the constraint $\U^* \Sig\U=M$, where $M>0$ is a parameter.
The initial OT problem~\eqref{eqMK} now becomes:
\eql{\label{eq-relax-map}
	\umin{\Sigma \in \Matr_\kappa} \dotp{C_{X,Y}}{\Sig}
}  
\eq{
	\qwhereq
	\Matr_\kappa = \enscond{\Sig \in [0,1]^{N \times N}}{ 
		\begin{array}{ll}
			k_X \U \leq \Sigma \U \leq K_X \U, \\
			k_Y \U \leq \Sigma^* \U \leq K_Y \U, 
		\end{array} \: 
		\U^* \Sig\U=M
		 }.
}
To ensure that $\Matr_\kappa$ is non empty, we impose that
\eq{ 
 	\max(k_X,k_Y) \leq \frac{M}{N} \leq \min(K_X,K_Y)
}
For the application to the color manipulations considered in this paper, we set once and for all this parameter to $M=N$. 

Note that if $\min(K_X,K_Y) \geq N$, there is no restriction on the number of connections of each element of $X$ or $Y$, then the optimal solution increases (always under the constraints $\U^*\Sig\U=M$) the weight given to the connection between the closest points in $X$ to the closest points in $Y$, that is to say, the minima $(C_{X,Y})_{i,j}$ are assigned the maximum possible weight, see Fig.~\ref{im:relaxationk} for an example.  

  
Problem~\eqref{eq-relax-map} is  a convex linear program, which can be solved using standard linear programming algorithms. 

\paragraph{Relaxed OT map} Optimal matrices $\Sig$ minimizing~\eqref{eq-relax-map} are in general non binary and furthermore their non zero entries do not define one-to-one maps between the points of $X$ and $Y$. It is however possible to define a map $T$ from $X$ to $Y$ by mapping each point $X_i$ to a weighted barycenter of its neighbors in $Y$ as defined by $\Sig$. This corresponds to defining
\eq{\label{eqT}
T(X_i) = \frac{\sum_{j=1}^N \Sig_{i,j} Y_j }{\sum_{j=1}^N \Sig_{i,j}}\\
} which in vectorial form can be expressed as $T(X_i)= Z_i $, where $Z=(\diag(\Sig \U))^{-1} \Sig Y$, and where the operator $\diag(v)$ creates a diagonal matrix in $\RR^{N \times N}$ with the vector $v \in \RR^N$ on the diagonal.
To insure that the map is well defined, we impose that $k_X > 0$.  Note that it is possible to define a map from $Y$ to $X$ by replacing $\Sig$ by $\Sig^*$ in the previous formula and exchanging the roles of $X$ and $Y$.
 
The following proposition shows that an optimal $\Sig$ is binary when the parameters $\kappa$ are integers. Such a binary $\Sig$ can be interpreted as a set of pairwise assignments between the points in $X$ and $Y$. Note that this is not true in general when the parameters $\kappa$ are not integers.

\begin{prop}\label{prop}
	For $(k_X,K_X,k_Y,K_Y,M) \in (\NN^*)^5$, there exists a solution  $\tilde{\Sig}$ of~\eqref{eq-relax-map} which is binary, i.e. $\tilde \Sigma \in \{0,1\}^{N \times N}$.
	\if 0
		which is solution of
	\eql{\label{eq-relaxed-transport}
		\umin{\Sig \in \Bb_{\kappa}} \dotp{C_{X,Y}}{\Sig},
	}
	\eq{
		\qwhereq
		\Bb_{\kappa}= \enscond{\Sig \in \{0,1\}^{N \times N}}{
		\begin{array}{ll}
			k_X \U \leq \Sigma \U \leq K_X \U, \\
			k_Y \U \leq \Sigma^* \U \leq K_Y \U, 
		\end{array} \: 
		\U^* \Sig\U=M
		}
	}
	that is to say, $\tilde{\Sig}$ is a binary matrix.
	\fi
\end{prop}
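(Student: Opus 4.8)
The plan is to exploit the fact that \eqref{eq-relax-map} is a linear program over the nonempty compact polytope $\Matr_\kappa$, so that its minimum is attained at a vertex (extreme point) of $\Matr_\kappa$. It therefore suffices to prove that, when $\kappa$ and $M$ are integers, every vertex of $\Matr_\kappa$ is a binary matrix; the desired $\tilde\Sig$ is then any vertex at which the optimum is reached. Boundedness and the existence of a vertex optimum are routine, since $\Matr_\kappa \subset [0,1]^{N \times N}$ and we have already assumed $\Matr_\kappa \neq \emptyset$.

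To prove integrality of the vertices I would reinterpret the constraints defining $\Matr_\kappa$ as the conservation and capacity constraints of a min-cost flow problem on a directed bipartite network. Introduce a source $s$ and a sink $t$; for each $X_i$ an arc $s \to x_i$ with capacity interval $[k_X,K_X]$; for each pair $(i,j)$ an arc $x_i \to y_j$ with capacity interval $[0,1]$ carrying the flow $\Sig_{i,j}$; for each $Y_j$ an arc $y_j \to t$ with capacity interval $[k_Y,K_Y]$; and finally a return arc $t \to s$ whose flow is pinned to the value $M$. Flow conservation at $x_i$ then reads $(\Sig \U)_i \in [k_X,K_X]$, conservation at $y_j$ reads $(\Sig^* \U)_j \in [k_Y,K_Y]$, and the return arc enforces $\U^* \Sig \U = M$, so feasible integral flows are in bijection with integral matrices of $\Matr_\kappa$. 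The cost $c(X_i,Y_j)$ is placed on the arc $x_i \to y_j$ and all other arcs carry zero cost, which recovers exactly $\dotp{C_{X,Y}}{\Sig}$.

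The key structural fact is that the node-arc incidence matrix of any directed graph is totally unimodular. Since all the lower and upper capacities, as well as the imposed flow value $M$, are integers by hypothesis, the integrality theorem for network flows guarantees the existence of an integral optimal flow. Reading off the flow on the arcs $x_i \to y_j$ produces an optimal matrix $\tilde\Sig$ with integer entries; combined with the box constraint $0 \le \tilde\Sig_{i,j} \le 1$ this forces $\tilde\Sig_{i,j} \in \{0,1\}$, i.e. $\tilde\Sig$ is binary, as claimed.

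The main obstacle, and the step demanding the most care, is the total-mass equality $\U^* \Sig \U = M$. Unlike the row- and column-sum inequalities, which directly express the incidence structure of the bipartite graph, this global constraint is a linear combination of the others and could a priori destroy total unimodularity if appended naively to the constraint matrix. Encoding it instead as the capacity of a single return arc $t \to s$ keeps the whole system within the network-flow framework, so that total unimodularity, and hence vertex integrality, is preserved; verifying that this encoding faithfully reproduces $\Matr_\kappa$ (and in particular that the equality, not merely an inequality, is captured) is the crux of the argument. An alternative that avoids the network picture altogether would be to check total unimodularity of the augmented constraint matrix directly, but the flow reformulation makes the integrality essentially automatic.
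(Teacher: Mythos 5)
Your proof is correct, and it reaches the conclusion by the same overall strategy as the paper --- a linear program over a nonempty bounded polytope attains its minimum at a vertex, and total unimodularity plus an integral right-hand side forces the vertices to be integral --- but you execute the key step differently. The paper works directly with the constraint system in the variables $\Sig_{i,j}$, asserting (with a citation to Schrijver's book) that the stacked matrix of sign, row-sum, column-sum, and total-mass constraints is totally unimodular. You instead embed $\Matr_\kappa$ into a min-cost circulation problem: bipartite arcs $x_i \to y_j$ carrying $\Sig_{i,j}$, source and sink arcs with integer capacity intervals $[k_X,K_X]$ and $[k_Y,K_Y]$, and a return arc $t \to s$ pinned to $M$, and then invoke total unimodularity of node-arc incidence matrices together with the flow integrality theorem. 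The difference matters most exactly where you say it does: the global constraint $\U^* \Sig \U = M$ is not a row of a bipartite incidence matrix, so the paper's appeal to a ``standard result'' implicitly requires checking that appending the all-ones row (and its negation) to the incidence matrix preserves total unimodularity --- true, e.g.\ by the Ghouila--Houri criterion, signing the all-ones row $+$ and all selected row-sum and column-sum rows $-$, but not spelled out in the paper. Your return-arc encoding makes this point moot, since the equality becomes an ordinary integer capacity on one extra arc and integrality is then automatic; the price is the introduction of auxiliary flow variables and the (easy but necessary) verification that the circulation polytope projects bijectively onto $\Matr_\kappa$ with matching costs, which you do address. In short, your argument is more self-contained and more careful precisely where the paper is terse, while the paper's proof is shorter because it treats the total unimodularity of the augmented system as a black box.
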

\begin{proof}
	One can write 
	$\Matr_\kappa = \enscond{ \Sig \in \RR^{N \times N} }{ \Aa(\Sig) \leq b_\kappa }$
	where $\Aa$ is the linear mapping
	$\Aa(\Sig)=(-\Sig,\Sig \U,-\Sig \U,\Sig^* \U,-\Sig^* \U,\U\Sig\U,-\U\Sig\U)$, where $\Sig^* \U, \Sig\U \in \RR^{N}$ and $\U\Sig\U \in \RR$, 	and $b_\kappa = (0_{N,N}, K_X\U, -k_X\U, K_Y\U, -k_Y\U, M, -M)$. A standard result shows that $\Aa$ is a totally unimodular matrix~\cite{schrijver-book}. For any $(k_X,K_X,k_Y,K_Y,M) \in (\NN^*)^5$, the vector $b_\kappa$ has integer coefficients, and thus the polytope $\Matr_\kappa$ has integer vertices. Since there is always a solution of the linear program~\eqref{eq-relax-map} which is a vertex of $\Matr_\kappa$, it has coefficients in $\{0,1\}$.
\end{proof}

\begin{figure}
\centering
\begin{tabular}{@{}|@{}c@{}|@{}c@{}|@{}c@{}|}
\hline
\includegraphics[width=.33\linewidth]{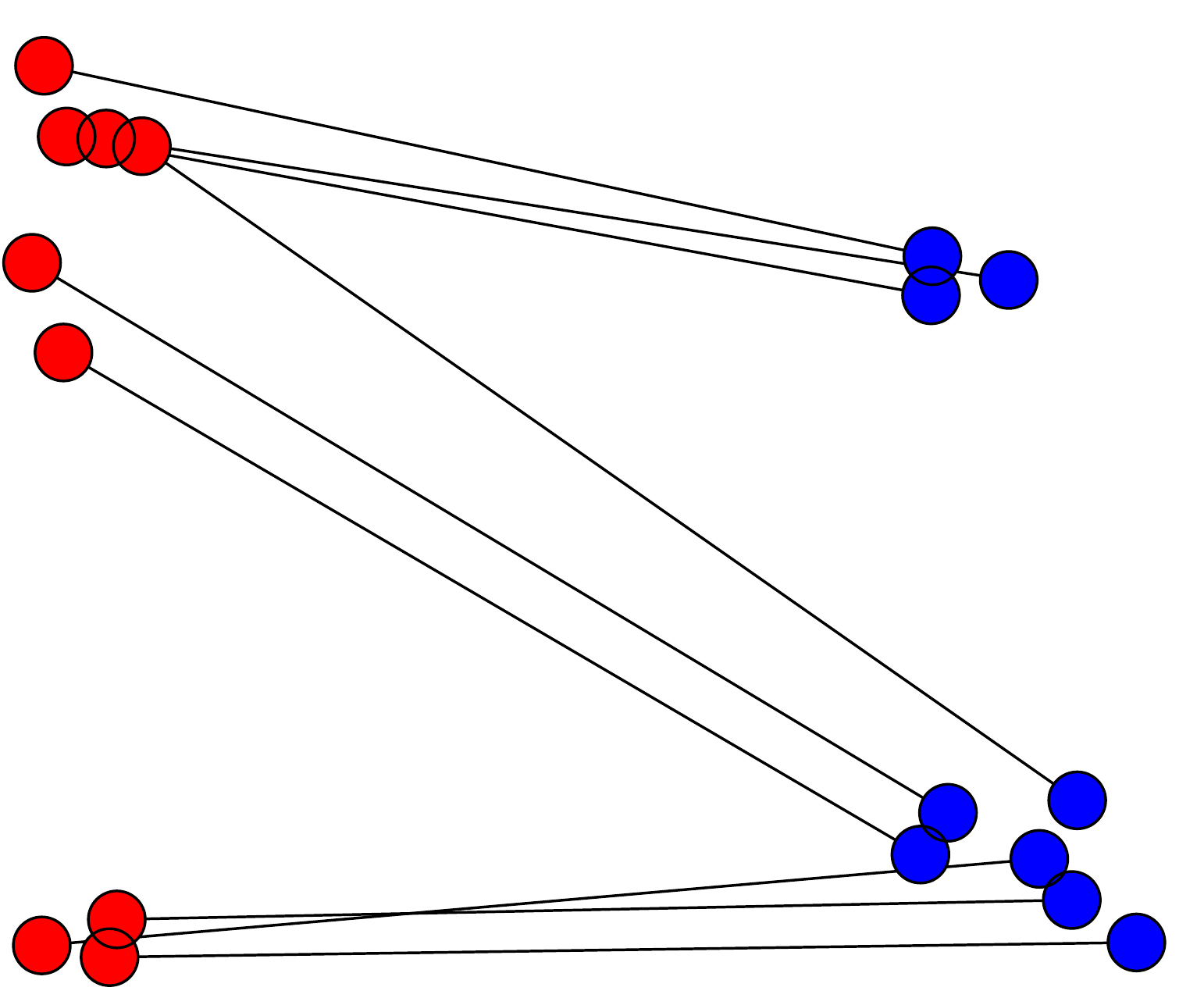} & 
\includegraphics[width=.33\linewidth]{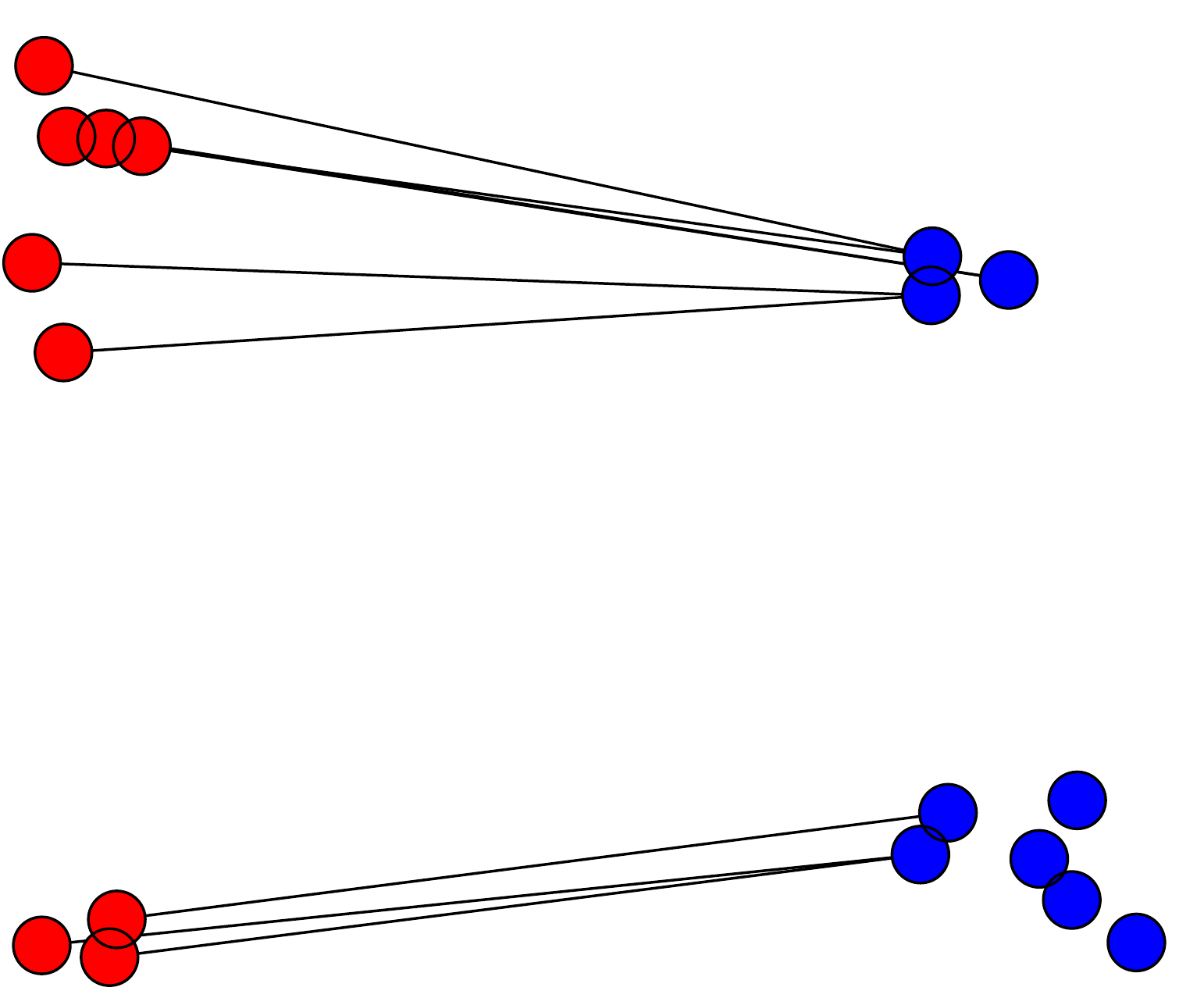} &
\includegraphics[width=.33\linewidth]{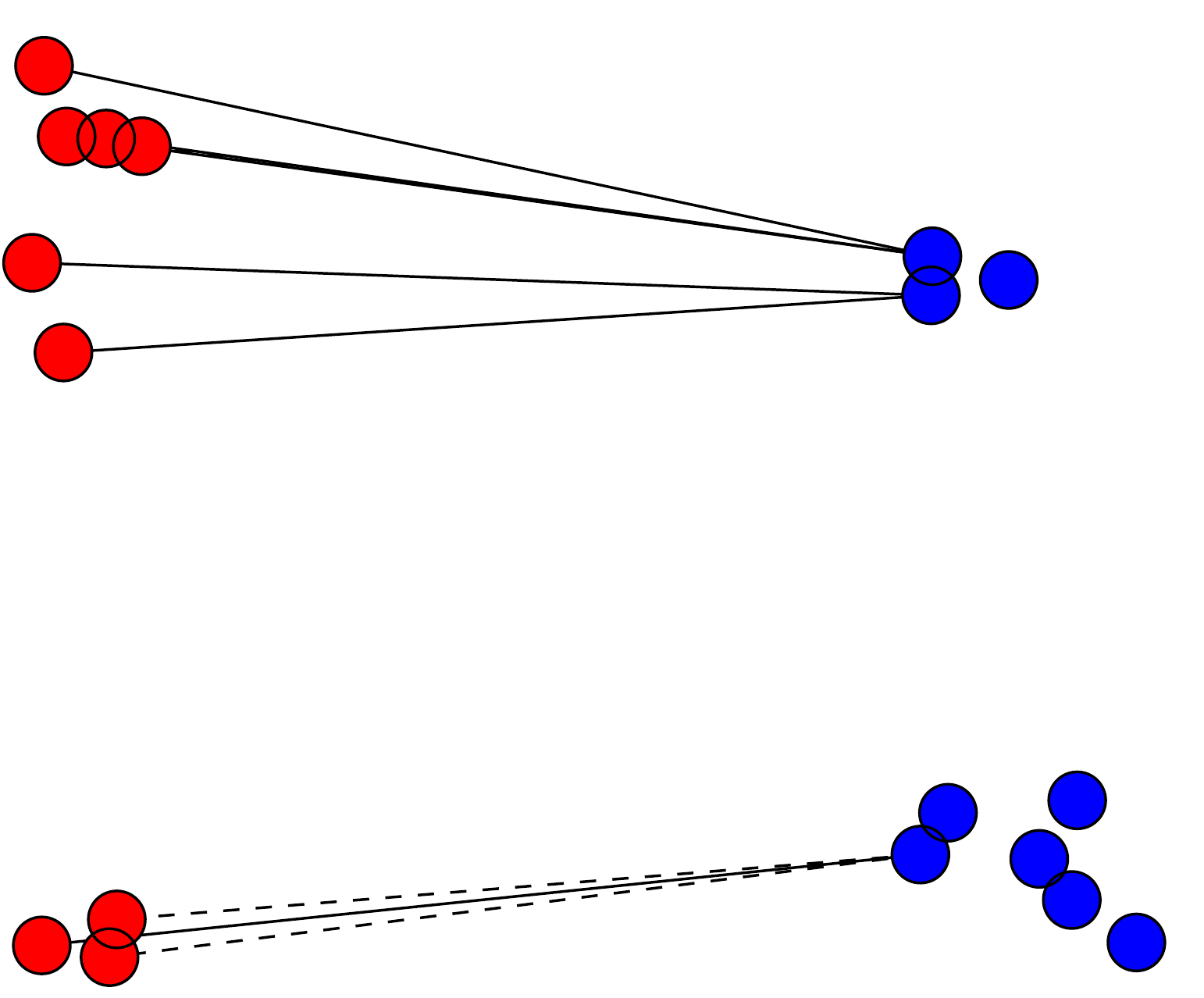} \\ 
$\kappa=(1,1,1,1)$ & $\kappa=(1,1,0,2)$ & $\kappa=(1,1,0.1,10)$ \\\hline 
\includegraphics[width=.33\linewidth]{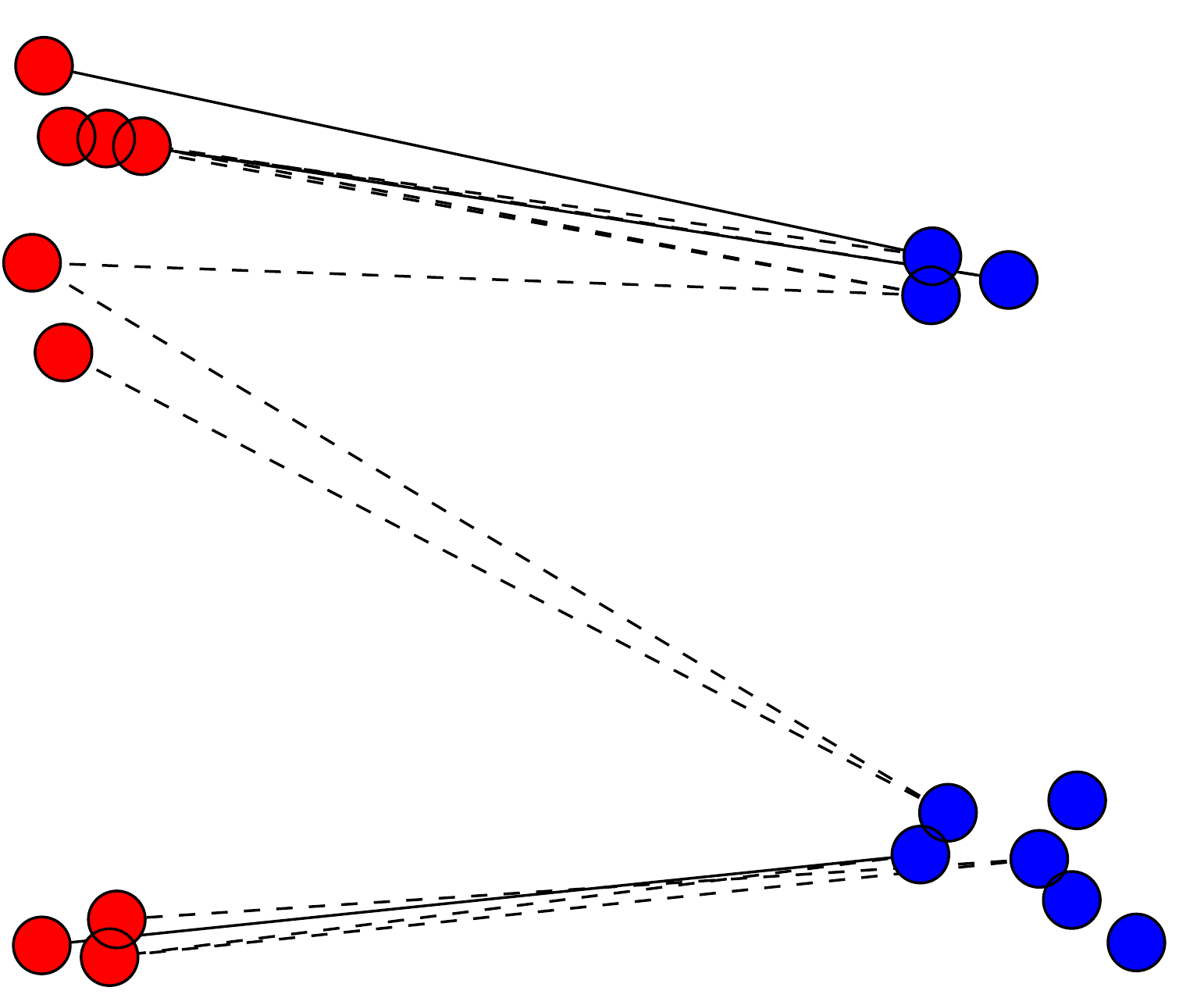} & 
\includegraphics[width=.33\linewidth]{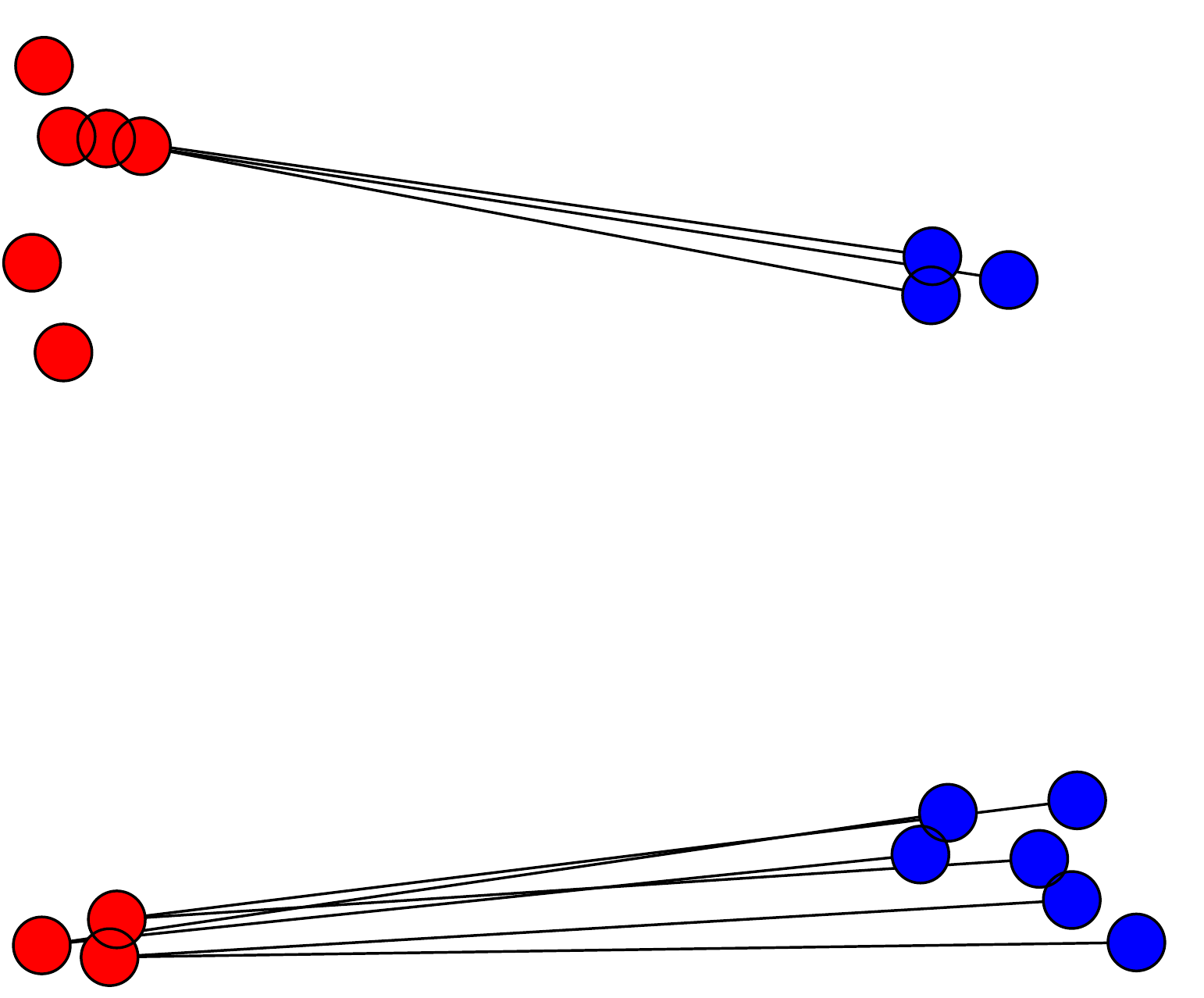} &
\includegraphics[width=.33\linewidth]{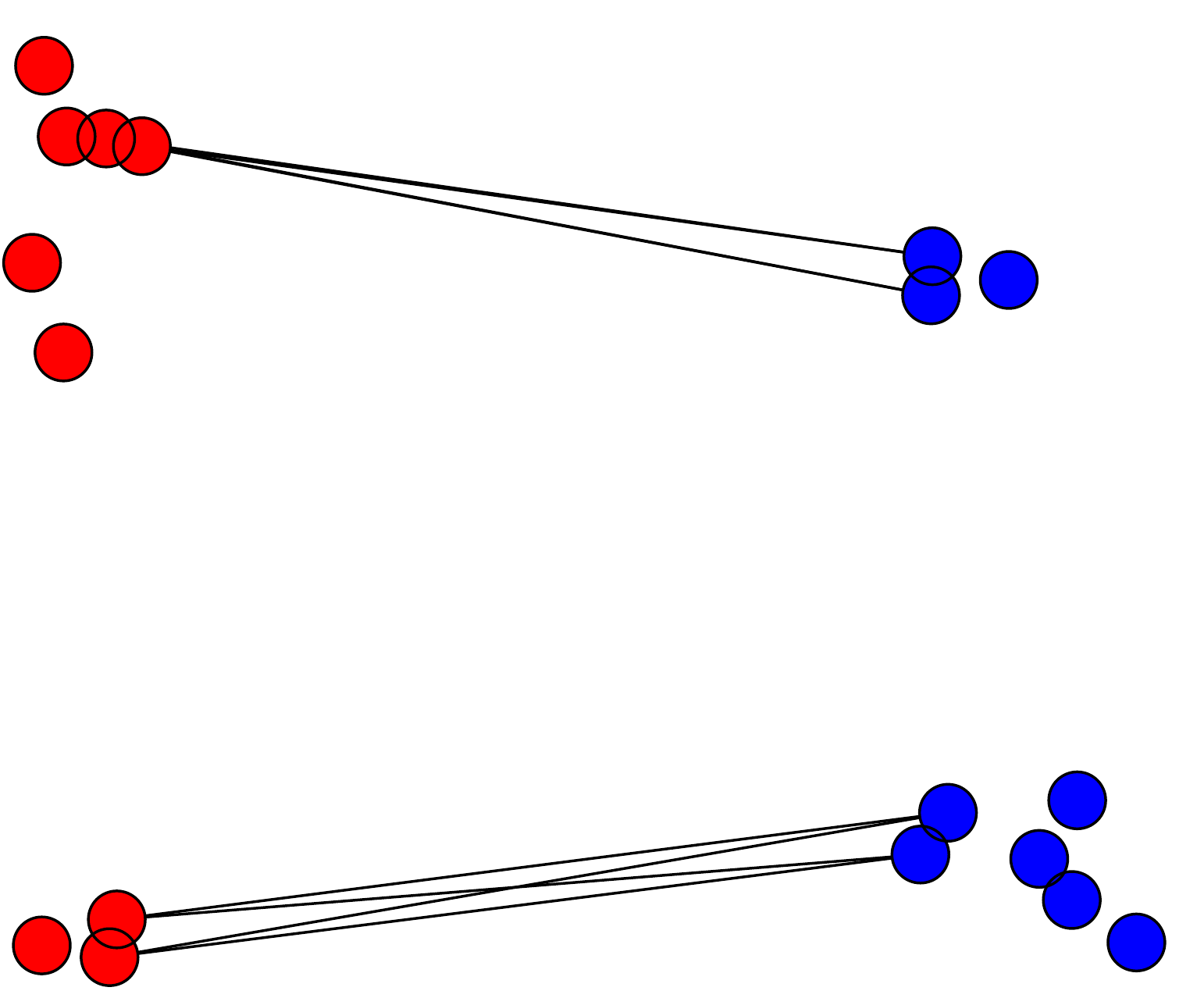}  \\
  $\kappa=(1,1,0.1,1.5)$ & $\kappa=(0,2,1,1)$  & $\kappa=(0.1,10,0.1,10)$ \\\hline 
\end{tabular}
\caption{\label{im:relaxationk}Relaxed transport computed between $X$ (blue dots) and $Y$ (red dots) for different values of $\kappa$.
Note that $\kappa=(1,1,1,1)$ corresponds to classical OT.
A dashed line between $X_i$ and $Y_j$ indicates that $\Sigma_{i,j}$ is not an integer. }
\end{figure}

\paragraph{Numerical Illustrations}

In Fig.~\ref{im:relaxationk}, we show a simple example to illustrate the properties of the method proposed so far. Given a set of points $X$ (in blue) and $Y$ (in red), we compute the optimal $\Sig$ solving~\eqref{eq-relax-map} for different values of $\kappa$. For each values of $\kappa$, we draw a line between $X_i$ and $Y_j$ if the value of the associated optimal $\Sigma_{i,j} > 0.1$, solid if $\Sigma_{i,j}=1$, and dashed otherwise.

As we prove in the Proposition~\ref{prop}, for non integer values of $K_X,K_Y$, the mappings $\Sigma_{i,j}$ are in $[0,1]$ while for integer values, $\Sigma_{i,j} \in \{0,1\}$. Note that as we increase the values of $K_X,K_Y$ (Fig.~\ref{im:relaxationk}, right), the points in $X$ tend to be mapped to the closer points in $Y$.


\subsection{Discrete Regularized Transport} \label{sec:regsymme}

So far, we have introduced a transport problem where the mass conservation constraint is relaxed. The second step is to define its regularization. A classic way of imposing regularity on a mapping $V: \RR^d \rightarrow \RR^d$ is by measuring the amplitude of its derivatives. Two examples for continuous functions are the quadratic Tikhonov regularizations such as the Sobolev semi-norm $\|\nabla V\|^2$, and the anisotropic total variation semi-norm $\|\nabla V\|_1$ regularization. Nevertheless, the differential operator $\nabla$  cannot be applied directly to our point clouds due to the lack of neighborhood definition. To extend the definition of the gradient operator, we need to impose graph structures on the point clouds.

 In our setting, we want to regularize the discrete map $T$ defined in~\eqref{eqT}, which is only defined at the location of the points as $X_i \mapsto \tilde{V}_i=X_i - \diag(\Sig \U)^{-1} (\Sig Y)_i $.  To avoid the normalization $\diag(\Sig \U)$ (which typically leads to non-convex optimization problems), and further regularize the variation of the weights $\Sig \U \in \RR^N$, we impose a regularity on the map $X_i \mapsto V_i = \diag(\Sig \U)X_i-(\Sigma Y)_i$.

\paragraph{Gradient on Graphs}

A natural way to define a gradient on a point cloud $X$ is by using  the gradient on a weighted graph $\Gg_X = (X,E_X,W_X)$ where $E_X \subset \{1,\ldots,N\}^2$ is the set of edges and $W_X$ is the set of weights, $W_X = (w_{i,j})_{i,j=1}^N:  \{1,\ldots,N\}^2 \mapsto \RR^+$, satisfying $w_{i,j}=0$ if $(i,j) \notin E_X$. The edges of this graph are defined depending on the application. A typical example is the $n$-nearest neighbor graph, where every vertex $X_i$ is connected to $X_j$ if $X_j$ is one of the $n$-closest points to $X_i$ in $X$, creating the edge $(i,j) \in E_X$, with a weight $w_{i,j}$. Because the edges are directed, the adjacency matrix is not symmetric. 


The gradient operator on $\Gg_X$ is defined as $G_X : \RR^{N \times d} \rightarrow \RR^{P \times d}$, where $P=\|E_X\|$ is the number of edges and where, for each $V=(V_i)_{i=1}^N \in \RR^d$, 
\eq{
	G_X V = \pa{ w_{i,j}(V_i-V_j) }_{(i,j) \in E_X} \in \RR^{P \times d}.
}
A classic choice for the weights to ensure consistency with the directional derivative is $w_{i,j} = \norm{X_i-X_j}^{-1}$, see for instance~\cite{guilboa07}.

\paragraph{Regularity Term}

The regularity of a transport map $V \in \RR^{N \times d}$ is then measured according to some norm of $G_X V$, that we choose here for simplicity to be the following
\eq{
	\regul{G_X  V} = \sum_{(i,j) \in \Gg_x} \left(\norm{ w_{i,j} (V_i-V_j) }_q \right)^p,
}
where $\|.\|_q$ is the $\ell^q$ norm in $R^d$. 

The case $(p,q)=(1,1)$ is the graph anisotropic total variation, $(p,q)=(2,2)$ is the graph Sobolev semi-norm, and $(p,q)=(1,2)$ is the graph isotropic total variation, see for instance~\cite{elmoataz-graph} for applications of these functionals to imaging problem such as image segmentation and regularization. 

\subsection{Symmetric Regular OT Formulation}

Given two point clouds X and Y, our goal is to compute a relaxed OT mapping between them which is regular with respect to both point clouds. To simplify notation, we conveniently re-write the displacement fields we aim to regularize as:
\eq{
	\De_{X,Y}(\Sig) = \diag( \Sigma \U ) X - \Sigma Y
	\qandq
	\De_{Y,X}(\Sig^*) = \diag( \Sigma^* \U ) Y -  \Sigma^* X.
}

Our goal is to obtain a partial matching that is regular according to $X$ and $Y$, so we create two graphs $\Gg_X$ and $\Gg_Y$ as described in Section~\ref{sec:regsymme} and we denote the corresponding gradient operators $G_X \in \RR^{P_X \times N}$ and $G_Y \in \RR^{P_Y \times N}$ where $P_X$ and $P_Y$ are the number of edges in the respective graphs. The symmetric regularized discrete OT energy is defined as: 
\eql{\label{eq-symm-reg-energy}
\umin{\Sig \in \Matr_\kappa} \dotp{\Sig}{\Cost{X}{Y}} + \la_X \regul{G_X \De_{X,Y}(\Sig)} + \la_Y \regul{G_Y \De_{Y,X}(\Sig^*)},
}
where $(\la_X,\la_Y) \in (\RR^+)^2$ controls the desired amount of regularity.
The case $\kappa=(1,1,1,1)$ and $(\la_X,\la_Y)=(0,0)$ corresponds to the usual OT defined in~\eqref{eqMK}, and $(\la_X,\la_Y)=(0,0)$ corresponds to the un-regularized formulation~\eqref{eq-relax-map}.



\subsection{Algorithms} \label{secalgosymm}

Specific values of the parameters $p$ and $q$ lead to different regularization terms, which in turn necessitate different optimization methods. In the following, for the sake of concreteness, we concentrate on the specific cases $(p,q)=(2,2)$ and $(p,q)=(1,1)$.

\paragraph{Sobolev regularization}

Defining $q=p=2$ fixes the regularization term as a graph-based Sobolev regularization. In this specific case, the minimization~\eqref{eq-symm-reg-energy} becomes a quadratic programming problem 
\begin{equation} \label{eq-symm-S}
\underset{\Sig \in \Matr_\kappa}{\text{min}} f(\Sig) =  \dotp{ \Cost{X}{Y}}{\Sig } + \frac{\la_X}{2} \|\Gamma_{X,Y}(\Sig)\|^2 + \frac{\la_Y}{2} \|\Gamma_{Y,X}(\Sig) \|^2\\,
\end{equation} where $\Gamma_{X,Y}(\Sig) = G_X \De_{X,Y}(\Sig)$ and $\Gamma_{Y,X}(\Sig) = G_Y \De_{Y,X}(\Sig^*)$. The Frank-Wolfe algorithm is well tailored to solve such problems, as noticed for instance in~\cite{Zaslavskiy09}, given that $f$ is convex and differentiable, and $\Matr_\kappa$ is a convex set. 
The Frank-Wolfe method (also known as conditional gradient) iterates the following steps until convergence
\begin{equation}
\begin{aligned}\label{eq-frankwolfe-update}
\tilde\Sig^{(\ell+1)} &\in \uargmin{\tilde\Sig \in \Matr_\kappa} \dotp{\nabla f(\Sig^{(\ell)})}{ \tilde\Sig } \\
\Sig^{(\ell+1)} &= \Sig^{(\ell+1)} + \tau_\ell ( \tilde\Sig^{(\ell+1)}-\Sig^{(\ell+1)} ),\\
\end{aligned}
\end{equation}
where $\tau_\ell$ is obtained by line-search. The first equation of~\eqref{eq-frankwolfe-update} is a linear program which is efficiently solved using interior point methods~\cite{Nesterov-Nemirovsky-Book}.  In our case, one has
\eq{
	\nabla f(\Sig) = \Cost{X}{Y} + \la_X  \De^*_{X,Y}(G_X^*  \Gamma_{X,Y}(\Sig)) + \la_Y \De^*_{Y,X}(G_Y^* \Gamma_{Y,X}(\Sig)), 
} where 
\eq{\De^*_{X,Y}(U)= \diag^*(U X^*)\U^*-U Y^* \mbox{ and } \De^*_{Y,X}(U)= (\diag^*(U Y^*)\U^*)^*-X U^*,}
 where $\diag^*: \RR^{N \times N} \mapsto \RR^{N}$ is the adjoint of the $\diag$ operator, and given $A \in \RR^{N \times N}$, $\diag^*(A)$ is a vector composed by the elements on the diagonal of $A$. 

The line search optimal step can be explicitly computed as
\eq{\tau_{\ell} = \frac{ - \dotp{ E^{(\ell)} }{ \Cost{X}{Y} } - \dotp{  \Gamma_{X,Y}(E^{(\ell)}) }{  \Gamma_{X,Y}(\Sig^{(\ell)}) } - \dotp{  \Gamma_{Y,X}( E^{(\ell)} )  }{  \Gamma_{Y,X} (\Sig^{(\ell)}) }	}{ \la_X \norm{\Gamma_{X,Y}(E^{(\ell)})}^2 +   \la_Y \norm{\Gamma_{Y,X}(E^{(\ell)})}^2}} where $E^{(\ell)} = \Sig^{(\ell+1)} - \tilde\Sig^{(\ell+1)}$.

\paragraph{Anisotropic TV Regularization}

We define an anisotropic total variation (TV) norm by setting the parameters $q=p=1$.
Problem~\eqref{eq-symm-reg-energy} can be re-written as a linear program by introducing the auxiliary variables $U_X \in \RR^{P_X \times d}$ and $U_Y \in \RR^{P_Y \times d}$:

\eql{
\begin{aligned}
& \underset{\Sig, U_X,U_Y}{\text{min}}  & & \dotp{ \Cost{X}{Y}}{\Sig}  + \lambda_X \dotp{U_X}{\U}+ \lambda_Y \dotp{U_Y}{ \U} \\
& \mbox{subject to} & &
\left\{ \begin{aligned}
-U_X  & \leq G_X ( \Sig Y  - \diag(\Sig \U) X)\leq U_X, \\
-U_Y & \leq G_Y( \Sig^* X - \diag(\Sig^* \U) Y) \leq U_Y, \\
\Sig & \in \Matr_\kappa. \\
\end{aligned}
\right.
\end{aligned}\label{eq-symm-TV}
}

\paragraph{Numerical Illustrations}
In Fig.~\ref{exlk}, we can observe, on a synthetic example, the influence of the parameters $\kappa$ and $(\lambda_X, \lambda_Y)$, from equation~\eqref{eq-symm-reg-energy}. 

\begin{figure*}
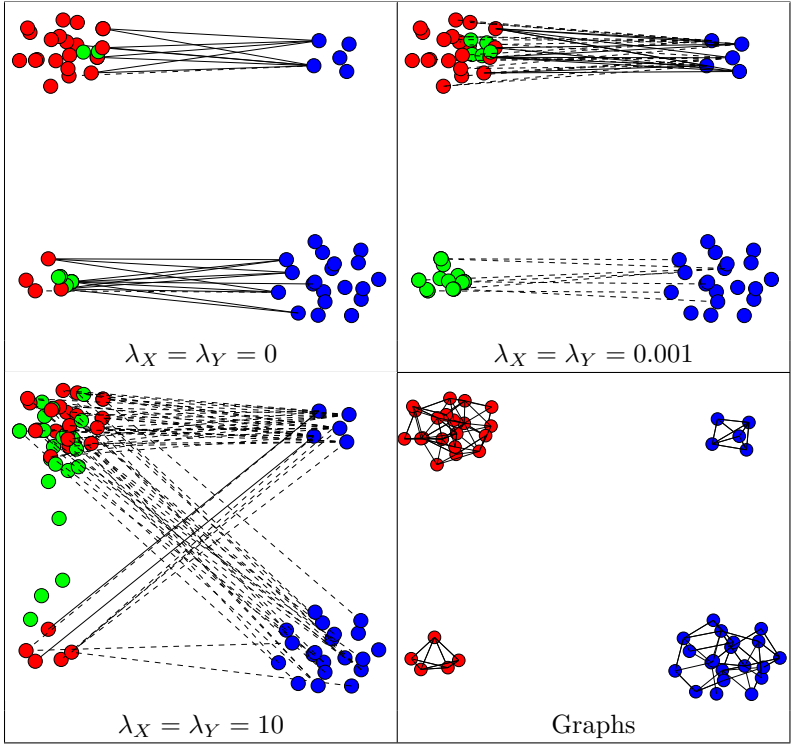
\label{figlambda}
\centering
\begin{tabular}{@{}|@{}c@{}|@{}c@{}|@{}}
\hline
  \includegraphics[width=.4\linewidth]{../images/syntheticexamples/symmetricsyntheticmapping_l0_KX8_KY8_nn4} &  
  \includegraphics[width=.4\linewidth]{../images/syntheticexamples/symmetricsyntheticmapping_l0001_KX8_KY8_nn4} \\
	 {$\la_X=\la_Y=0$} & {$\la_X =\la_Y=0.001$}\\\hline
  \includegraphics[width=.4\linewidth]{../images/syntheticexamples/symmetricsyntheticmapping_l10_KX8_KY8_nn4} &
  \includegraphics[width=.4\linewidth]{../images/graph} \\  
  {$\la_X =\la_Y=10$} & Graphs \\\hline 
\end{tabular}
\caption{Given two sets of points $X$ (in blue) and $Y$ (in red), we show the points $Z=\diag(\Sigma \U)^{-1} \Sigma Y$ (in green), and the mappings $\Sigma_{i,j}$ as line segments connecting $X_i$ and $Y_j$, which are dashed if $\Sigma_{i,j} \in ]0.1,1[$  and solid if $\Sigma_{i,j}=1$. The results were obtained with the relaxed and regularized OT formulation, setting the parameters to $\kappa=(0.1,8,0.1,8)$. Note the influence of a change in $\la_X$ and $\la_Y$ on the final result: with no regularization ($\la_X=\la_Y=0$) only few points in the data set are matched. The introduction of regularization ($\la_X=\la_Y=0.001$) spreads the connections among the clusters, while maintaining the cluster-to-cluster matching. For  a high value of $\la_X=\la_Y=10$, the regularization tends to match the clusters with similar shape with each other, where the shape is defined by the graph structure. The graphs $\Gg_X$ and $\Gg_Y$ are represented with the nodes on blue and red respectively, and the edges as solid lines.}\label{exlk}
\end{figure*}

For $\la_X=\la_Y=0$ one obtains the relaxed symmetric OT solution, where the transport maps the points in $X$ to the closest point on $Y$, and vice versa. As we increase the values of $\la_X$ and $\la_Y$ to $0.001$, we can see how the regularization affects the mapping. Let us analyze $\regul{G_X \De_{X,Y}(\Sig)} = \|G_X \diag(\Sig \U) X - G_X \Sig Y\|^2 $, for instance. The  term $G_X \diag(\Sig \U) X$ is measuring the regularity of the weights $\diag(\Sig \U)$ on $X$ and the consequence is that for $\la_X=\la_Y=0.001$ there are 
plenty of connections with low weight (there are few solid lines), while for $\la_X=\la_Y=0$ there are several mappings with $\Sig_{i,j} = 1$ (solid lines). So, the regularization promotes a spreading of the matchings. 

The minimum of $\regul{G_X \De_{X,Y}(\Sig)}$ is reached when $G_X \diag(\Sig \U) X = G_X \Sig Y$, that is, when the graph structure of $X$ has the same shape as the graph structure of $\Sig Y$, which both can be observed in the last column and row. For high values of $\la_X=\la_Y$ the matchings tend to link the clusters by their shape, that is, the big cluster on $X$ with the big cluster of $Y$, and similarly for the small clusters (note that the links with higher value are between the small clusters). 



%
%
%

\section{Application to Color Transfer}
\label{sec-appli-color} 

This section shows how the relaxed and regularized OT formulation can be applied to imaging problems, more specifically to color transfer, and how the regularization and the relaxation improve the results obtained by previous methods. The color transfer problem consists in modifying an input image $X^0$ so that its colors match the colors of another input image $Y^0$. 

\begin{figure*}[h]
\centering
\begin{tabular}{@{}c@{\hspace{1mm}}c@{\hspace{1mm}}c@{}}
\includegraphics[width=.32\linewidth]{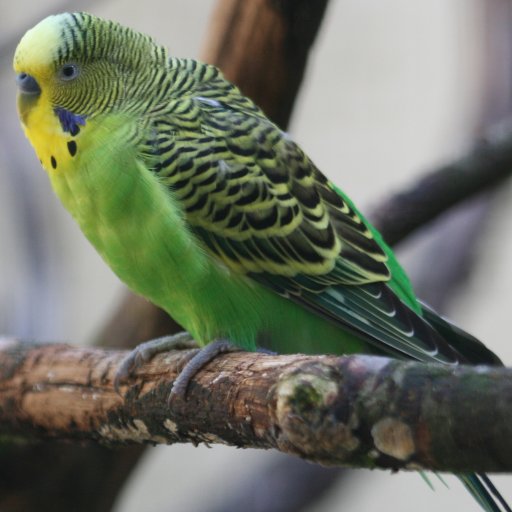} &
\includegraphics[width=.32\linewidth]{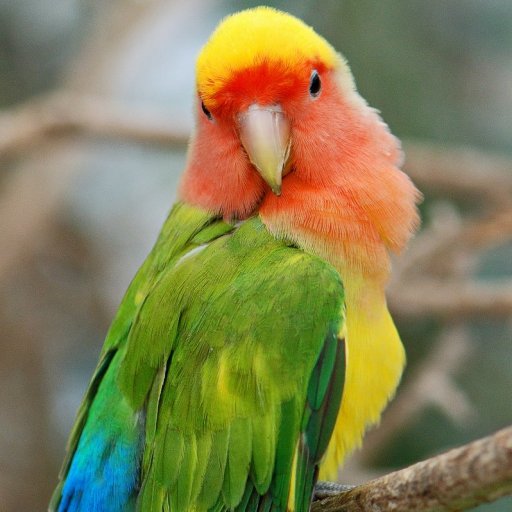} &
\includegraphics[width=.32\linewidth]{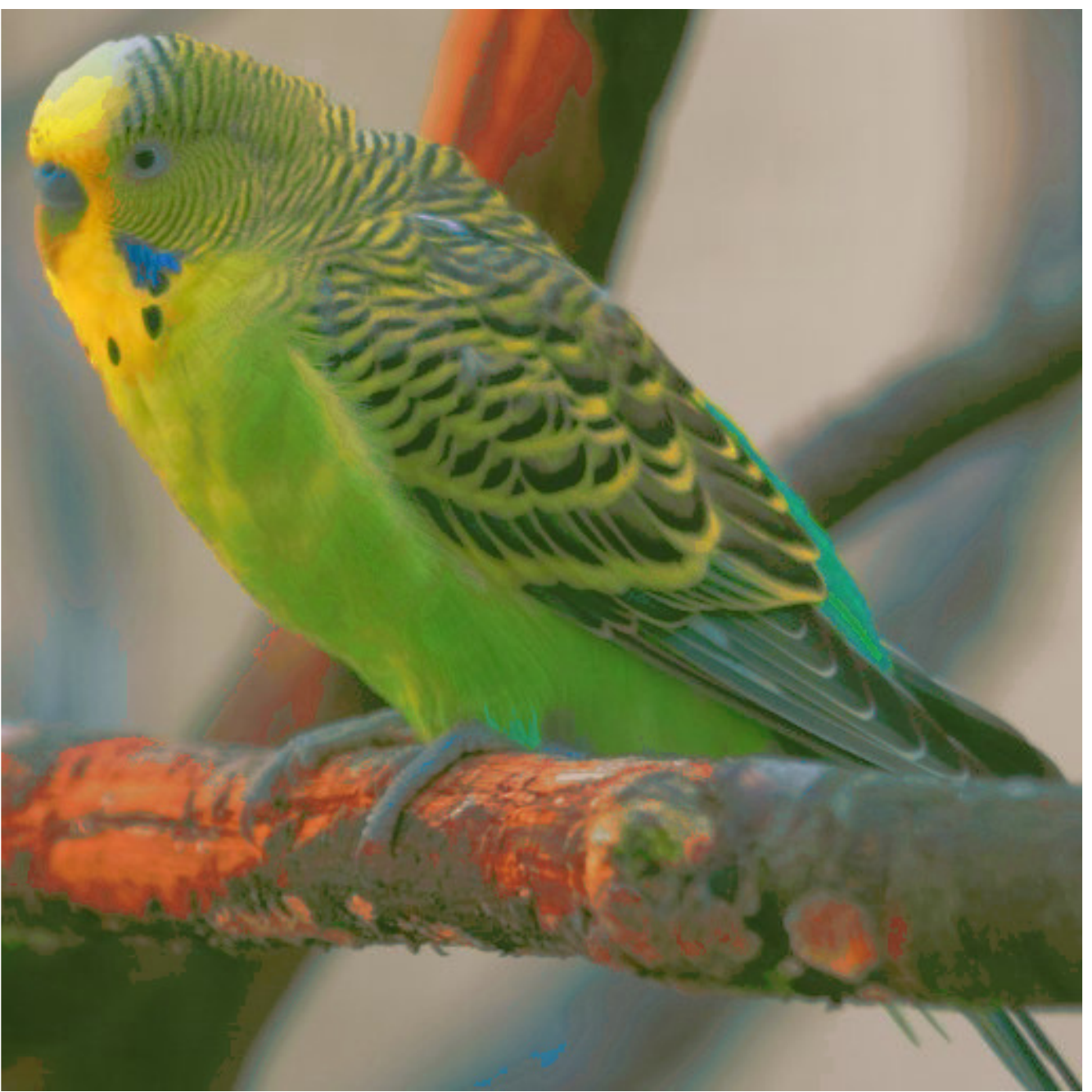} \\ 
$X^0$ & $Y^0$ & ${\tilde X^0}$ \\
\includegraphics[width=.32\linewidth]{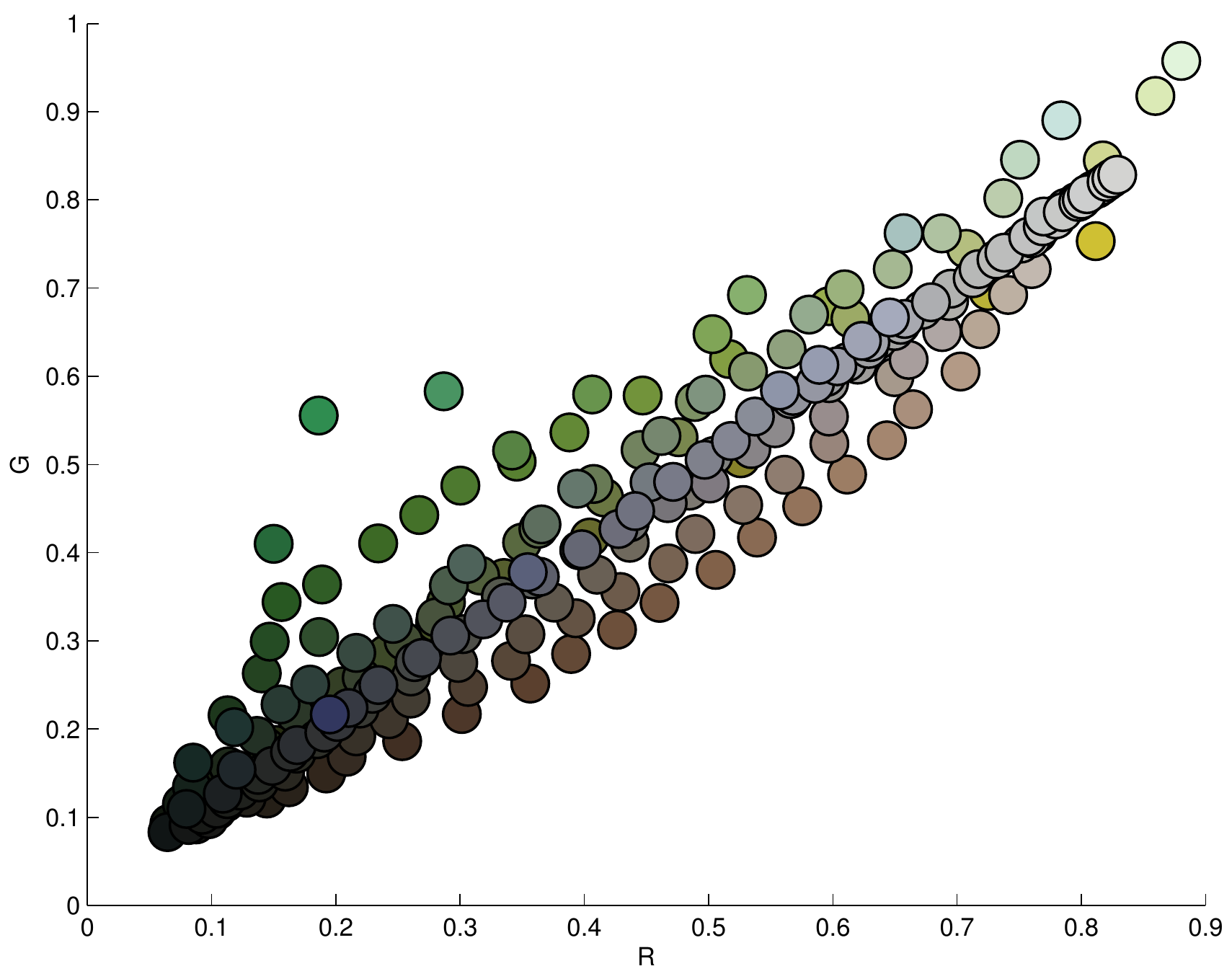} &
\includegraphics[width=.32\linewidth]{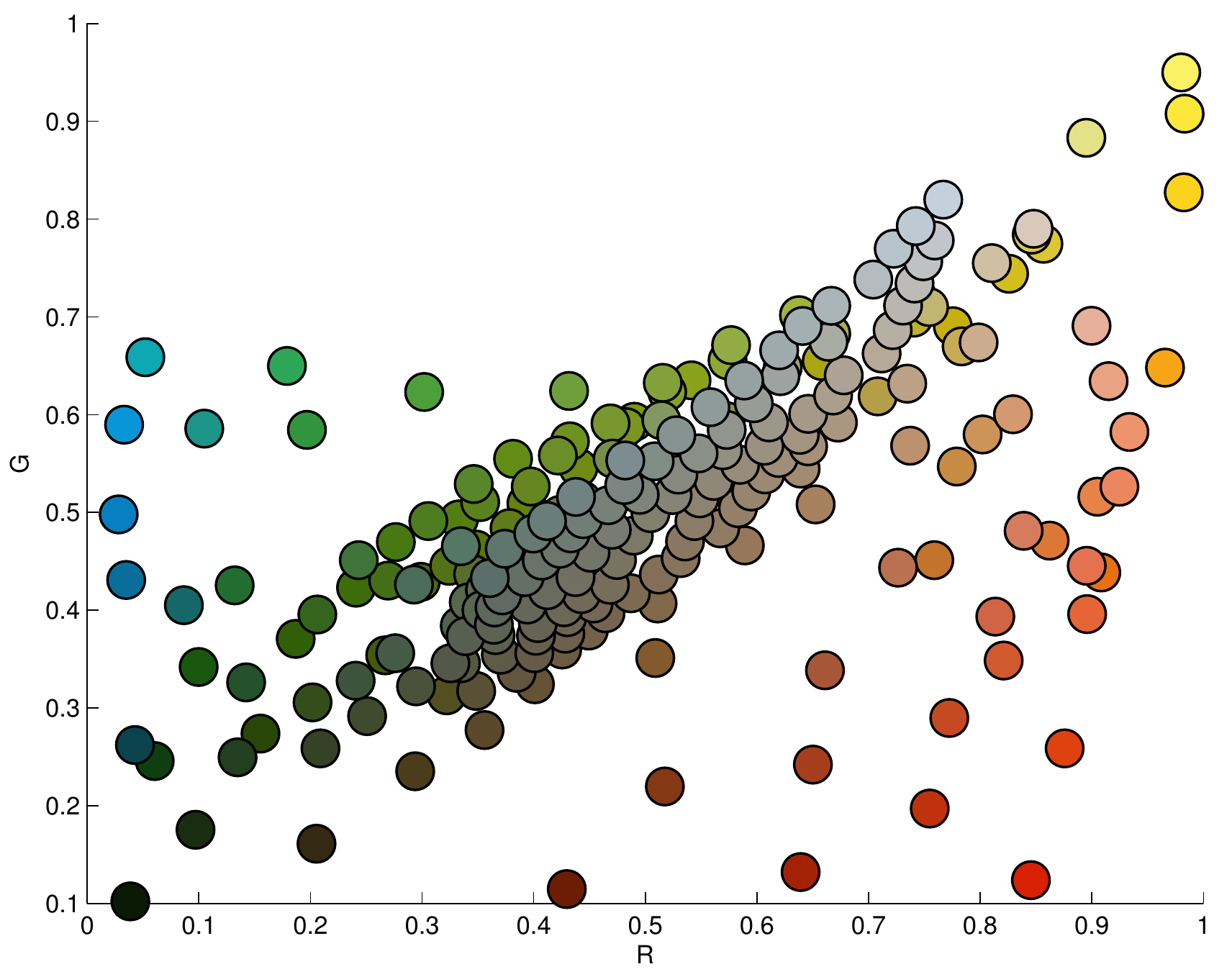} &
\includegraphics[width=.32\linewidth]{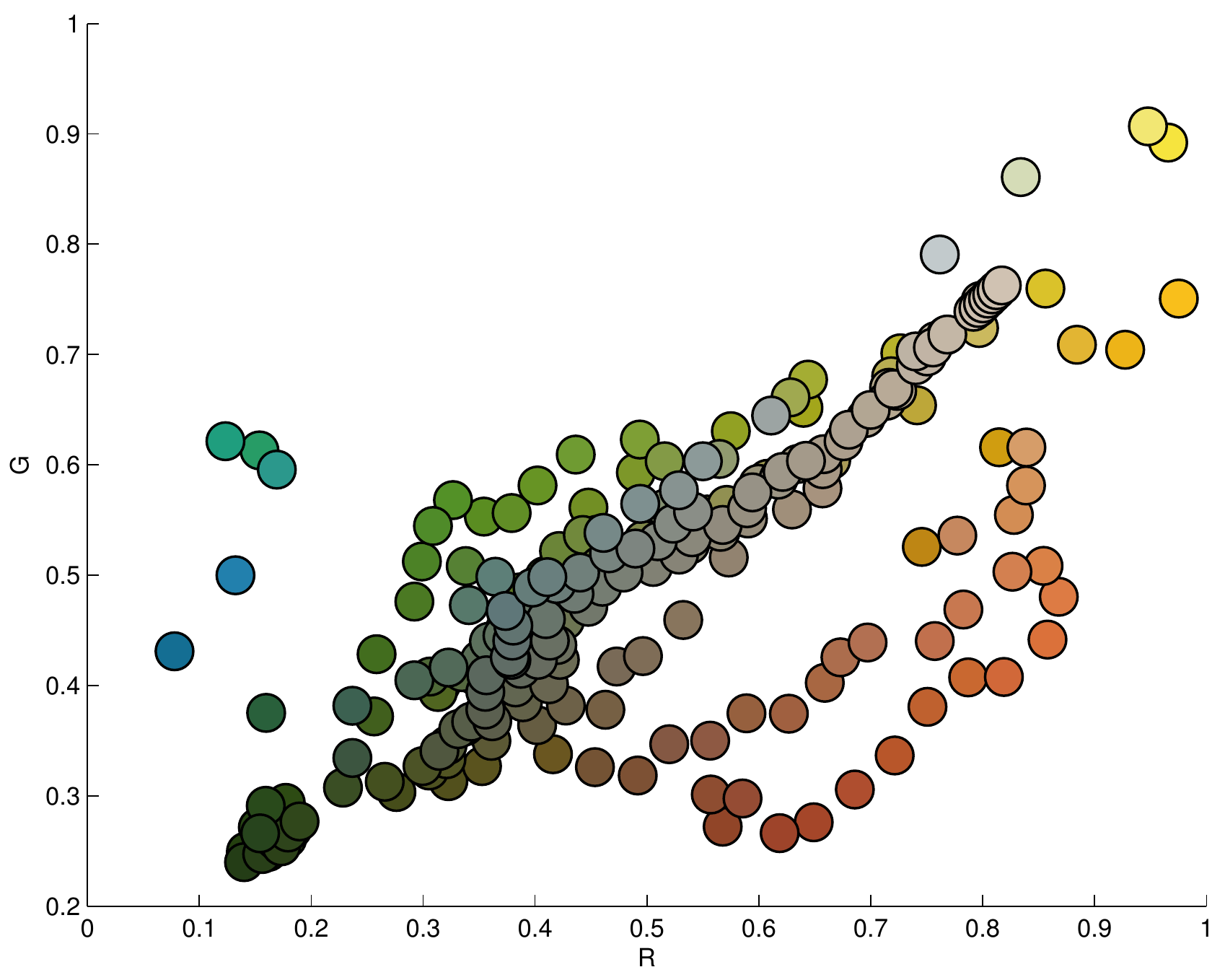}\\
$\mu_{X^0}$ & $\mu_{Y^0}$ & $\mu_{\tilde X^0}$
\end{tabular}
\caption{\label{imcolorization}Example of the colorization problem. Given images $X^0$ and $Y^0$ with their corresponding 3-D color distributions $\mu_{X^0}$ and $\mu_{Y^0}$ (represented here using their 2-D projection on the RG plane), the goal of colorization methods is to define an image ${\tilde X^0}$ that has the geometry of $X^0$ and a histogram $\mu_{{\tilde X}^0}$  that is similar to $\mu_{Y^0}$.}
\end{figure*}

\subsection{Color Images and Histograms}

In the following, an image is stored as a vector $X^0 \in \RR^{N_0 \times d}$ where $d=3$ is the number of channels (here $d=3$ since we handle color images, with R, G and B color channels) and where $N_0=N_1 N_2$ is the number of pixels ($N_1$ being horizontal and $N_2$ vertical dimensions). The color histogram of such an image $X^0$ can be estimated using the empirical distribution $\mu_{X^0}$. The goal of color transfer algorithms is to compute a transformation $T^0$ such that $\left(\tilde X^0\right)_i = T^0(X^0_i)$, where the new empirical distribution $\mu_{\tilde X^0}$ is close (or equal) to $\mu_{Y^0}$. Figure~\ref{imcolorization} shows an example where $X^0$, $Y^0$ are the original input images, the second row displays the 2-D projection of the 3-D distribution of pixels $\mu_{X^0}$ and $\mu_{Y^0}$, and in the third column, we show the $\mu_{\tilde X^0}$ which is the result of applying $T^0$ to $X^0$, where $T^0$ is computed using the method described below. The associated image ${\tilde X^0}$ has the geometry of $X^0$ and the color palette (3-D histogram) of $Y^0$.

\subsection{Regularized OT Color Transfer}

As exposed in Section~\ref{subsec-ot-imaging}, OT is now routinely used to perform color palette modification, and in particular color transfer. As we illustrate below in the numerical examples, relaxing the mass conservation constraint is crucial in order to better match the modes (i.e. the dominant colors) of each distribution. Regularizing the transport is also important to reduce colorization artifacts.

To make the optimization problem~\eqref{eq-symm-reg-energy} tractable for histograms obtained from large scale images, we apply the method on a sub-sampled point cloud. That is to say, before computing the relaxed and regularized transport, we define two smaller point clouds $X$ and $Y$ from $X^0$ and $Y^0$. These clouds are created such that their respective distributions $\mu_X$ and $\mu_Y$ are close to the two original distributions $\mu_{X^0}$ and $\mu_{Y^0}$. The mapping $T$ between these small clouds is then extended by interpolation to the original clouds. The complete algorithm for regularized OT color transfer between a pair of images $(X^0,Y^0)$ is exposed in Algorithm~\ref{alg-rot}. We now detail each step of the method. 

\begin{algorithm}[ht!]
\caption{Regularized OT Color Transfer}
\label{alg-rot}
\Require Images $X^0,Y^0 \in \RR^{N_0 \times d}$, $\la_X,\la_Y \in \RR^+ $, and $k_X,K_X,k_Y,K_Y \in \RR^+$, where $k_X \le K_X$ and $k_Y \le K_Y$.

\Ensure Image $\tilde X^0 \in \RR^{N_0 \times d}$.
\begin{enumerate}
	\algostep{Histogram down-sample} Compute $X,Y$ from $X^0,Y^0$ respectively \\ using K-means clustering.
	\algostep{Compute Mapping} Compute the optimal $\Sig$ such that  $T(X) = \diag(\Sig \U)^{-1} \Sig Y$ by solving eq.~\eqref{eq-symm-reg-energy} with algorithm~\eqref{eq-frankwolfe-update} or the linear program~\eqref{eq-symm-TV} solving with an interior point algorithm.
	\algostep{Obtain high resolution result} Compute $\tilde X^0$ with eq.~\eqref{eq-upsample}.
\end{enumerate}
\end{algorithm}

\paragraph{Pixels down-sampling} 

We construct a smaller data set $X \in \RR^{N \times d}$ by clustering the set $X^0$ into $N$ clusters with the K-means algorithm (see~\cite{Lloyd57}).
Each cluster corresponds to a point $X_i$ in our smaller data set $X$.  The same procedure is done for $Y^0$ to obtain $Y \in \RR^{N \times d}$.

\paragraph{Graph and $(G_X,G_Y)$ operator} 

As exposed in Section~\ref{sec:regsymme}, the regularization is defined using gradient operators $(G_X,G_Y)$ on graphs $(\Gg_X,\Gg_Y)$ connecting the points in $X$ and $Y$. Inspired by several recent works on manifold learning (see Section~\ref{subsec-regul-intro}), we use here a $n$-nearest neighbor graph, where $n$ is the number of edges adjacent to each vertex, i.e. $\abs{\enscond{j}{(i,j) \in E_X}} = n$ where $E_X$ is the set of edges of $X$. The weights of the graphs are defined as $w_{i,j}=\norm{X_i - X_j}^{-1}$ (same applies for $Y$), which is consistent with the computation of the directional derivatives. An example of this graph can be observed in Figure~\ref{im:matching}. Note that this graph does not need to be fully connected. 

\paragraph{Transport map computation}

The regularized transport map $T$ between the sub-sampled data $(X,Y)$ is computed as
\eq{
	T(X_i)=\left(\diag(\Sig \U)^{-1}\Sig Y\right)_i \qforq i={1,\ldots,N}
} 
where $\Sig$ is a solution of~\eqref{eq-symm-reg-energy}.

\paragraph{Transport map up-sampling}

The transport map $T$ is extended to the whole space using a nearest neighbor interpolation 
\eql{\label{eq-upsample}
	\foralls x \in \RR^d, \quad
	T^0(x) = T(X_{i(x)}) + x - X_{i(x)}, 
	\qwhereq
	i(x) = \uargmin{1 \leq i \leq N} \norm{ x - X_i }. 
} 
Note that this interpolation scheme contains an additive term $x-X_{i(x)}$. This corresponds to adding back the quantization error (due to the K-means sub-sampling) to the nearest neighbors interpolation, which helps to restore small scale textural details, and improves the visual quality of the result. This transport can now be  applied to the input image $X^0$ to obtain the new pixel values $(\tilde X^0)_{i} = T^0(X^0_i)$.

\begin{figure}[ht]
\centering
\begin{tabular}{@{}c@{\hspace{1mm}}c}
\includegraphics[width=.4\linewidth]{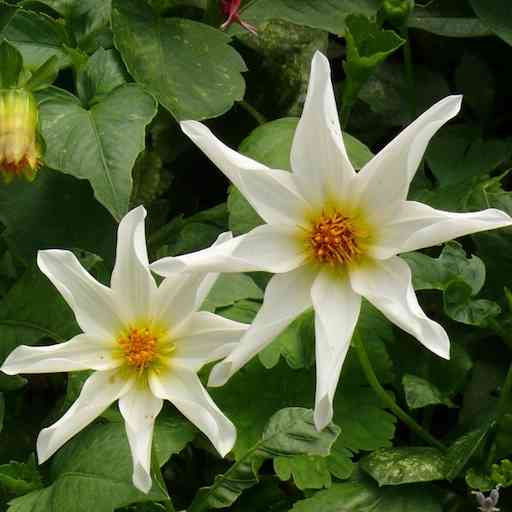} &
\fbox{\includegraphics[width=.55\linewidth,height=5.1cm]{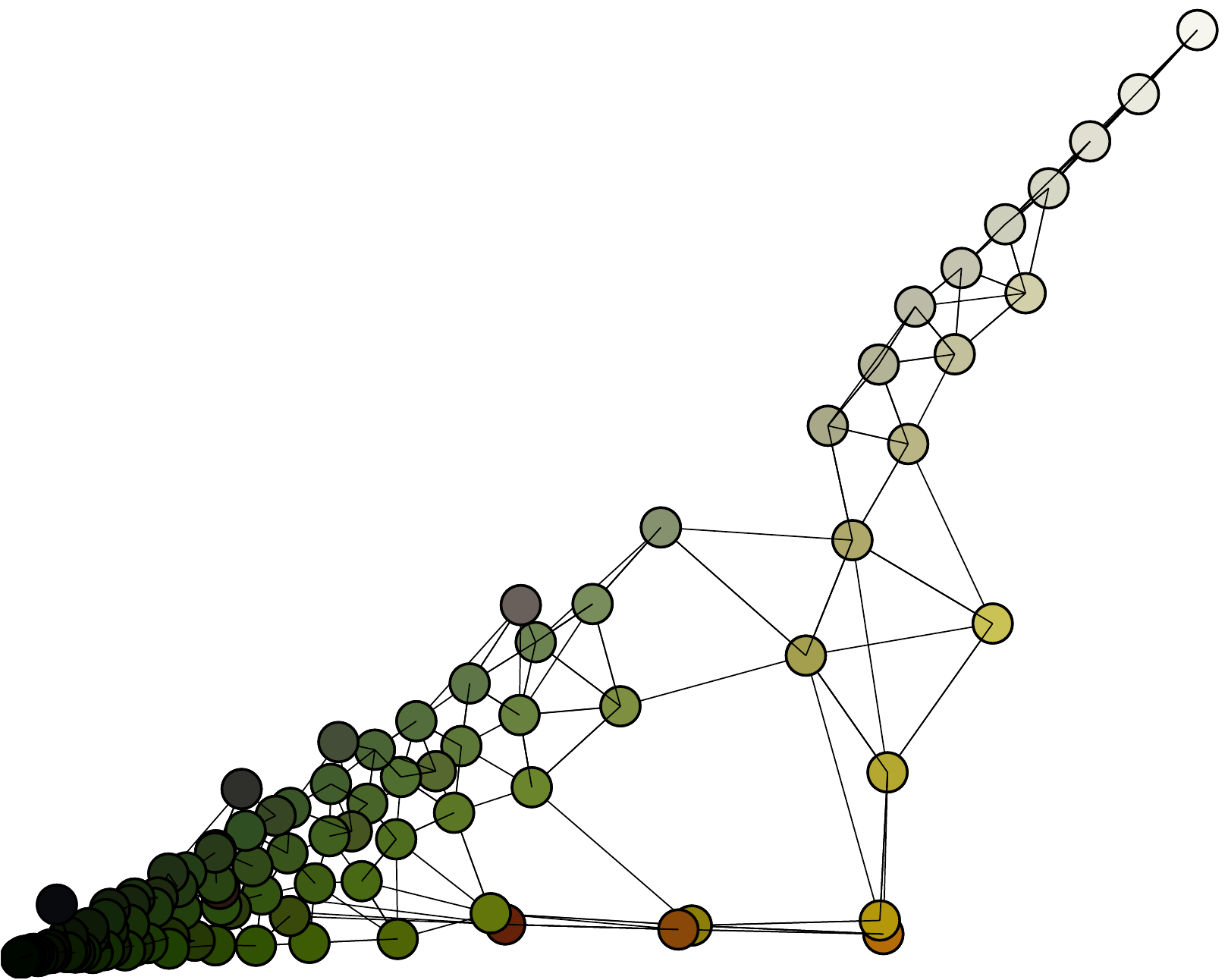}} \\
(a) & (b)
\end{tabular}
\caption{(a)Flower image (b)its empirical distribution projected on the Red-Blue plane. The line segments represent the edges $E_X$ of the $n$-nearest neighbor graph computed with $n=4$. }
\label{im:matching}
\end{figure}

\newlength{\mylenX}\settowidth{\mylenX}{\includegraphics[width=1.75cm]{../images/syntheticexamples/X} } 
\newcommand{\sidecapX}[1]{ {\begin{sideways}\parbox{1.65cm}{\centering #1}\end{sideways}} }

\begin{figure}[ht]
\centering
\begin{tabular}{@{}c@{}}
\sidecapX{\scriptsize $X^0$} \includegraphics[width=.13\linewidth]{../images/syntheticexamples/X} \\
\sidecapX{\scriptsize $Y^0$} \includegraphics[width=.13\linewidth]{../images/syntheticexamples/Y} \\
 (a)
\end{tabular} 
\begin{tabular}{@{}c@{\hspace{1mm}}c@{\hspace{1mm}}c@{}}
\includegraphics[width=.27\linewidth]{../images/syntheticexamples/symmetricsyntheticinv_l0_KX8_KY8_nn4} &
\includegraphics[width=.27\linewidth]{../images/syntheticexamples/symmetricsyntheticinv_l0001_KX8_KY8_nn4} &
\includegraphics[width=.27\linewidth]{../images/syntheticexamples/symmetricsyntheticinv_l10_KX8_KY8_nn4} \\
(b) &  (c) & (d) 
\end{tabular} 
\caption{Effect of changing the parameters $\la_X$ and $\la_Y$ of the relaxed and regularized OT formulation presented in section~\ref{sec:regsymme}, using parameters $\kappa=(0.1,8,0.1,8)$.  
	{(a)} original input images, 
	{(b)} relaxed OT, $\la_X=\la_Y=0$; 
	{(c)} $\la_X=\la_Y=0.001$; 
	{(d)} $\la_X=\la_Y=10$. 
	Each of these mappings can be observed in Figure~\ref{exlk}.}
  \label{im:synth}
\end{figure}

\if 0
\begin{figure}[ht]\centering
\begin{minipage}{2.15cm}
\sidecapX{\scriptsize $X^0$} \includegraphics[width=1.75cm]{../images/syntheticexamples/X} \\
\sidecapX{\scriptsize $Y^0$} \includegraphics[width=1.75cm]{../images/syntheticexamples/Y} 
\end{minipage} 
\begin{minipage}{12cm}
\includegraphics[width=3.6cm]{../images/syntheticexamples/symmetricsyntheticinv_l0_KX8_KY8_nn4} \hspace{-0.2cm}
\includegraphics[width=3.6cm]{../images/syntheticexamples/symmetricsyntheticinv_l0001_KX8_KY8_nn4} 
\hspace{-0.2cm}
\includegraphics[width=3.6cm]{../images/syntheticexamples/symmetricsyntheticinv_l10_KX8_KY8_nn4}
 \end{minipage}
 \begin{flushleft}\hspace{0.9cm} (a) \hspace{2.25cm} (b) \hspace{2.85cm} (c) \hspace{2.85cm} (d) \end{flushleft} 
 
\caption{Effect of changing the parameters $\la_X$ and $\la_Y$ of the relaxed and regularized OT formulation presented in section~\ref{sec:regsymme}, using parameters $\kappa=(0.1,8,0.1,8)$.  
	{(a)} original input images, 
	{(b)} relaxed OT, $\la_X=\la_Y=0$; 
	{(c)} $\la_X=\la_Y=0.001$; 
	{(d)} $\la_X=\la_Y=10$. 
	Each of these mappings can be observed in the columns of Figure~\ref{exlk}, see text for more details.}
  \label{im:synth}
\end{figure}
\fi

\subsection{Results}

Figure~\ref{im:synth} shows an example of color transfer between two synthetic images $X^0$ and $Y^0$ shown in Figure~\ref{im:synth}~(a). We apply Algorithm~\ref{alg-rot} to obtain the image ${\tilde X^0}$ with a color palette close to $Y^0$, but with the geometry of the original $X^0$. We now study the influence of the parameters $\la_X$ and $\la_Y$. Figure~\ref{exlk} shows a 2-D projection in the Red-Green plane of $X$ and $Y$, displayed using respectively red and blue, and $\tilde X$ in green. As already pointed out in Section~\ref{secalgosymm},  a low value of $\la_X$ and $\la_Y$ (zero for the first column) tends to match the points in X to the closest point in Y. This behavior can be observed in the map of the column (b). Many points in the big cluster of $X$ are mapped to very few points in the small cluster of $Y$, which corresponds in the images to mapping many red values of $X^0$ to very few brown values in $Y^0$. The consequence is that the color resolution of $\tilde X$ is reduced, the brown area of Figure~\ref{im:synth}~(b) is flat, unlike the original brown values in $Y$. As we increase the value of $\la_X$ and $\la_Y$, the mapping spreads within the small cluster of $Y$ in Figure~\ref{exlk}(b) and we gain color resolution, as can be observed in Figure~\ref{im:synth}~(c). 
On the other hand, if we increase too much the values of $\la_X$ and $\la_Y$, many points in $X$ get matched to the big cluster in $Y$ in Figure~\ref{exlk}~(c) which leads to a single dominant color in the final image $\tilde X^0$, in Figure~\ref{im:synth}~(d).

\paragraph{Comparison with the state of the art} 

Figure~\ref{star} shows some results on natural images and compare them with the methods of Piti\'e et al.~\cite{Pitie07} and Papadakis et. al~\cite{Papadakis_ip11}.  The goal of the experiment is to transfer the color palette of the images in the second row to the image on the first row.  Note that the methods in the state of the art introduce color artifacts (in the first column there is violet outside the flower, and in the second column the wheat is blueish), which can be avoided with the proposed method by an appropriate choice of $\la_X$, $\la_Y$ and $\kappa$. These results were obtained setting $N=400$ and constructing the graph as a $4$-nearest neighbor graph. By column, the values of $\la_X=\la_Y$ are $9\times 10^{-4}, 5\times 10^{-4}$, and $10^{-3}$,  and $\kappa$ was set to (0.1,1.1,0.1,1.1), (0.1,1.3,0.1,1.3), and (0.1,1,0.1,1), respectively.

\newlength{\mylen}\settowidth{\mylen}{\includegraphics[width=3.8cm,height=3.3cm]{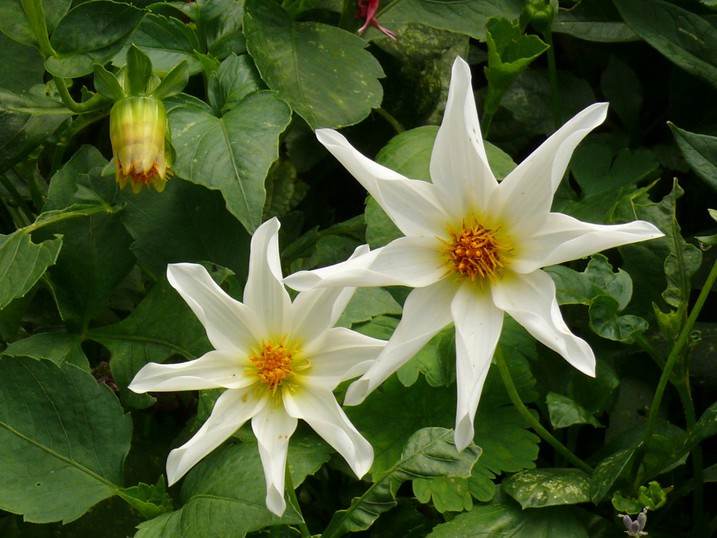}} 

\newcommand{\sidecap}[1]{{\begin{sideways}\parbox{3.3cm}{\centering #1}\end{sideways}} \hspace{-4mm}}

\newcommand{\myimg}[1]{\includegraphics[width=.3\linewidth,height=.26\linewidth]{#1}}

\begin{figure}[!ht]
\centering
\begin{tabular}{@{}@{}cc@{\hspace{1mm}}c@{\hspace{1mm}}c@{}}
\sidecap{ Original $X^0$ }  & 
\myimg{star/fleur_1} &
\myimg{star/wheat_1.jpg} &
\myimg{star/parrot_1.jpg} \\
\sidecap{ Original $Y^0$ } & 
\myimg{star/fleur_2} &
\myimg{star/wheat_2.jpg} &
\myimg{star/parrot_2.jpg} \\
\sidecap{ Piti\'e et al.~\cite{Pitie07} } & 
\myimg{star/fleur_pitie} &
\myimg{star/wheat_pitie} &
\myimg{star/parrot_pitie} \\
\sidecap{ Papadakis et al.~\cite{Papadakis_ip11} } & 
\myimg{star/fleur_papadakis} &
\myimg{star/wheat_papadakis}  &
\myimg{star/parrot_papadakis} \\
\sidecap{ Proposed method } & 
\myimg{symmetric/symmetricfleur_l00009_KX11_KY11_nn4} &
\myimg{symmetric/symmetricwheat_l00005_KX13_KY13_nn4} &
\myimg{symmetric/symmetricparrot_l0001_KX1_KY1_nn4} \\
 & \hspace{1.6cm}(a) & \hspace{1.6cm}(b) &\hspace{1.6cm}(c)
\end{tabular}
\caption{Comparison between the results obtained with our method and with the methods of~\cite{Pitie07} and~\cite{Papadakis_ip11} for image colorization. Note how the proposed method is able to generate results without color artifacts for example, in \textbf{(a)} the violet color of the flower is not spread outside the flower, in \textbf{(b)} the wheat does not become bluish and in \textbf{(c)} the result does not enhance or colorize differently the flat areas of the background.}
\label{star}
\end{figure}

\if 0
\begin{figure}[!ht]
\begin{tabular}{m{0.4cm}m{\mylen}m{\mylen}m{\mylen}}
\sidecap{ Original $X^0$ }  & \includegraphics[width=3.8cm,height=3.3cm]{star/fleur_1} &
\includegraphics[width=3.8cm,height=3.3cm]{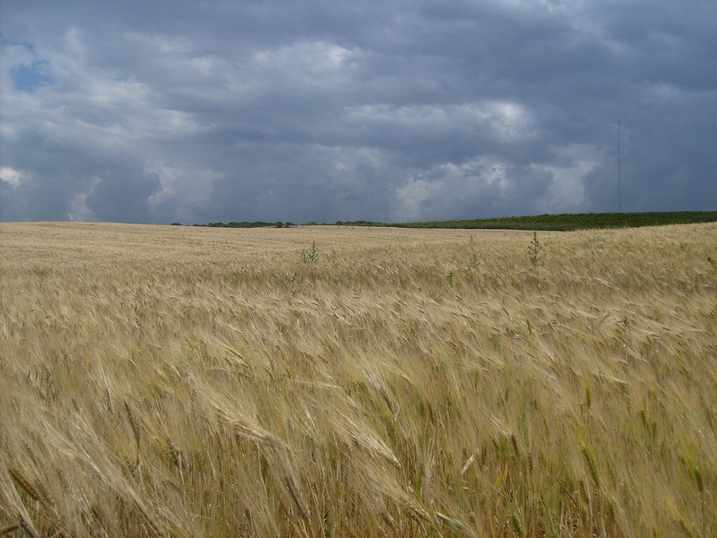} &
\includegraphics[width=3.8cm,height=3.3cm]{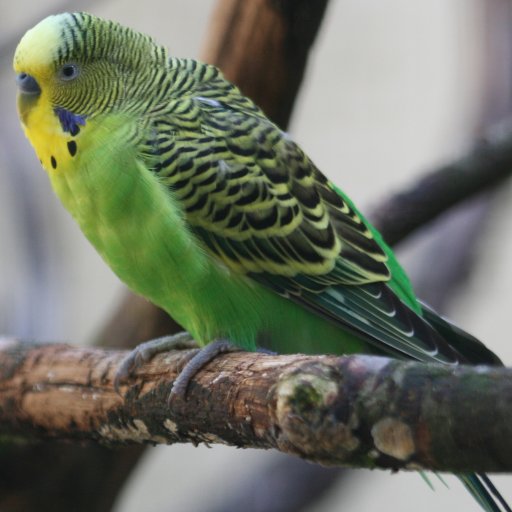} \\
\sidecap{ Original $Y^0$ } & \includegraphics[width=3.8cm,height=3.3cm]{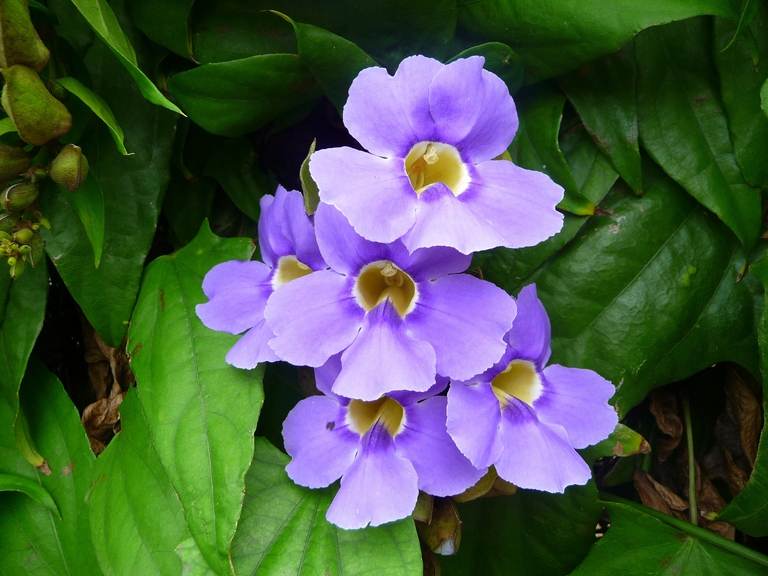} &
\includegraphics[width=3.8cm,height=3.3cm]{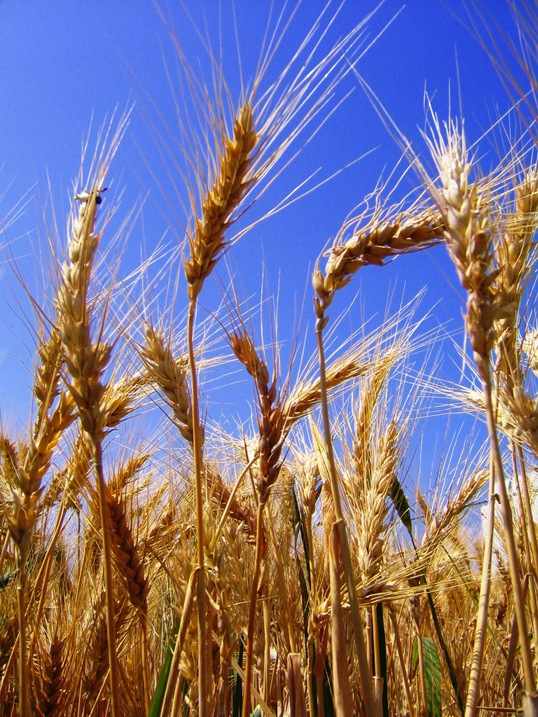} &
\includegraphics[width=3.8cm,height=3.3cm]{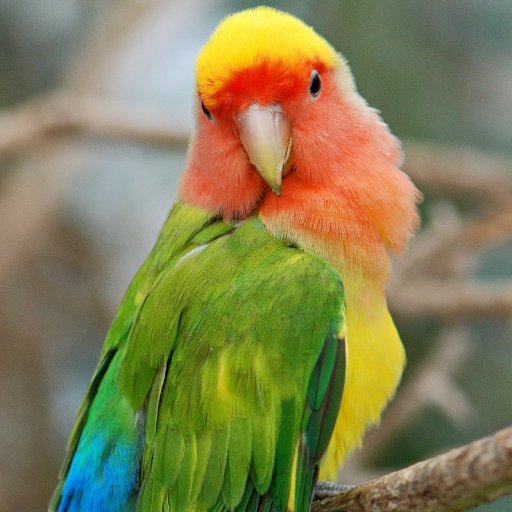} \\
\sidecap{ Pitie et al.~\cite{Pitie07} } & \includegraphics[width=3.8cm,height=3.3cm]{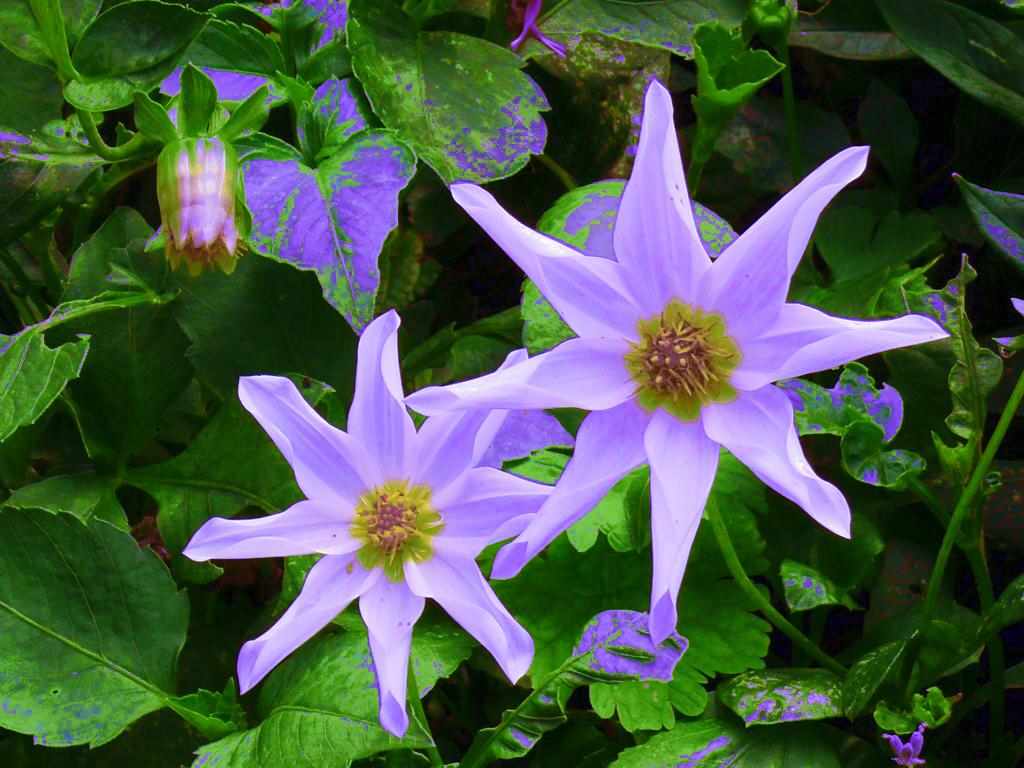} &
\includegraphics[width=3.8cm,height=3.3cm]{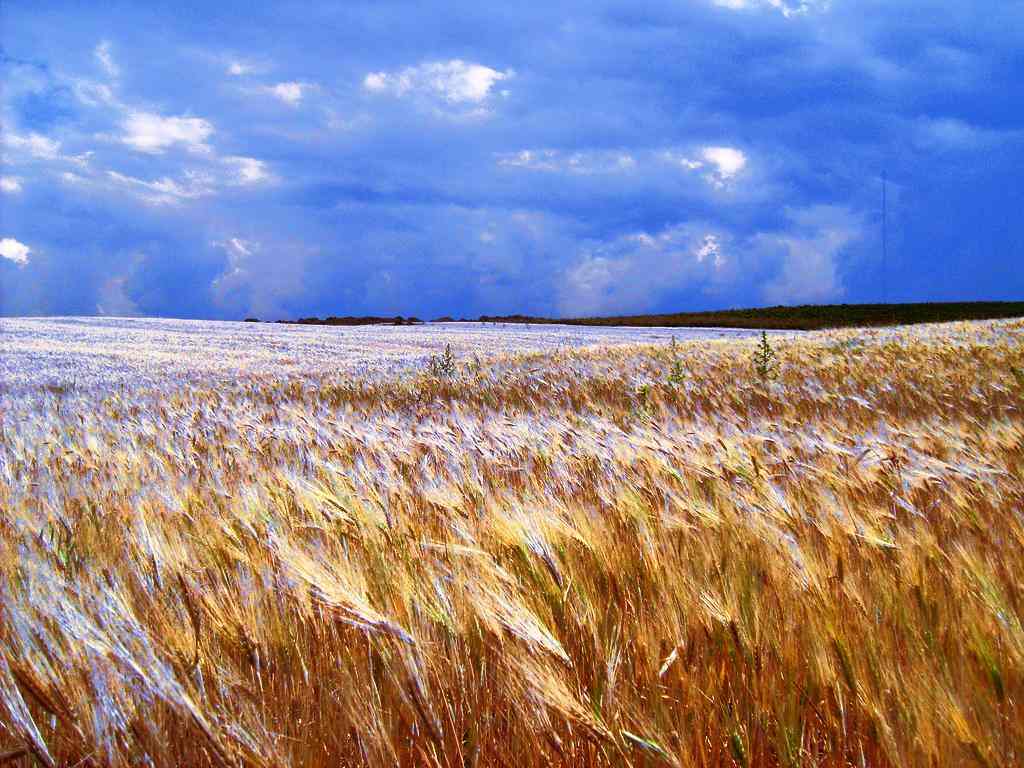} &
\includegraphics[width=3.8cm,height=3.3cm]{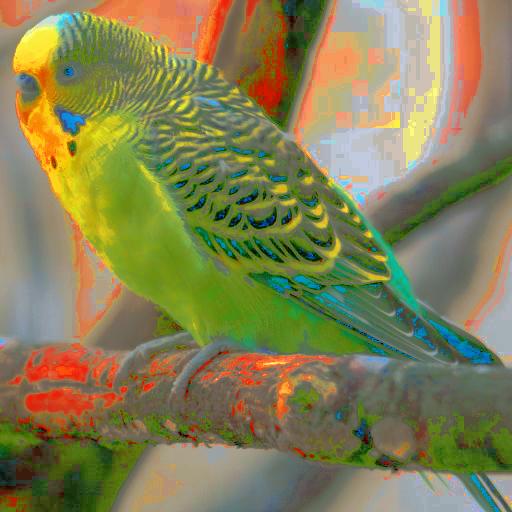} \\
\sidecap{ Papadakis et al.~\cite{Papadakis_ip11} } & \includegraphics[width=3.8cm,height=3.3cm]{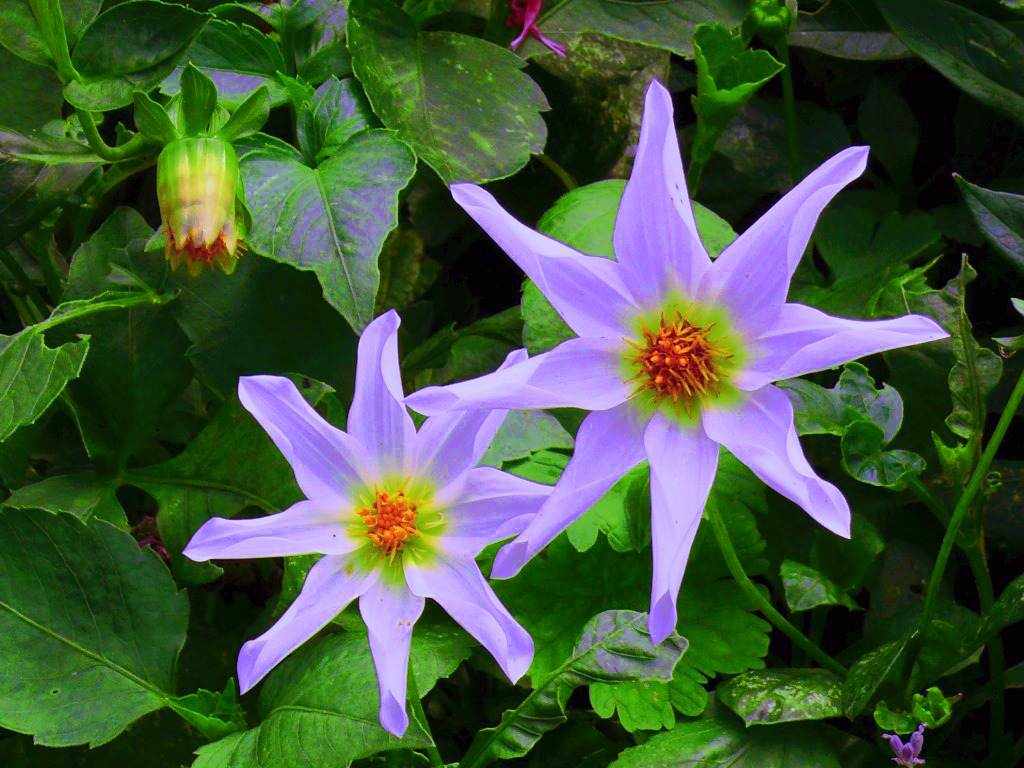} &
\includegraphics[width=3.8cm,height=3.3cm]{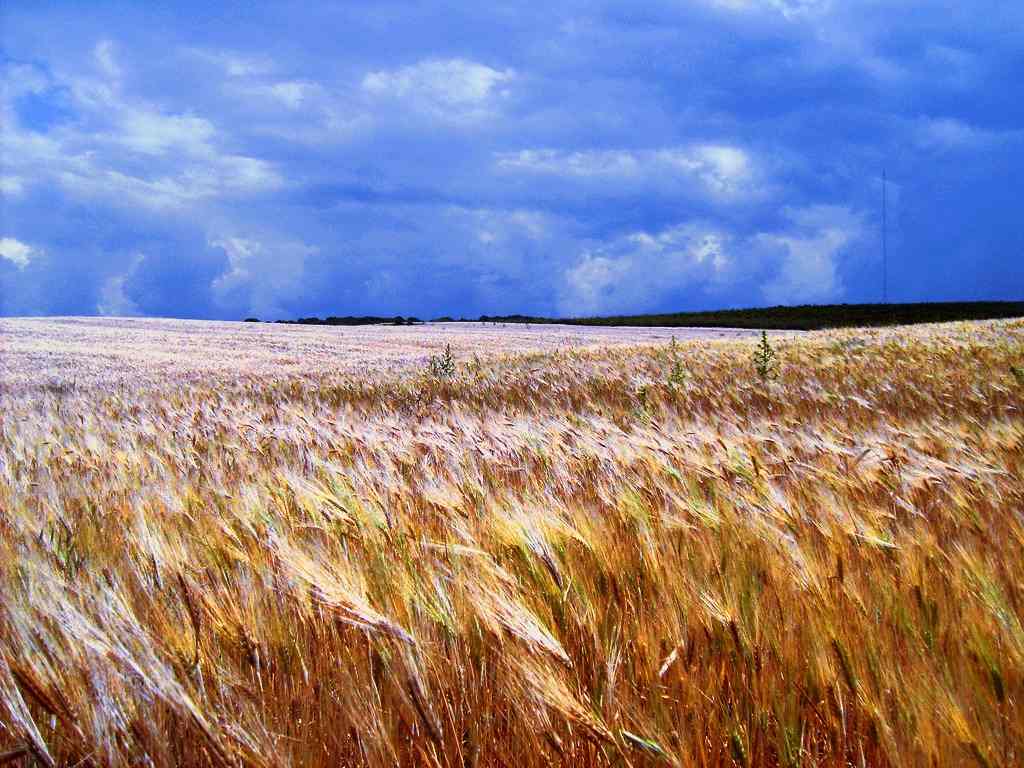}  &
\includegraphics[width=3.8cm,height=3.3cm]{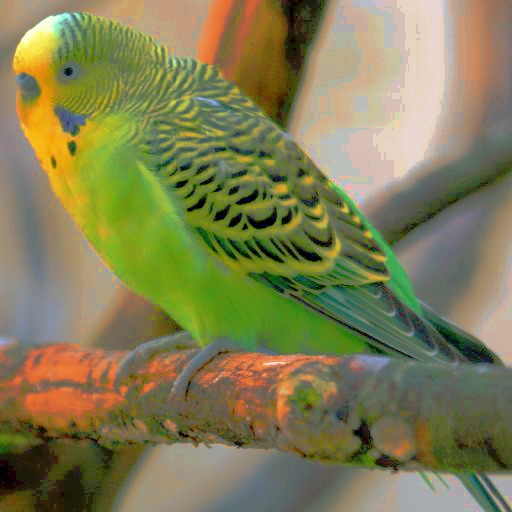} \\
\sidecap{ Proposed method } & \includegraphics[width=3.8cm,height=3.3cm]{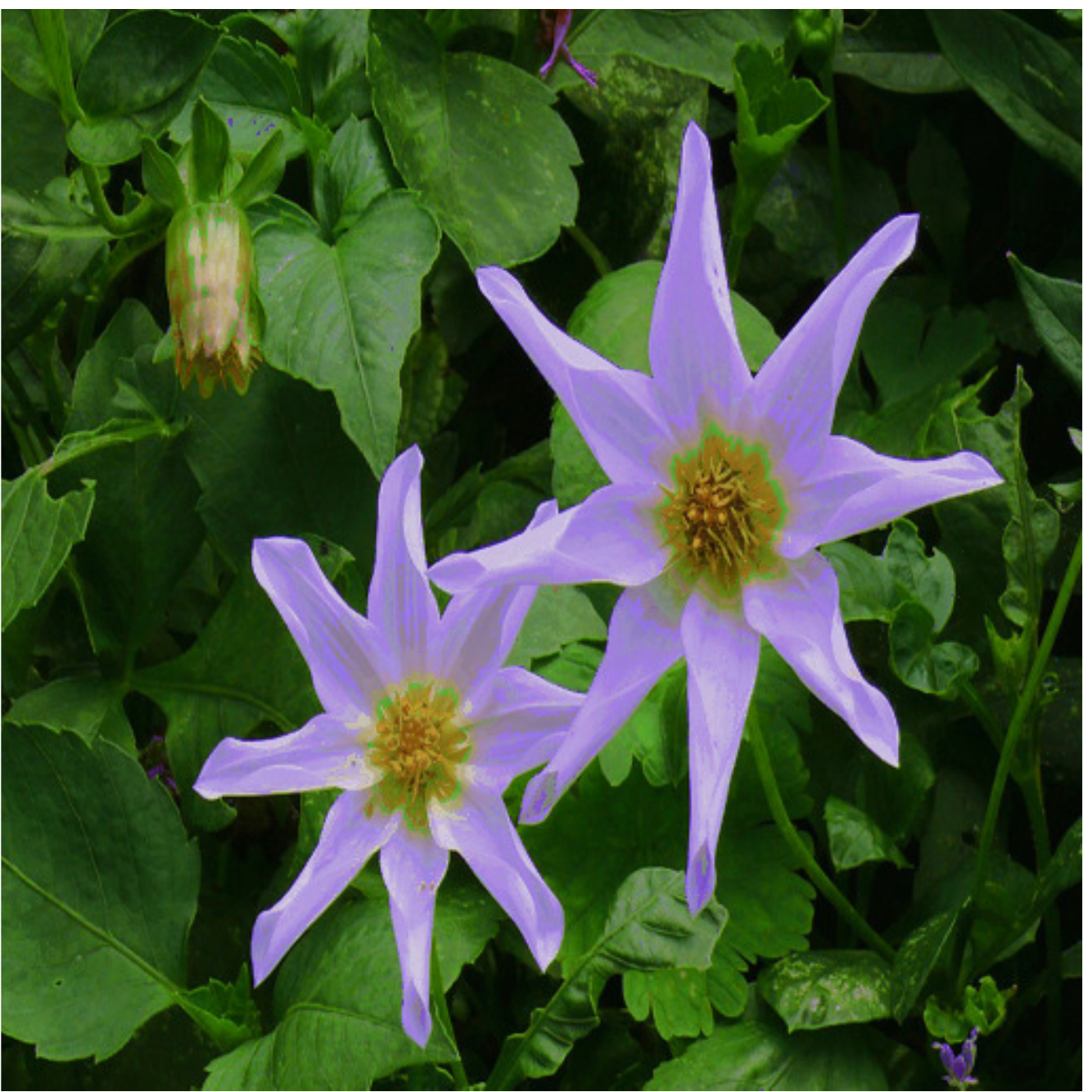} &
\includegraphics[width=3.8cm,height=3.3cm]{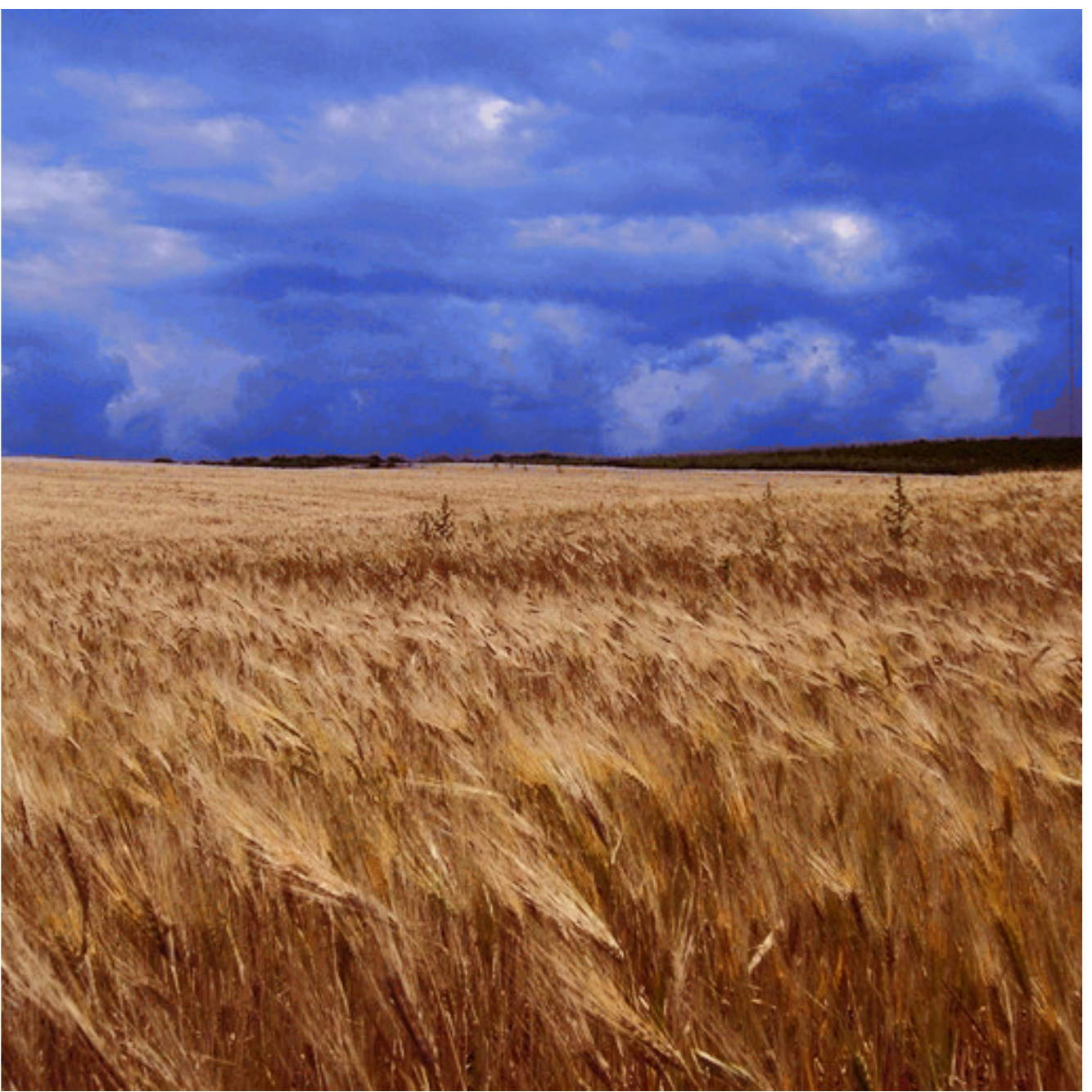} &
\includegraphics[width=3.8cm,height=3.3cm]{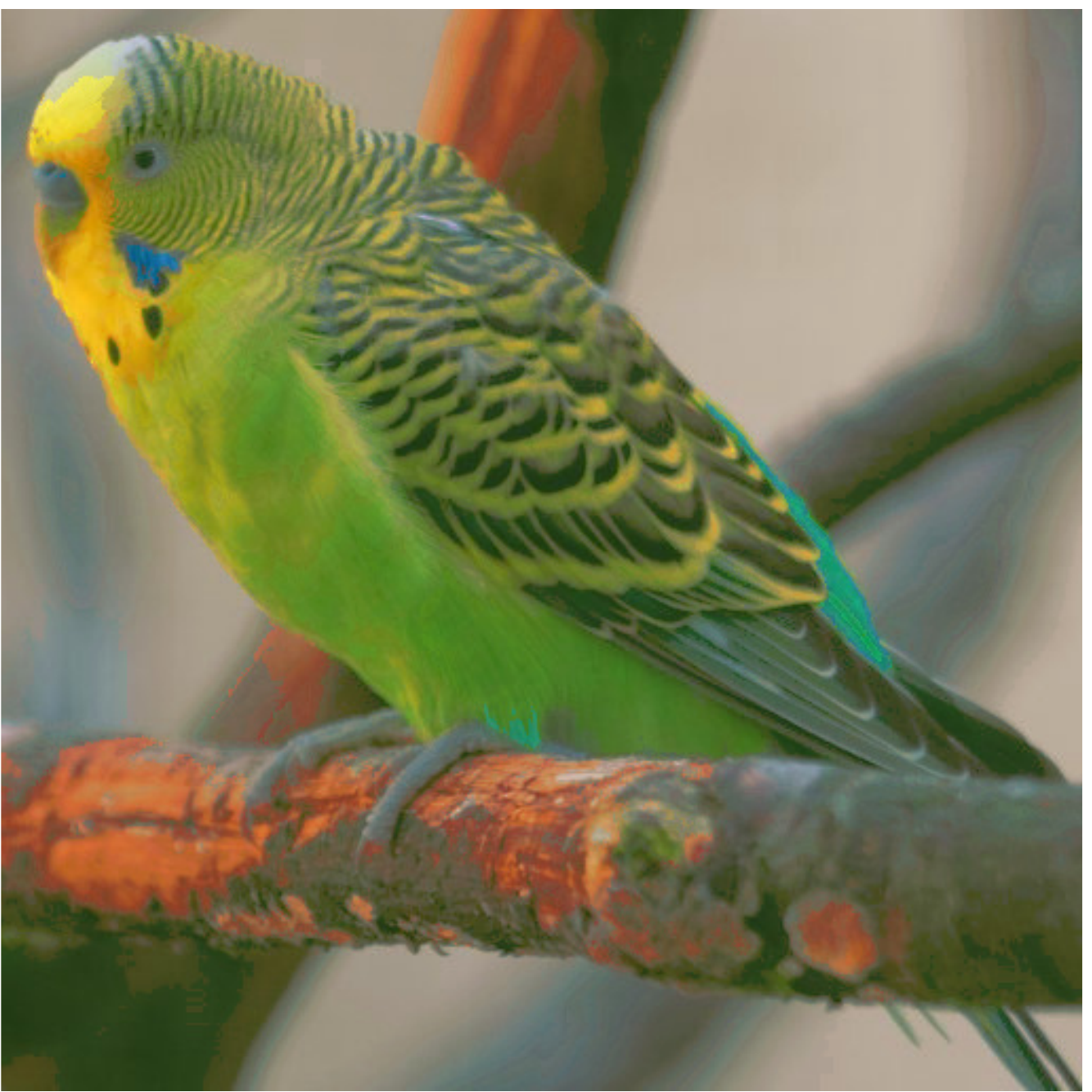} \\
 & \hspace{1.6cm}(a) & \hspace{1.6cm}(b) &\hspace{1.6cm}(c)
\end{tabular}
\caption{Comparison between the results obtained with our method and with the methods of~\cite{Pitie07} and~\cite{Papadakis_ip11} for image colorization. Note how the proposed method is able to generate results without color artifacts for example, in \textbf{(a)} the violet color of the flower is not spread outside the flower, in \textbf{(b)} the wheat does not become bluish and in \textbf{(c)} the result does not enhance or colorize differently the flat areas of the background.}
\label{star}
\end{figure}
\fi

\section{Regularized OT Barycenters}\label{sec-barycenters}

As presented in the introduction, Section~\ref{subsec-ot-imaging}, for certain applications in imaging such as texture mixing or color normalization, it may be useful to compute the barycenter distribution of a set of input distributions. Until now we focused on the computation of the mapping between two given distributions, now we are interested in finding a new distribution in-between two or more distributions. 


\paragraph{Asymmetric regularized OT metric}

To simplify the optimization process, we consider the asymmetric version of the regularized OT energy~\eqref{eq-symm-reg-energy}. We maintain one data set as a reference, let say $X$, by taking into account all its points ($k_X=K_X=1$) and only perform regularization with respect to its own graph, i.e. $\la_Y=0$. Thus, we simplify our expression into the following asymmetric distance: 
\eql{\label{eq-assym}
	D(\mu_X,\mu_Y)=\umin{\Sig \in \Dd_k}
	E(\Sigma) = \dotp{\Cost{X}{Y}}{\Sigma} + 
	\lambda \regul{G_X \De_{X,Y}(\Sig)} 
} 
\eq{
	\qwhereq \Dd_k=
	\enscond{\Sig \in [0,1]^{N \times N} }{ \Sig   \U  = 1, \; \Sig^* \U \leq k \U }.
}
Note that $\Matr_{\kappa}=\Dd_k$ for $\kappa=(1,1,0,k)$. In general, $D$ is not a distance, since it is not symmetric and one can have $D(\mu_X,\mu_Y)=0$ while having $\mu_X \neq \mu_Y$ (which is crucial to allow relaxing of mass conservation condition).  

\paragraph{Barycenter}Given a set of input clouds $(X^{[r]})_{r \in R}$ indexed by $R$ and weights $(\rho_r)_{r \in R} \in (\RR^+)^R$, we define a barycenter cloud $X$ as a local minimizer of 
\eql{\label{eqbar}
	\umin{X  \in \RR^{N \times d}}
		\Ee_\rho(X)=
		\sum_{r \in R} \rho_r \: D(\mu_{X^{[r]}},\mu_{X}).
}
In the case $\lambda=0$ and $k=1$, one recovers barycenters over the Wasserstein space, see the introduction for more details.

\subsection{Block-coordinate Descent}\label{algobar}

The minimization of~\eqref{eqbar} can be performed by doing a joint minimization on both the barycenter cloud $X$ and a set of matrices $\Sigma^{[r]} \in \Dd_k$ 
\eql{\label{eq-barycenter-min}
	\min_{ X, (\Sig^{[r]})_{r \in R} 
	} 
	\sum_{r \in R} \rho_r \left(
		\dotp{ \Cost{X^{[r]}}{X} }{\Sigma^{[r]}} + 
		\lambda \regul{ G_{X^{[r]}} (X^{[r]} - \Sig^{[r]} X) } \right).
}
This is a non-convex optimization problem. Fortunately, it is separately convex with respect to each of its variables $X$ and $( \Sig^{[r]} )_{r \in R}$, so one can use the block coordinate descent scheme. The block coordinate descent method consists in optimizing a given energy by iteratively minimizing with respect to each of its variables, in our case $X$ and  $( \Sig^{[r]} )_{r \in R}$.

\paragraph{Update $\Sigma^{[r]}$}

This corresponds to performing in parallel $|R|$ independent relaxed regularized OT. Fixing $X$, one solves independently for each $\Sig^{[r]}$ the convex problem
\eql{\label{eq-barycenter-min-Si}
	\umin{\Sig^{[r]} \in \Dd_k }
		\dotp{ \Cost{X^{[r]}}{X} }{\Sigma^{[r]}} + 
		\lambda \regul{ G_{X^{[r]}} (X^{[r]} - \Sig^{[r]} X) }.
}
For $(p,q)=(2,2)$ or $(p,q)=(1,1)$ this minimization can be solved using the algorithms detailed in Section~\ref{secalgosymm}.

\paragraph{Update $X$} 

Then, one solves for $X$ the following convex optimization problem
\eql{\label{eq-barycenter-min-X}
	\umin{X \in \RR^{N \times d} }
	\Hh(X) = 
	\sum_{r \in R} \rho_r \pa{
		\dotp{ \Cost{X^{[r]}}{X} }{\Sigma^{[r]}} + 
		\lambda \regul{ G_{X^{[r]}} (X^{[r]} - \Sig^{[r]} X) }
	}.
}	

\paragraph{Update $X$: Sobolev regularization} \label{algobarysobolev}

The minimization of~\eqref{eq-barycenter-min-X} when $p=q=2$ is an unconstrained quadratic problem, whose solution is obtained solving the following symmetric linear system
\eql{\label{eq-bar-l2}
	\sum_{r \in R} \rho_r \left( \Sig^{[r]} - \la \Sig^{[r]*} G_{X^{[r]}}^* G_{X^{[r]}} \right) X = \sum_{r \in R} \rho_r \left( \Sig^{[r]*} X^{[r]} - \la \Sig^{[r]*} G_{X^{[r]}}^* G_{X^{[r]}} X^{[r]} \right) ,
}
which corresponds to solving $\nabla \Hh(X) =0$. The solution to this symmetric linear system can be computed using for instance the conjugate gradient algorithm.

\paragraph{Update $X$: Anisotropic TV regularization}

When $(p,q)=(1,1)$, \eqref{eq-barycenter-min-X} is a linear program which can be solved using for instance interior point solvers~\cite{Nesterov-Nemirovsky-Book}. An alternative option, that we detail here, is to use first order proximal splitting schemes, that are well tailored for such highly structured problems.  We propose here to use the primal-dual splitting scheme developed in~\cite{Chambolle11}.

The problem~\eqref{eq-barycenter-min-X} can be re-casted as a minimization of the form 
\eql{\label{eq-bar-l1}
\begin{aligned} 
	& \umin{X \in \RR^{N \times d}}  & & F(K(X)) + H(X) \\ 
	& \qwhereq & &
	\choice{
	K(X) = \{ B_r X \}_{r \in R}, \\ 
	K^*(\{U_r\}_{r \in R}) = \sum_{r \in R} B^*_r U_r \\
	F(\{U_r\}_{r \in R}) = \la \sum_{r \in R} \rho_r \|G_{X^{[r]}} X^{[r]} - U_r \|_1 \\
	H(X) =  \sum_{r \in R} \rho_r \dotp{ \Cost{X^{[r]} }{X}}{\Sig^{[r]}}
	}
\end{aligned}
}
where $B_r = G_{X^{[r]}} \Sig^{[r]}$.  Let us now recall that the proximal operator of a function $F$ is defined as 
\eq{ 
	\Prox_{\ga F}(X) = \uargmin{\tilde X} \frac{1}{2}\norm{X-\tilde X}^2 + \ga F(\tilde X),
}
and being able to compute the proximal mapping of $F$ is equivalent to being able to compute the proximal mapping of the Legendre-Fenchel dual $F^*$ of $F$, thanks to Moreau's identity
\eq{
	X = \Prox_{\ga F^*}(X) + \ga \Prox_{F/\ga}(X/\ga).
}
Then, the primal-dual algorithm of~\cite{Chambolle11} to minimize $F \circ K + H$ reads
\begin{align}\label{eq-pd-bary}
	\nonumber \La^{k+1} &= \Prox_{\mu F^*}( \La^k + \mu K(\tilde X^k), \\
	 X^{k+1} &= \Prox_{\tau H}(  X^k-\tau K^*(\La^{k+1}) ), \\
	\nonumber \tilde X^{k+1} &= X^{k+1} + \theta (X^{k+1} - X^k) ,
\end{align}
with $\theta \in (0,1]$ and where 
\begin{align*}
\Prox_{\tau F}(U) &=\sum_{r \in R} G^{[r]} \Sig^{[r]} X^{[r]} + S_\tau(U^{[r]} - G^{[r]} \Sig^{[r]} X^{[r]}) \\
\Prox_{\tau H}(Y) &= \left( \Id+ \tau \sum_{r \in R}  \rho_r \Sig^{[r]}  \right)^{-1} \left(Y + \tau \sum_{r \in R} \rho_r \Sig^{[r]*} X^{[r]} \right)
\end{align*}
where $S_\tau$ is the soft thresholding function, defined as
\eq{
	\foralls i=1,\ldots,N, \quad
	S_\tau(U)_i =  \max\left(0,1- \frac{\tau}{\norm{U_i}}  \right)U_i.
}

\subsection{Algorithm}

The algorithm starts by some initial point set $X^{(0)}$, which is typically chosen to be equal to $X^{[r]}$ where $r$ corresponds to the maximum value of $\rho_r$. It then constructs iterates $(X^{(\ell)})_{\ell}$ and $( \Sigma_{i,j}^{[r],(\ell)} )_r$ by solving respectively~\eqref{eq-barycenter-min-Si} and~\eqref{eq-barycenter-min-X}. This is detailed in Algorithm~\ref{algo-block-barycenters}.

\begin{algorithm}[ht!]
\caption{Regularized and relaxed OT barycenter}
\label{algo-block-barycenters}
\Require Point sets $(X^{[r]})_{r \in R}$, weights $(\rho_r)_{r \in R}$, initialization $X^{(0)}$.

\Ensure Barycenter point set $X^{(\ell)}$, computed for $\ell$ large enough.

\begin{enumerate}
	\algostep{Initialization} Set $\ell=0$.
	\algostep{Update of $\Sigma^{[r]}$} For each $r \in R$, compute $\Sigma^{[r],(\ell+1)}$ by solving~\eqref{eq-barycenter-min-Si} \\
		 where $X=X^{(\ell)}$ is fixed, using the algorithms detailed in Section~\ref{secalgosymm}.
	\algostep{Update of $X$} Compute $X^{(\ell+1)}$ by solving~\eqref{eq-barycenter-min-X} where $\Sigma^{[r]} = \Sigma^{[r],(\ell+1)}$ \\
			are fixed.
			If $(p,q)=(2,2)$ solve \eqref{eq-bar-l2}, if $(p,q)=(1,1)$, use the algorithm~\eqref{eq-pd-bary}.
	\algostep{Convergence} While not converged, set $\ell \leftarrow \ell+1$ and go back to 2.
\end{enumerate}
\end{algorithm}

\subsection{Convergence}

The block coordinate descent methods are known to converge for smooth and differentiable energies~\cite{tseng-proximal}. The following theorem ensures the convergence of the proposed algorithm in the case of the Sobolev regularization. For the anisotropic regularization, one cannot ensure the convergence to stationary points, although in practice, we always observe it in our numerical tests.

\begin{thm}
	When $(p,q)=(2,2)$, the iterates $X^{(\ell)}$ of the algorithm are bounded and hence admit converging sub-sequences.
	The energies $\Ee_\rho(X^{(\ell)})$ (with  $\Ee_\rho$ defined in~(\ref{eqbar}))are decaying and converging to $\tilde{\Ee}$.
	 All converging sub-sequences converge to stationary points of $\Ee_\rho$ having the same energy $\tilde{\Ee}$.
\end{thm}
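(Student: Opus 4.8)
The plan is to read Algorithm~\ref{algo-block-barycenters} as an exact two-block Gauss--Seidel minimization of the joint energy
\[
	\mathcal{F}(X,\Sig)=\sum_{r\in R}\rho_r\pa{\dotp{\Cost{X^{[r]}}{X}}{\Sig^{[r]}}+\la\regul{G_{X^{[r]}}(X^{[r]}-\Sig^{[r]}X)}}
\]
from~\eqref{eq-barycenter-min}, where $\Sig=(\Sig^{[r]})_{r\in R}$ and $\Ee_\rho(X)=\umin{\Sig}\mathcal{F}(X,\Sig)$. For $(p,q)=(2,2)$ (squared Euclidean cost) $\mathcal{F}$ is nonnegative, jointly $C^1$, convex in $X$ for fixed $\Sig$ and convex in $\Sig$ for fixed $X$; the block $\Sig$ ranges over the compact convex polytope $\Dd_k$ while $X$ is unconstrained. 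Steps~2 and~3 produce $\Sig^{(\ell+1)}\in\uargmin{\Sig\in\Dd_k}\mathcal{F}(X^{(\ell)},\Sig)$ and $X^{(\ell+1)}\in\uargmin{X}\mathcal{F}(X,\Sig^{(\ell+1)})$. First I would record the monotone energy decay: by the two minimizations,
\[
	\Ee_\rho(X^{(\ell+1)})\le\mathcal{F}(X^{(\ell+1)},\Sig^{(\ell+1)})\le\mathcal{F}(X^{(\ell)},\Sig^{(\ell+1)})=\Ee_\rho(X^{(\ell)}),
\]
so $\ell\mapsto\Ee_\rho(X^{(\ell)})$ is non-increasing, and, being bounded below by $0$, converges to some $\tilde{\Ee}$. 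This already gives the second assertion.

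Next comes boundedness, which I expect to be the main obstacle. The iterates stay in the sublevel set $\{X:\Ee_\rho(X)\le\Ee_\rho(X^{(0)})\}$, so it suffices to control it. Dropping the nonnegative regularizer, in the mass-conserving regime $k=1$ every feasible $\Sig^{[r]}$ is bi-stochastic, hence $\Ee_\rho(X)\ge\sum_{r}\rho_r\sum_j\min_i\norm{X^{[r]}_i-X_j}^2$, which is coercive in $X$, and boundedness is immediate. The delicate regime is $k>1$: a barycenter point $X_j$ receiving no mass (a zero column in every $\Sig^{[r]}$) does not appear in $\mathcal{F}(\cdot,\Sig^{(\ell+1)})$ at all, so $\Ee_\rho$ is genuinely non-coercive along that coordinate. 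I would resolve this by noting that the $X$-update solves the linear system~\eqref{eq-bar-l2}, whose coefficient matrix annihilates such free coordinates; solving it by conjugate gradient initialized at $X^{(\ell)}$ leaves these coordinates unchanged, so they remain at their bounded initial values, while the coordinates that carry mass stay confined by the coercive lower bound above. Boundedness of $(X^{(\ell)})$ then follows, Bolzano--Weierstrass provides converging subsequences, and compactness of $\Dd_k$ lets one extract a further subsequence along which $\Sig^{(\ell_k+1)}$ also converges.

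Finally I would identify the limits by passing to the limit in the two optimality conditions, without ever needing successive iterates to coincide. Fix a subsequence with $X^{(\ell_k)}\to X^\star$ and $\Sig^{(\ell_k+1)}\to\Sig^\star\in\Dd_k$. Letting $k\to\infty$ in $\mathcal{F}(X^{(\ell_k)},\Sig^{(\ell_k+1)})\le\mathcal{F}(X^{(\ell_k)},\Sig)$ (valid for all $\Sig\in\Dd_k$) gives, by joint continuity, $\Sig^\star\in\uargmin{\Sig\in\Dd_k}\mathcal{F}(X^\star,\Sig)$; moreover $\mathcal{F}(X^{(\ell_k)},\Sig^{(\ell_k+1)})=\Ee_\rho(X^{(\ell_k)})\to\tilde{\Ee}$ forces $\mathcal{F}(X^\star,\Sig^\star)=\tilde{\Ee}$. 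A squeeze using $\tilde\Ee\le\Ee_\rho(X^{(\ell_k+1)})\le\mathcal{F}(X^{(\ell_k+1)},\Sig^{(\ell_k+1)})\le\mathcal{F}(X^{(\ell_k)},\Sig^{(\ell_k+1)})\to\tilde\Ee$ shows $\mathcal{F}(X^{(\ell_k+1)},\Sig^{(\ell_k+1)})\to\tilde\Ee$; then letting $k\to\infty$ in $\mathcal{F}(X^{(\ell_k+1)},\Sig^{(\ell_k+1)})\le\mathcal{F}(X,\Sig^{(\ell_k+1)})$ (valid for all $X$) yields $\mathcal{F}(X^\star,\Sig^\star)\le\mathcal{F}(X,\Sig^\star)$ for every $X$, i.e. $X^\star\in\uargmin{X}\mathcal{F}(X,\Sig^\star)$ and hence $\nabla_X\mathcal{F}(X^\star,\Sig^\star)=0$. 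Thus the first-order conditions of the joint problem hold at $(X^\star,\Sig^\star)$, and since $\Sig^\star$ attains the minimum defining $\Ee_\rho(X^\star)$, Danskin's (envelope) theorem promotes $\nabla_X\mathcal{F}(X^\star,\Sig^\star)=0$ to the stationarity of $\Ee_\rho$ at $X^\star$. Continuity of $\Ee_\rho$ finally gives $\Ee_\rho(X^\star)=\tilde{\Ee}$ for every subsequential limit, which is the last assertion.
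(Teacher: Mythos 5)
Your overall architecture mirrors the paper's proof: monotone energy decay from the two blockwise minimizations, boundedness of the iterates extracted from the energy bound, then identification of subsequential limits as stationary points. The decay argument is correct and is exactly the paper's (stated there in one line). Your stationarity argument is correct but takes a genuinely different route: the paper invokes Theorem 4.1 of Tseng's block-coordinate-descent theory, whereas you prove the property directly by passing to the limit in the two blockwise optimality conditions --- using the squeeze $\tilde{\Ee}\le\Ee_\rho(X^{(\ell_k+1)})\le\mathcal{F}(X^{(\ell_k+1)},\Sig^{(\ell_k+1)})\le\mathcal{F}(X^{(\ell_k)},\Sig^{(\ell_k+1)})\to\tilde{\Ee}$ precisely so that you never need $X^{(\ell_k+1)}$ itself to converge --- and then applying Danskin's theorem. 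This is a clean, self-contained alternative, and it even repairs a small imprecision of the paper, whose proof concludes stationarity of the joint energy $\bar{\Ee}$ while the statement is about $\Ee_\rho$; your Danskin step is the missing bridge between the two.

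The gap is in the boundedness step for $k>1$. For $k=1$ your bi-stochasticity/coercivity bound is fine, and is essentially the paper's argument: the paper fixes a column $j$, sets $\gamma_i=\Sigma^{[r],(\ell)}_{i,j}$, asserts $\sum_i\gamma_i=1$, and applies Jensen to get $\norm{X_j^{(\ell)}-\bar X_j}^2\le\Ee_\rho(X^{(0)})/\rho_r$ with $\bar X_j$ in the convex hull of the $(X_i^{[r]})_i$ --- an assertion valid only when every column carries unit mass, i.e.\ in effect when $k=1$, since $\Dd_k$ only imposes $\Sig^*\U\le k\U$. For $k>1$ you correctly identify the zero/small-mass columns as the obstruction, but your two patches do not close it. The conjugate-gradient-initialization argument handles only exactly-zero columns, and it is a property of one particular solver, not of the algorithm as specified (the paper says the linear system can be solved ``for instance'' by CG; the theorem should hold for any minimizer chosen in the $X$-update, or else the statement must be amended to fix the solver). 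More importantly, a column whose mass $s_j^{(\ell)}=\sum_i\Sigma^{[r],(\ell)}_{i,j}$ is positive but tends to $0$ along the iterations is not ``confined by the coercive lower bound'': the energy bound only yields $\norm{X_j^{(\ell)}-\bar X_j}^2\le\Ee_\rho(X^{(0)})/(\rho_r\, s_j^{(\ell)})$, which blows up as $s_j^{(\ell)}\to 0$, so no bound uniform in $\ell$ follows. Closing this requires a different ingredient, e.g.\ showing that the $X$-update itself returns points in a fixed bounded set (for $\lambda=0$ the update places every positive-mass $X_j$ at a convex combination of input points, independently of how small $s_j$ is; an analogous statement must be argued for $\lambda>0$), or restricting the claim to $k=1$. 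Note this is a real subtlety of the theorem and not only of your write-up: the paper's own proof silently assumes unit column sums, so you have spotted a genuine weak point of the published argument, but your proposal does not yet repair it.
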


\begin{proof}
	By construction, the energy $\Ee_\rho(X^{(\ell)})$ is decaying and positive, hence converging.  The algorithm minimizes~\eqref{eq-barycenter-min}, which reads
	\eq{
		\umin{ \Sigma^{[r]} \in \Dd_k, X } 
		\bar \Ee( (\Sigma^{[r]})_r, X )
		= \sum_{i,j,r}  \rho_r \norm{X_i^{[r]} - X_i }^2 \Sigma_{i,j}^{[r]}
		+ \la J_{2,2}\pa{ G_{X^{[r]}} (X^{[r]} - \Sigma^{[r]} X) }.
	}

	Since $\Sigma^{[r]} \in \Dd_k$ which is a bounded set, the iterates $( \Sigma_{i,j}^{[r],(\ell)} )_r$ produced by the algorithm are bounded and hence they admit converging sub-sequences.
	
	For any iteration index $\ell$, one has
	\eql{\label{eq-proof-cv-1}
		\sum_{i,j,r} \rho_r \norm{X_i^{[r]} - X_j^{(\ell)} }^2 \Sigma_{i,j}^{[r],(\ell)} 
		\leq \bar \Ee( (\Sigma^{[r],(\ell)})_r, X^{(\ell)} ) 
		\leq \Ee_\rho(X^{(0)})
	}
	where $( \Sigma_{i,j}^{[r],(\ell)} )_r$ are the matrices obtained at the previous iteration of the method. We let $r$ be any index such that $\rho_r>0$. For any $j$ we denote  $\ga_i = \Sigma_{i,j}^{[r],(\ell)}$ (we ignore dependency with $(j,r,\ell)$ for ease of notations) that satisfy $\sum_i \ga_i=1$  and define the barycenter 	
	\eq{
		\bar X_j = \sum_i \ga_i X_i^{[r]}
	}
	which is a point in the convex hull of the $(X_i^{[r]})_i$, and is hence bounded independently of $j$ and $\ell$.
	
	Equation~\eqref{eq-proof-cv-1} implies
	\eq{
		\sum_{i} \ga_i \norm{X_i^{[r]} - X_j^{(\ell)} }^2 
		\leq \frac{\Ee_\rho(X^{(0)})}{\rho_r}
	}
	By convexity of the function $x \in \RR^d \mapsto \norm{ X_j^{(\ell)} - x }^2$, one has
	\eq{		
		\norm{ X_j^{(\ell)} - \bar X_j }^2 
		\leq
		\sum_{i} \ga_i \norm{X_j^{(\ell)} - X_i^{[r]} }^2  \leq \frac{\Ee_\rho(X^{(0)})}{\rho_r}
	}	
	This shows that the iterates $X^{(\ell)}$ of the algorithm are bounded, and hence admit converging sub-sequences.

	Given that the energy $\bar \Ee$ is convex with respect to the variables $(\Sigma^{[r]})_{r \in R}$ and $X$ (although not jointly convex) and the non convex terms $J_{2,2}( G_{X^{[r]}} (X^{[r]} - \Sigma^{[r]} X) )$ that mixes the variables is $C^1$ with Lipschitz gradient, one can apply the Theorem 4.1 of~\cite{tseng-proximal}, which shows that any converging sub-sequence converges to a stationary point of $\bar\Ee_\rho$.   
\end{proof}


\section{Application to color normalization}

Color normalization is the process of imposing the same color palette on a group of images. This color palette is always somehow related to the color palettes of the original images. For instance, if the goal is to cancel the illumination of a scene (avoid color cast), then the imposed histogram should be the histogram of the same scene illuminated with white light. Of course, in many occasions this information is not available. Following Papadakis et al.~\cite{Papadakis_ip11}, we define an in-between histogram, which is chosen here as the regularized OT barycenter. 



\subsection{Algorithm}

Given a set of input images $(X^{0[r]})_{r \in R}$, the goal is to impose on all the images the same histogram $\mu_X$ associated to the barycenter $X$. As for the colorization problem tackled in Section~\ref{sec-appli-color}, the first step is to subsample the original cloud of points $X^{0[r]}$ to make the problem tractable. Thus, for every $X^{0[r]}$ we compute a smaller associated point set $X^{[r]}$ using K-means clustering. Then, we obtain the barycenter $X$ of all the point clouds $(X^{[r]})_{r \in R}$ with the algorithms presented in Section~\ref{algobar}. Figure~\ref{im:baryillu} first row, shows an example on two synthetic cloud of points, $X^{[1]}$ in blue and $X^{[2]}$ in red. The cloud of points in green corresponds to the barycenter $X$, which can change its position depending on the parameter $\rho=(\rho_1,\rho_2)$ in~\eqref{eqbar} from $X^{[1]}$ for $\rho=(1,0)$ to $X^{[2]}$ when $\rho=(0,1)$.  This data set $X$ represents the 3-D histogram we want to impose on all the input images. 

Once we have $X$, we compute the regularized and relaxed OT transport maps $T^{[r]}$ between each $X^{[r]}$ and the barycenter $X$, by solving~\eqref{eq-symm-reg-energy}. The line segments in Figure~\ref{im:baryillu} represent the transport between points clouds, i.e. if $\Sig^{[1]}_{i,j}>0$, $X^{[1]}_i$ is linked to $X_j$, and similarly for $\Sig^{[2]}$. 

We apply $T^{[r]}$ to $X^{[r]}$, obtaining  $\tilde X^{[r]}$, for all $r \in R$, that is to say, we obtain a set of point clouds $\tilde X^{[r]}$ with a color distribution close to $X$.  Finally, to recover a set of high resolution images, we compute each $\tilde X^{0[r]}$ from $X^{0[r]}$ by up-sampling. A detailed description of the method is given in Algorithm~\ref{alg-norm}. 

\begin{algorithm}[ht!]
\caption{Regularized OT Color Normalization}
\label{alg-norm}
\Require Images $\left( X^{0[r]} \right)_{r \in R} \in \RR^{N_0 \times d}$, $\la \in \RR^+ $, $\rho \in [0,1]^{|R|}$ and $k \in \RR^+$.

\Ensure Images $\left( \tilde X^{0[r]} \right)_{r \in R} \in \RR^{N_0 \times d}$.
\begin{enumerate}
	\algostep{Histogram down-sample} Compute $X^{[r]}$ from $X^{0[r]}$ using K-means clustering.
	\algostep{Compute barycenter} Compute with either~\eqref{eq-bar-l2} or~\eqref{eq-bar-l1} a barycenter $\mu_X$ where $X$ is a local minimum of~\eqref{eqbar} using the block coordinate descent described in Section~\ref{algobar}, see Algorithm~\ref{algo-block-barycenters}. 
	\algostep{Compute transport mappings} For all $r \in R$ compute $T^{[r]}$ between \\ 
		$X$ and $X^{[r]}$ by solving~\eqref{eq-symm-reg-energy}, such that $T^{[r]}(X^{[r]}_i) = Z^{[r]}_i$, where $Z^{[r]}= \diag(\Sig^{[r]} \U)^{-1} \Sig^{[r]} X$.
	\algostep{Transport up-sample} For every $T^{[r]}$ compute $\tilde T^{0[r]}$ following~\eqref{eq-upsample}.
	\algostep{Obtain high resolution results} Compute $\foralls r, \tilde X^{0[r]} = \tilde T^{0[r]}(X^{0[r]})$.
\end{enumerate}
\end{algorithm}

\subsection{Results}

We now show some example of color normalization using Algorithm~\ref{alg-norm}.

\begin{figure*}[!h]
\centering
\setlength{\arrayrulewidth}{2pt}
\begin{tabular}{@{}c@{\hspace{1mm}}c@{\hspace{1mm}}c@{\hspace{1mm}}c@{}}
	$\rho=(1,0)$ & $\rho=(0.7,0.3)$ & $\rho=(0.4,0.6)$ & $\rho=(0,1)$ \\
\includegraphics[width=.23\linewidth]{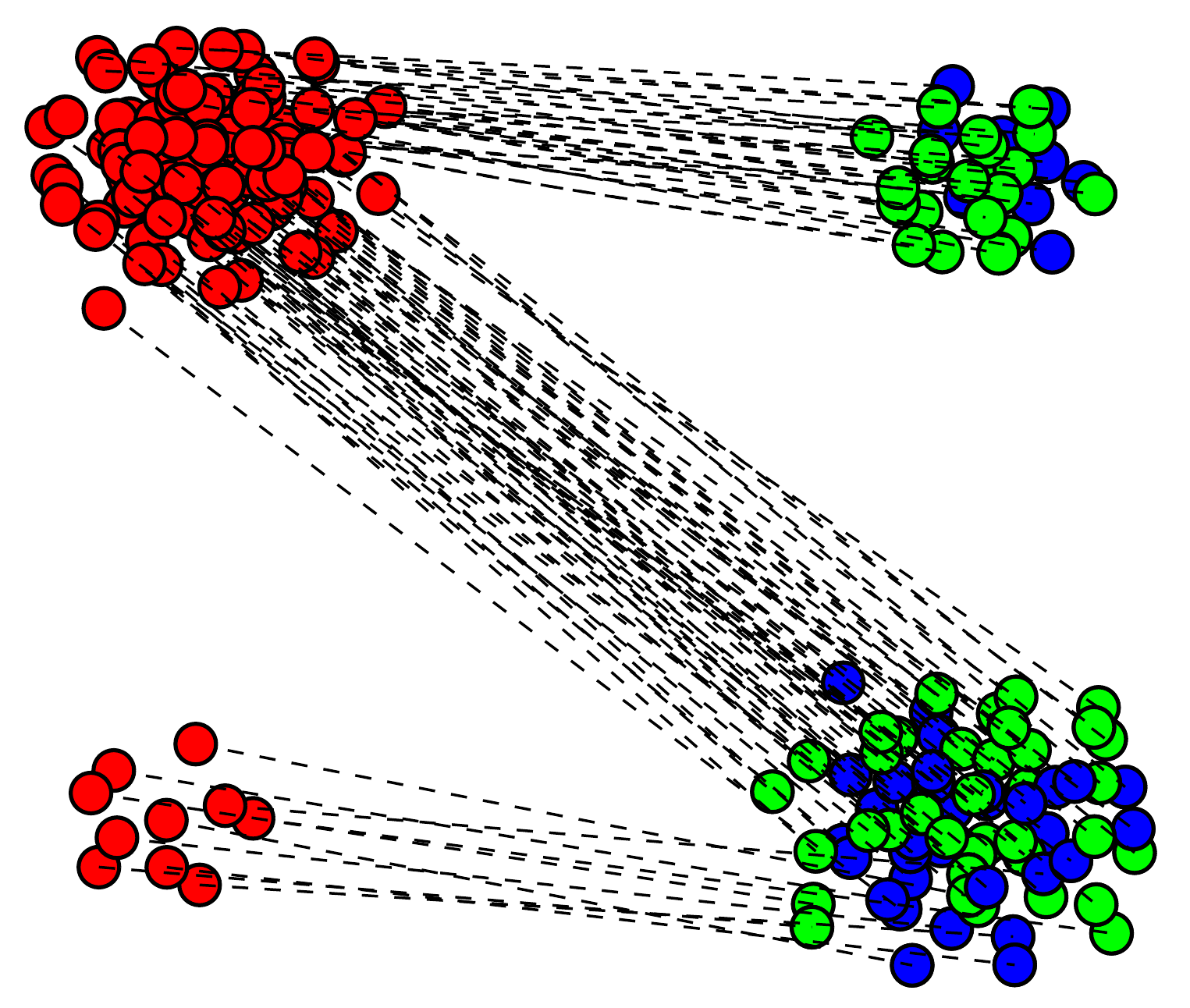} &  
\includegraphics[width=.23\linewidth]{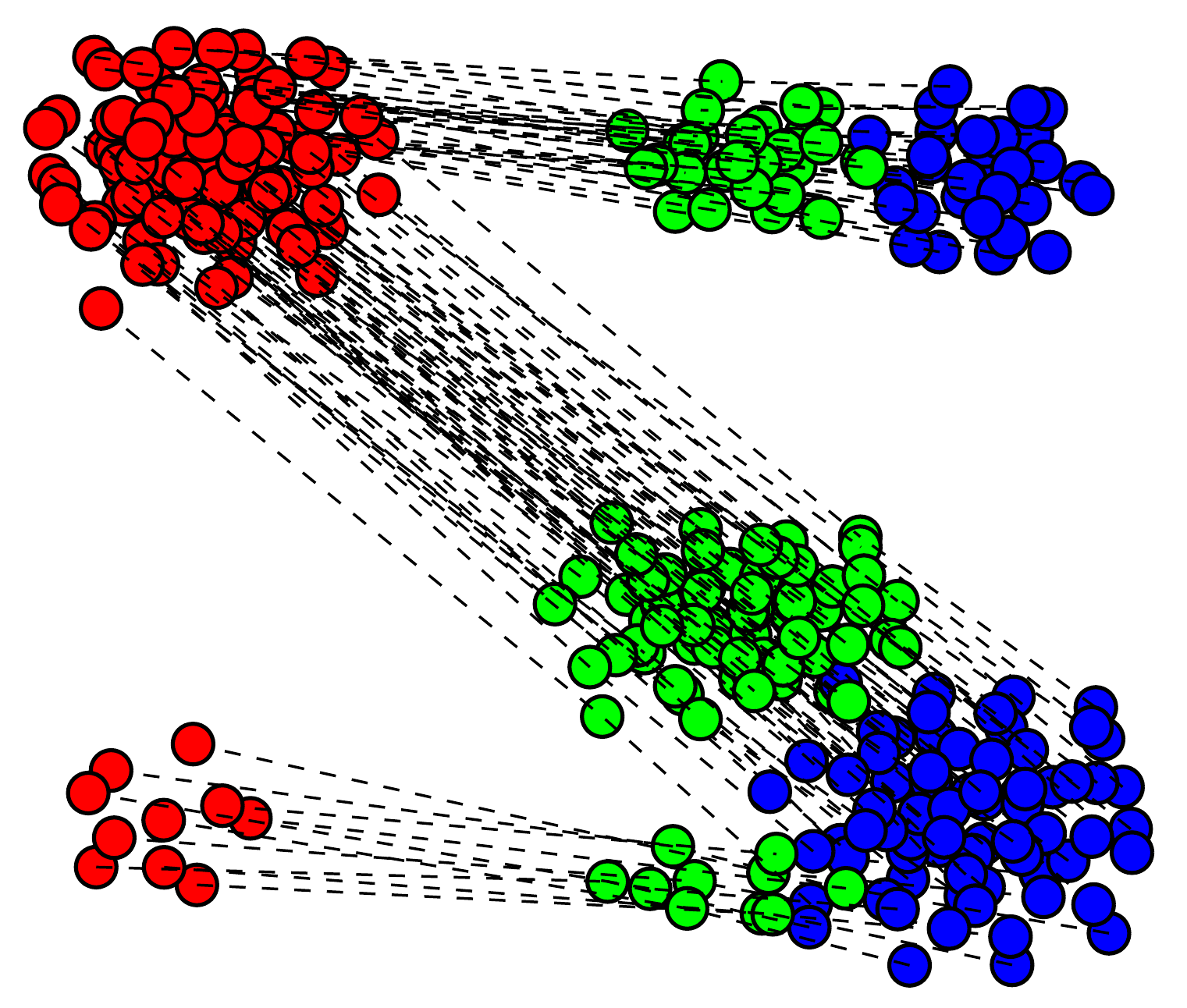} &
\includegraphics[width=.23\linewidth]{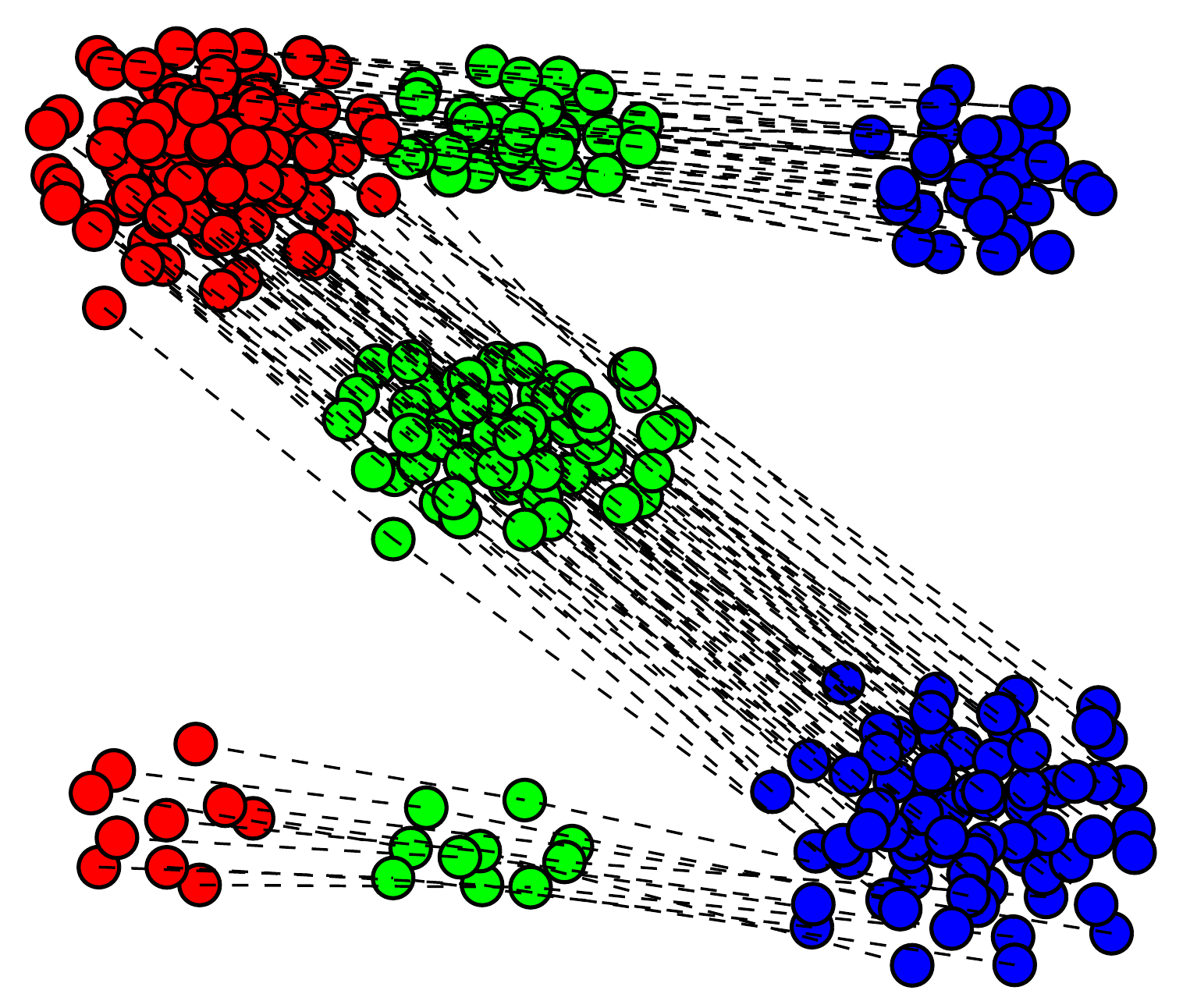} &
\includegraphics[width=.23\linewidth]{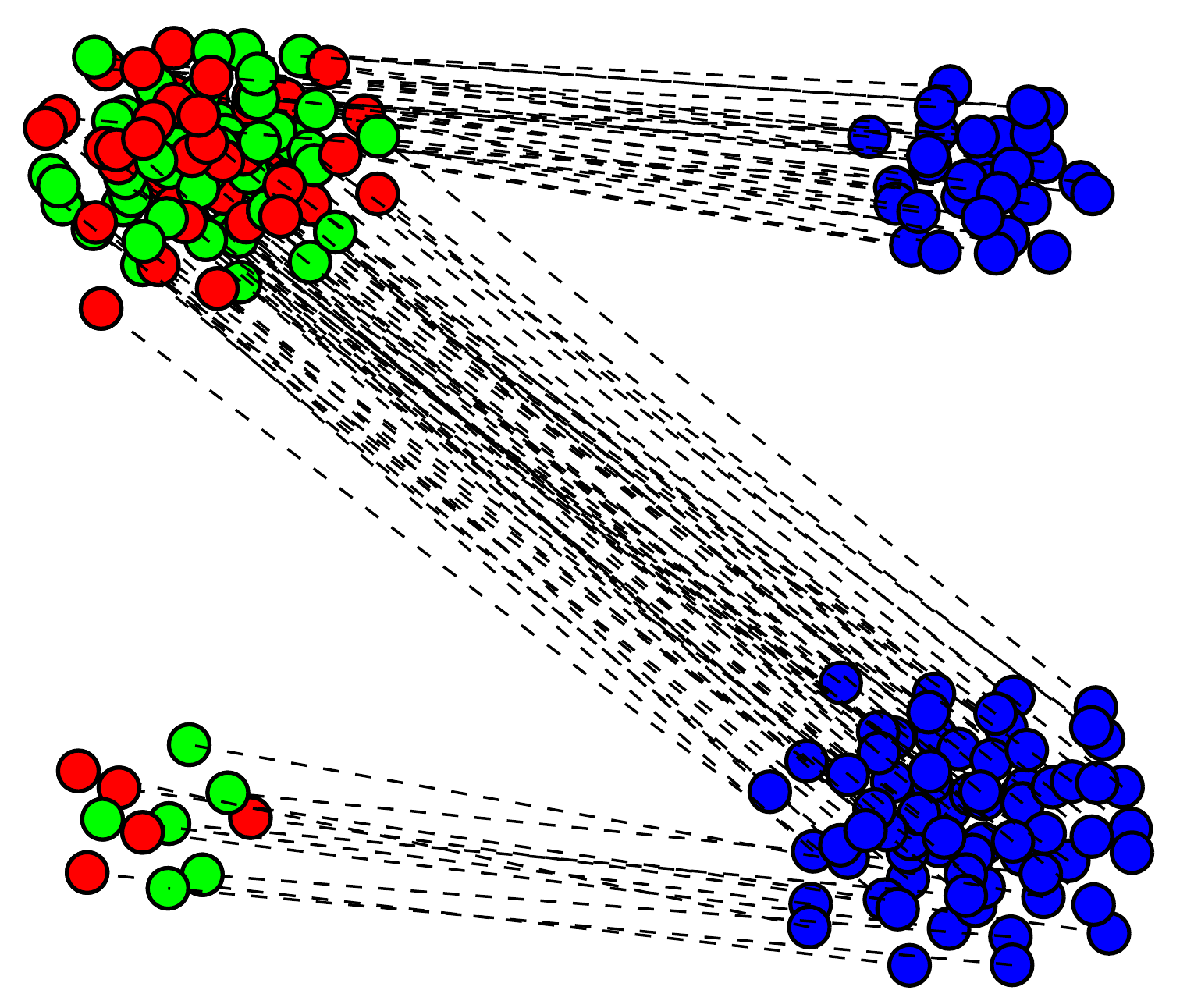} \\
\includegraphics[width=.23\linewidth,height=.2\linewidth]{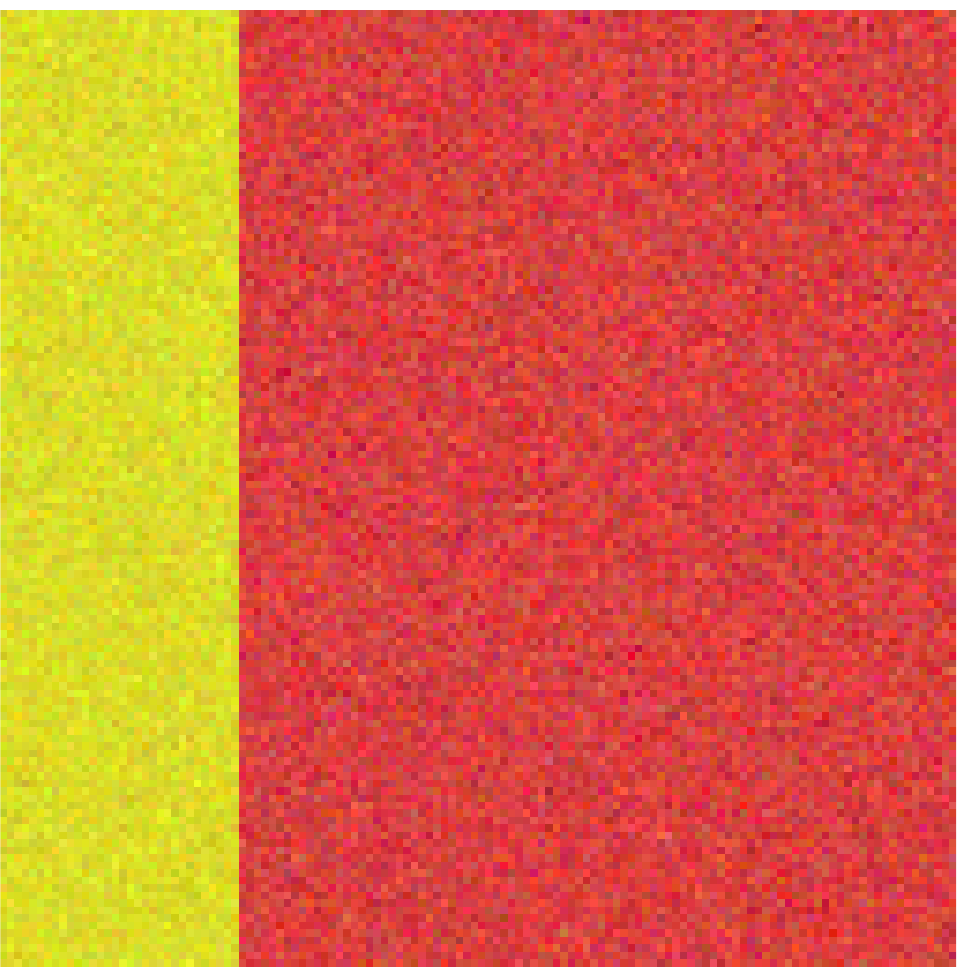} &
\includegraphics[width=.23\linewidth,height=.2\linewidth]{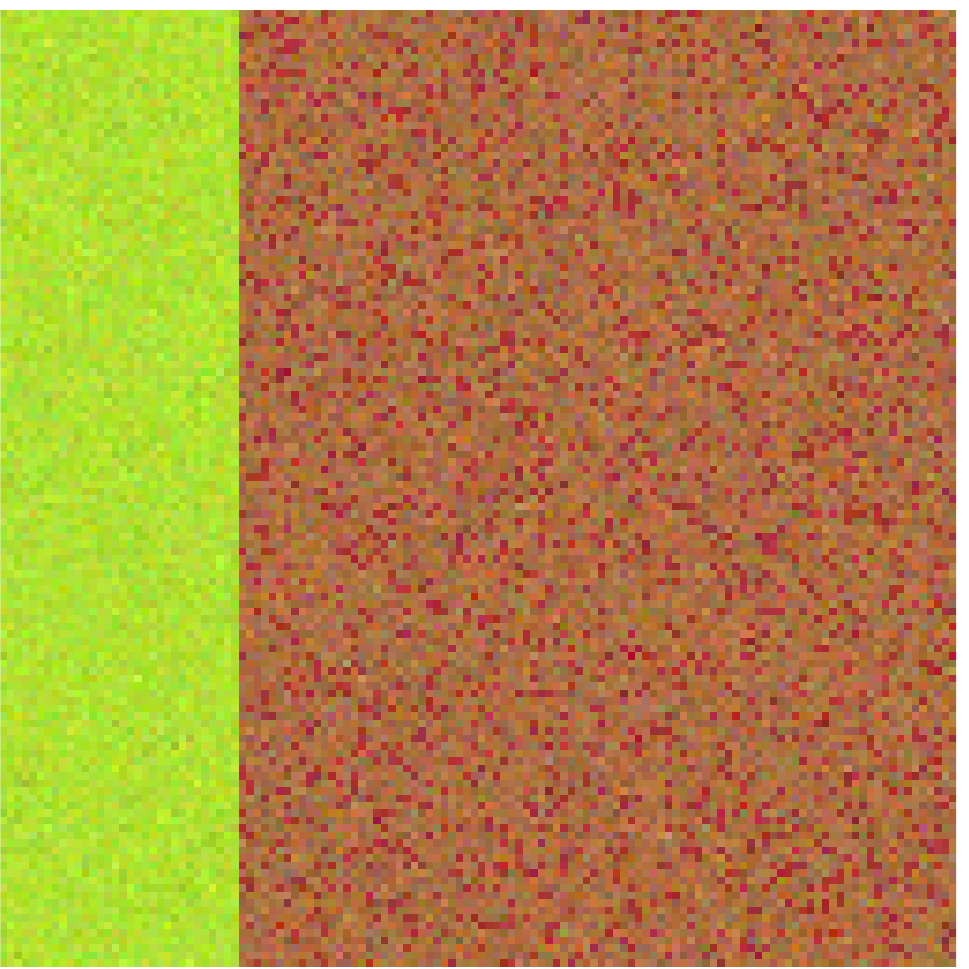} &
\includegraphics[width=.23\linewidth,height=.2\linewidth]{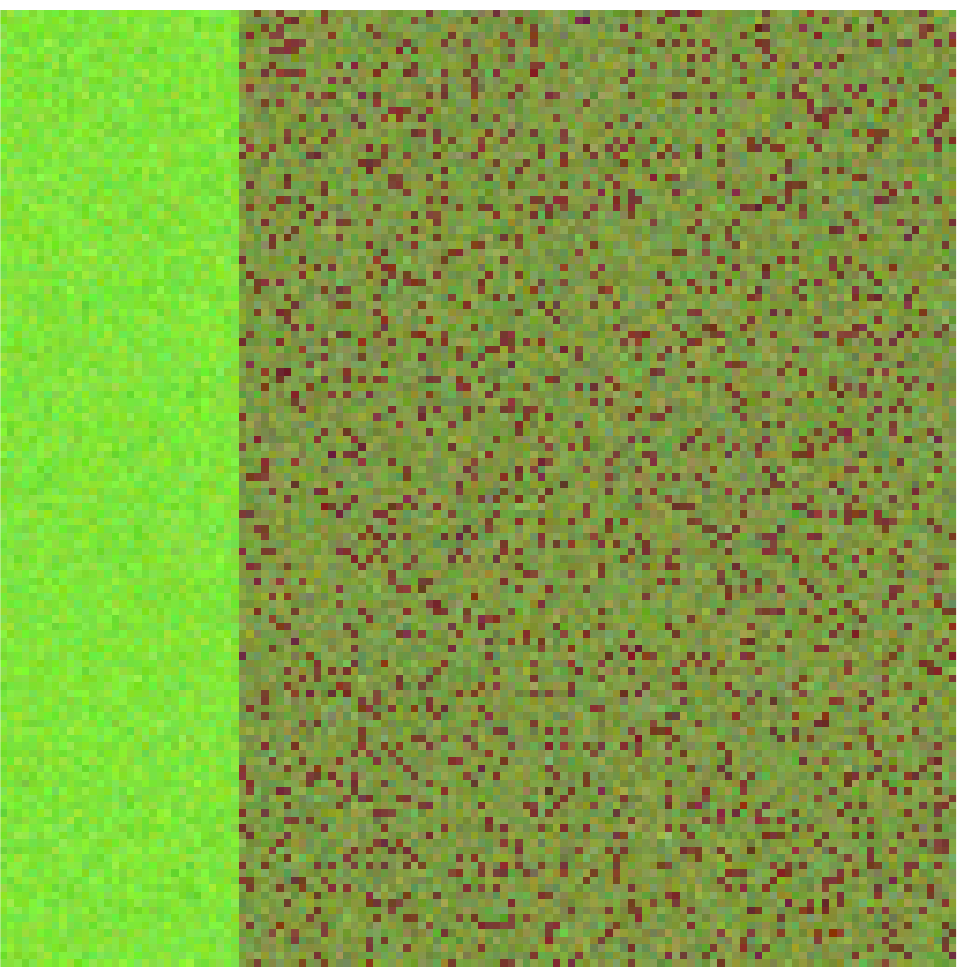} &
\includegraphics[width=.23\linewidth,height=.2\linewidth]{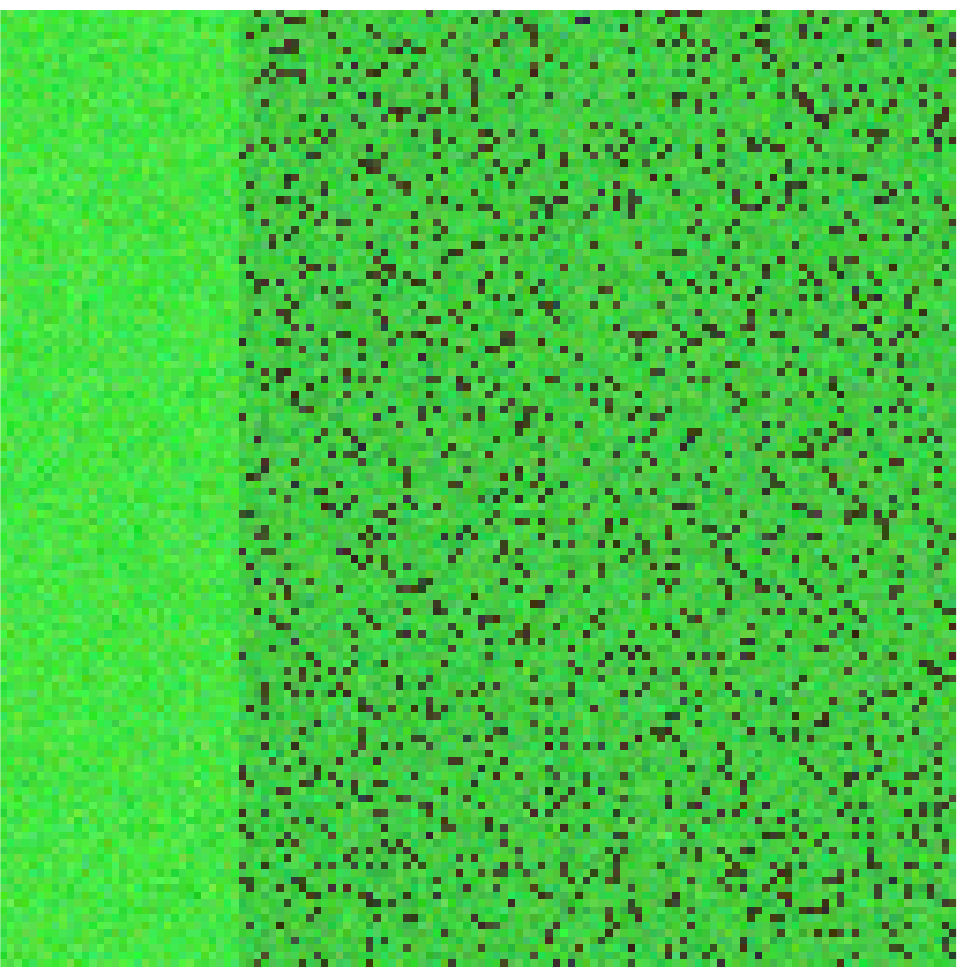}\\
\includegraphics[width=.23\linewidth,height=.2\linewidth]{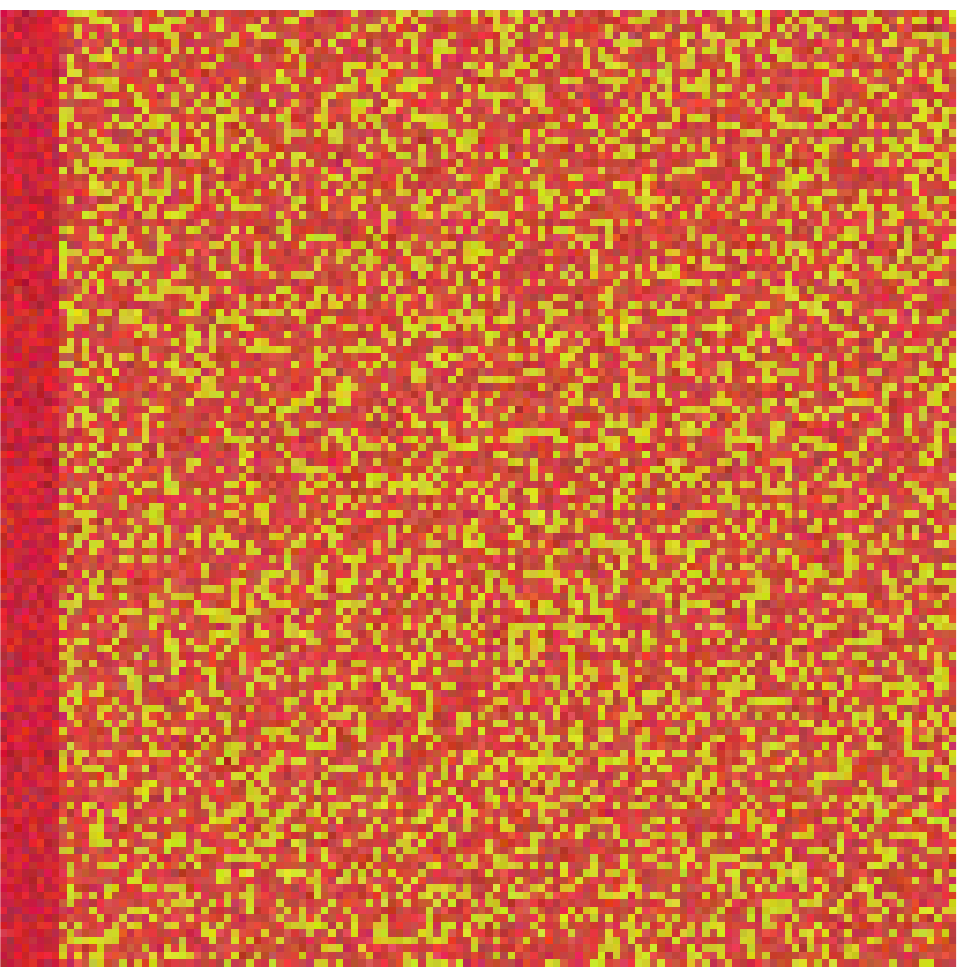} &
\includegraphics[width=.23\linewidth,height=.2\linewidth]{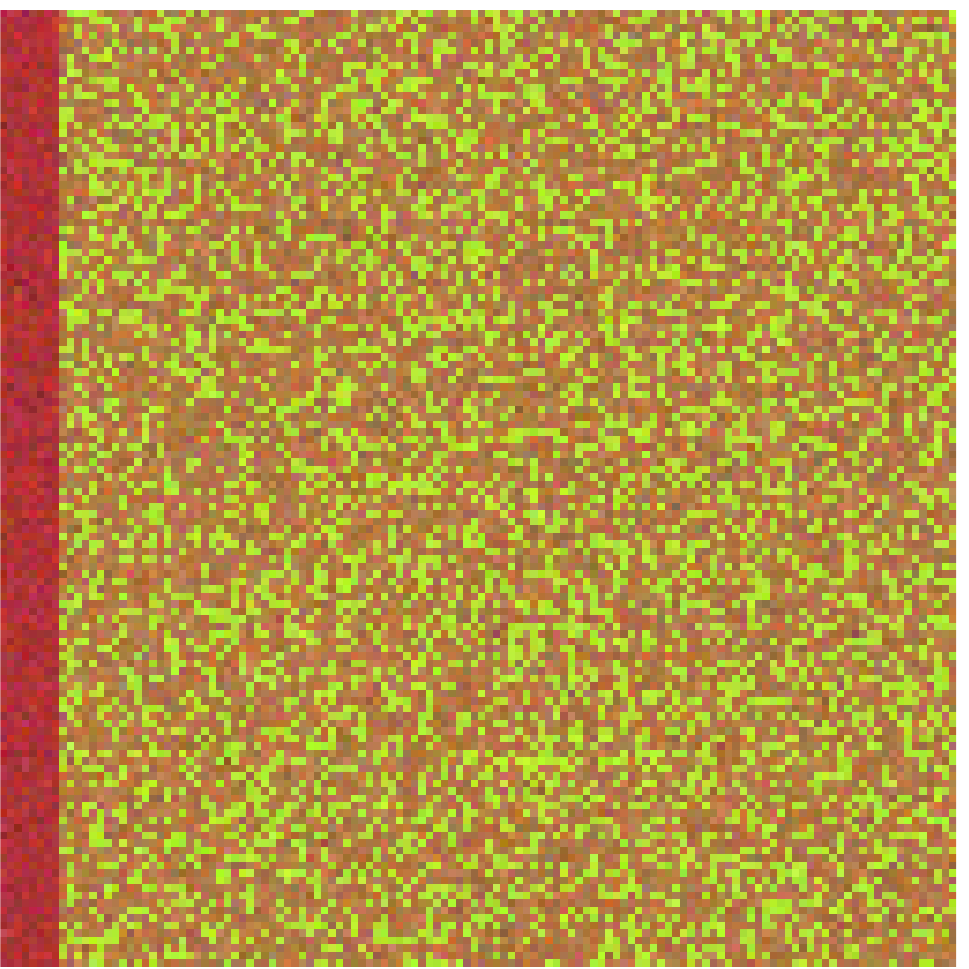} &
\includegraphics[width=.23\linewidth,height=.2\linewidth]{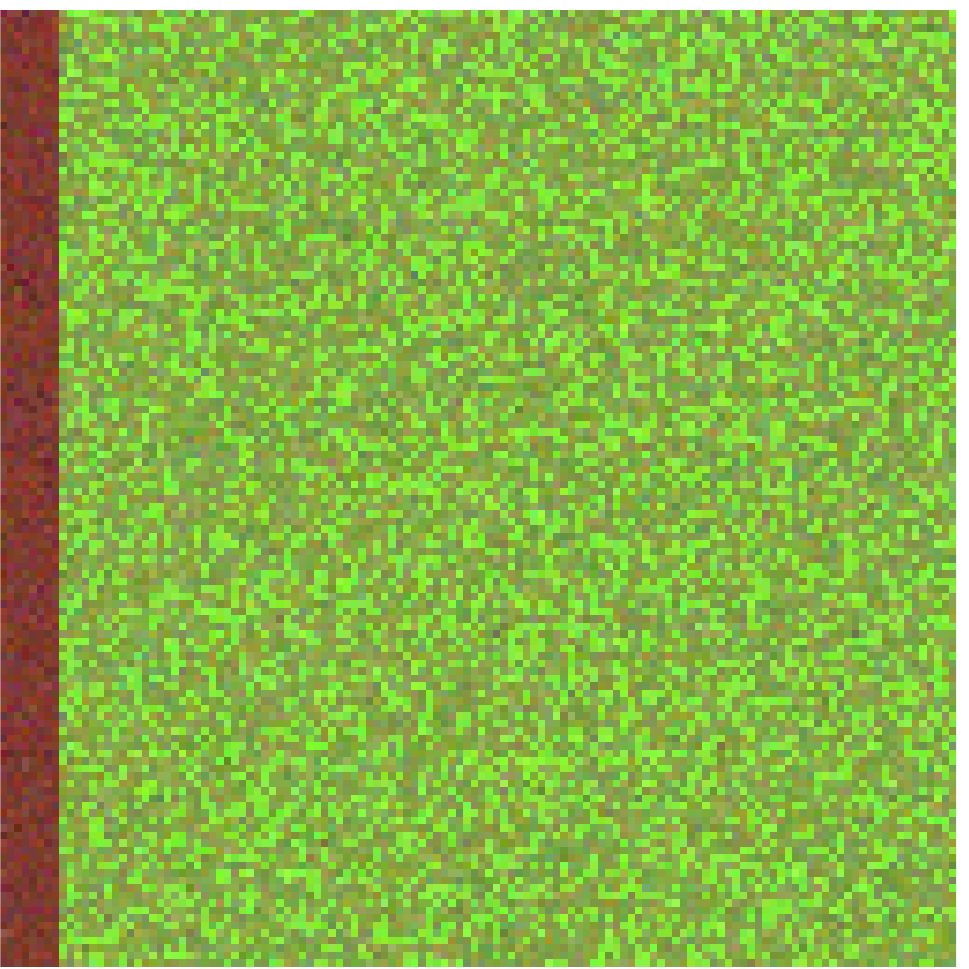} &
\includegraphics[width=.23\linewidth,height=.2\linewidth]{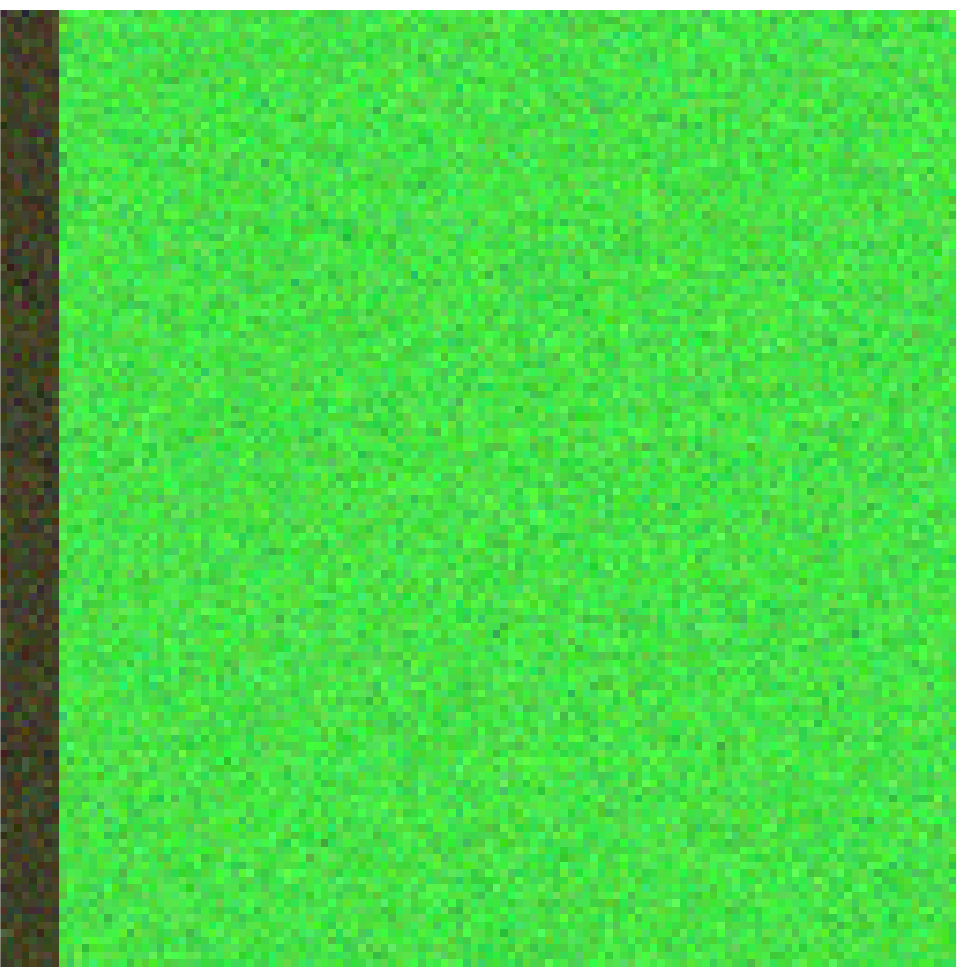} \\\hline
\includegraphics[width=.23\linewidth]{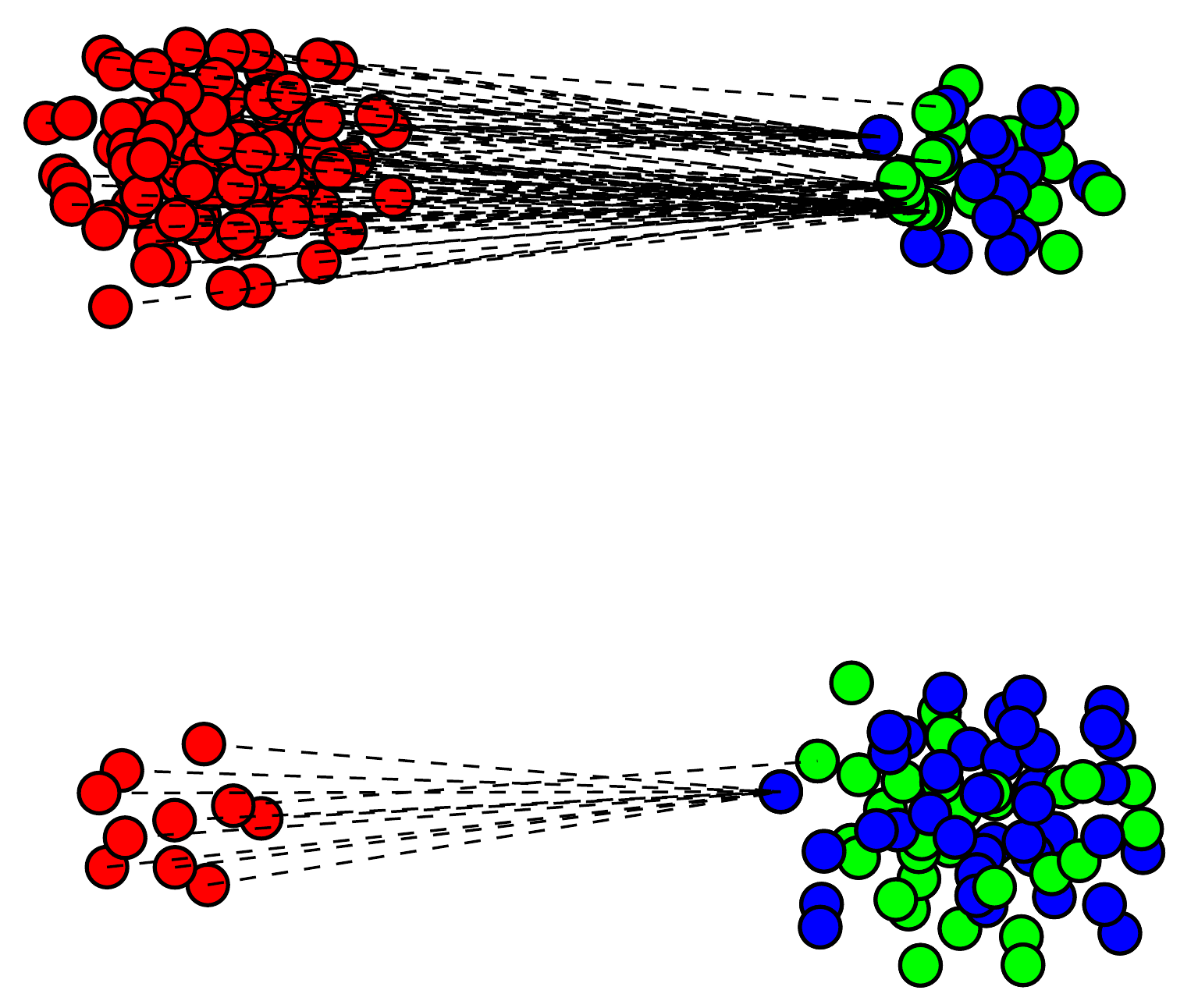} &
\includegraphics[width=.23\linewidth]{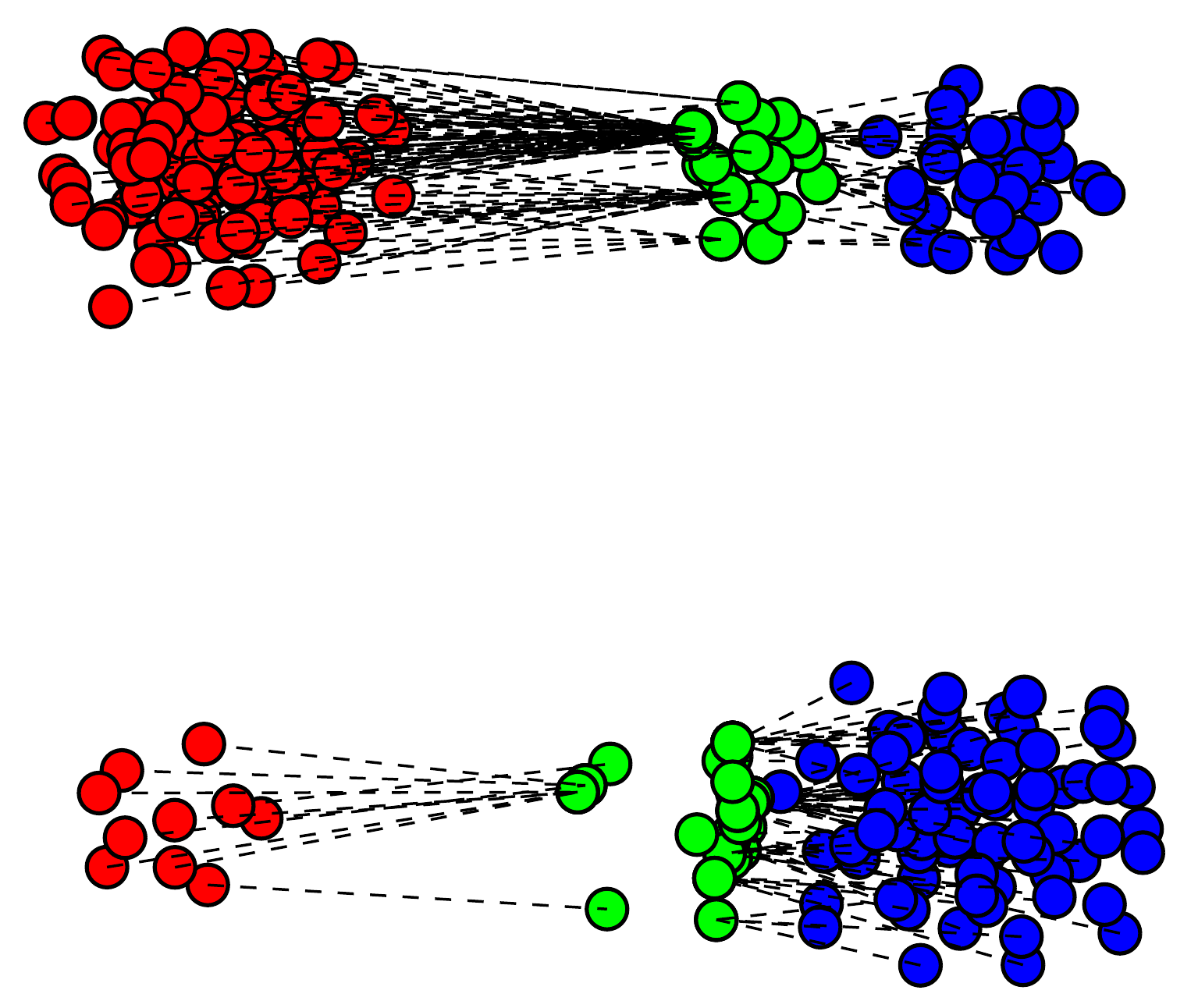} & 
\includegraphics[width=.23\linewidth]{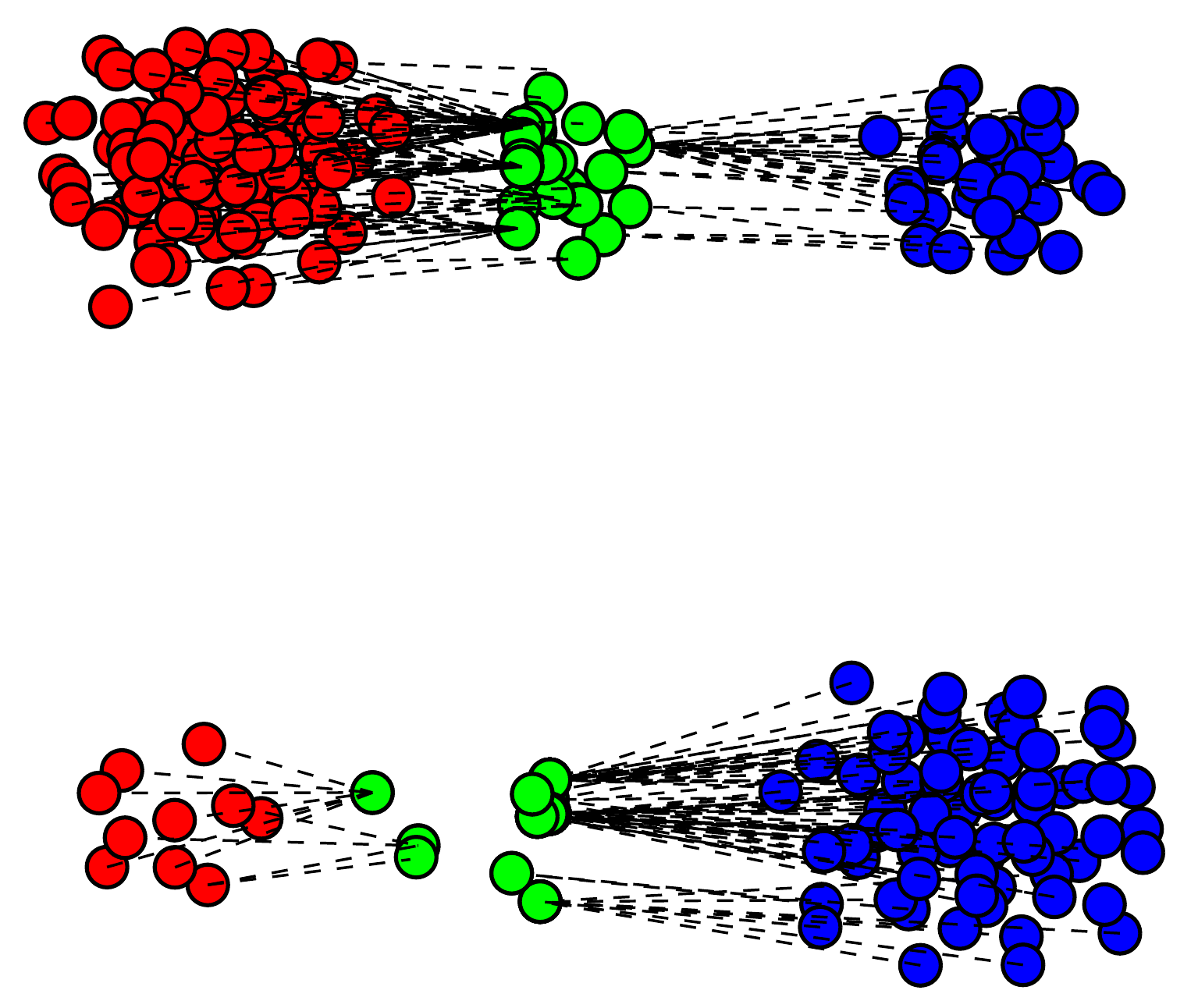} &
\includegraphics[width=.23\linewidth]{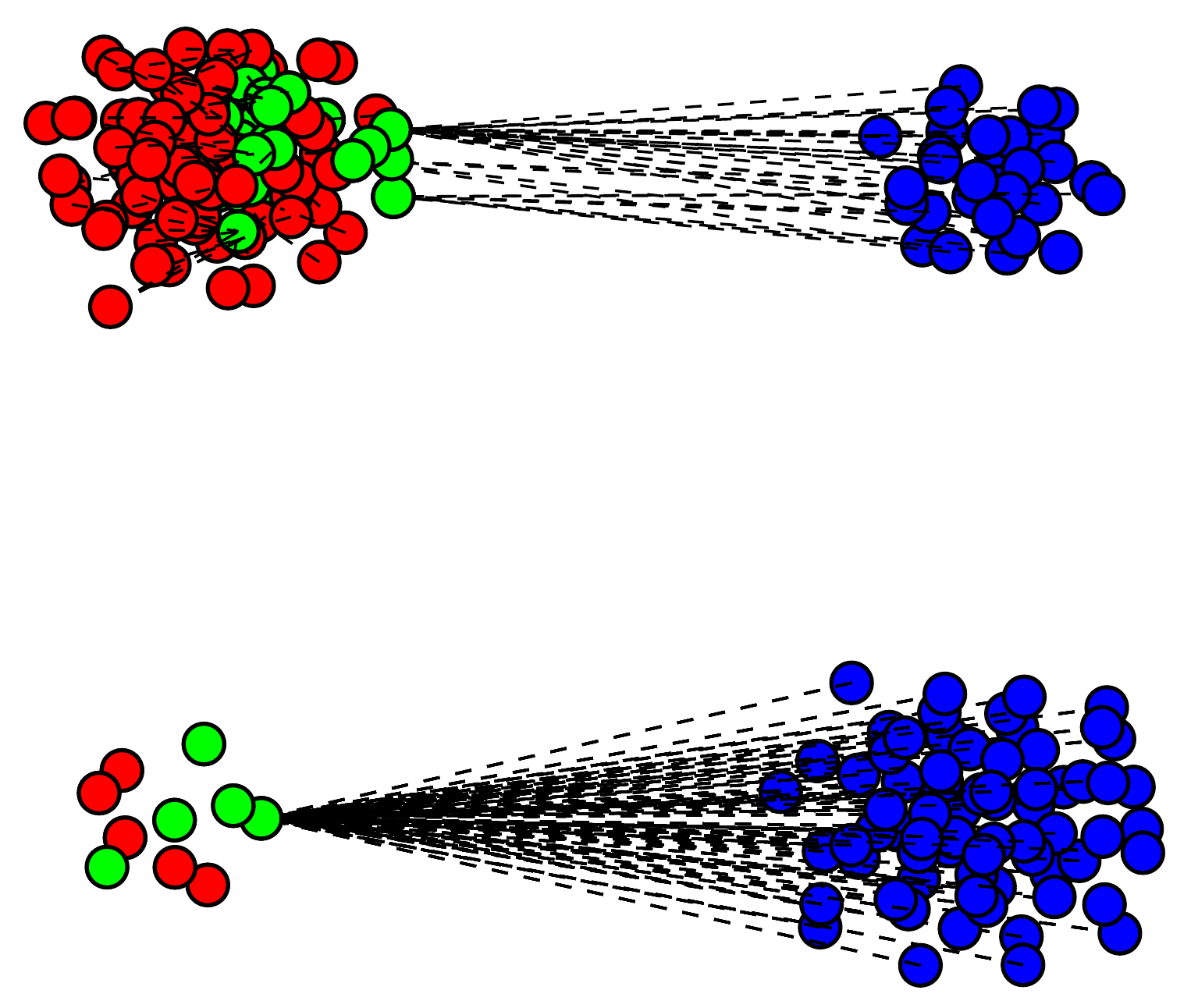} \\
\includegraphics[width=.23\linewidth,height=.2\linewidth]{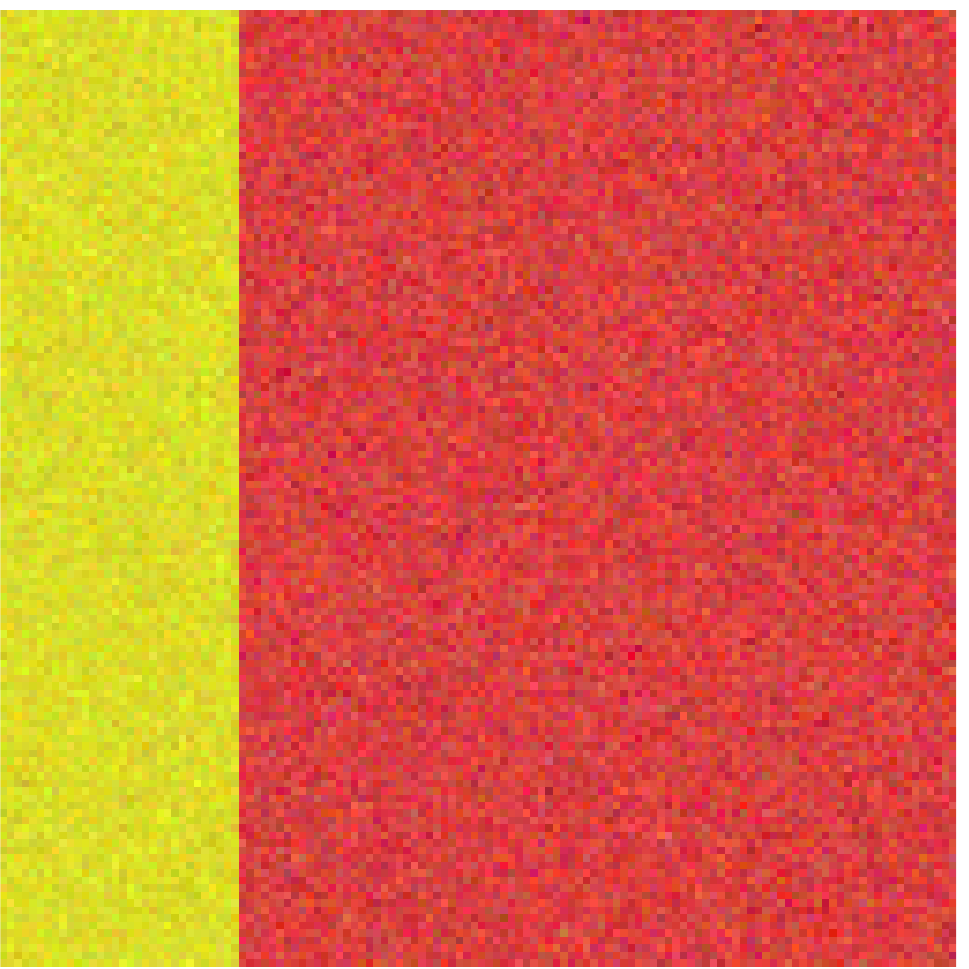} &
\includegraphics[width=.23\linewidth,height=.2\linewidth]{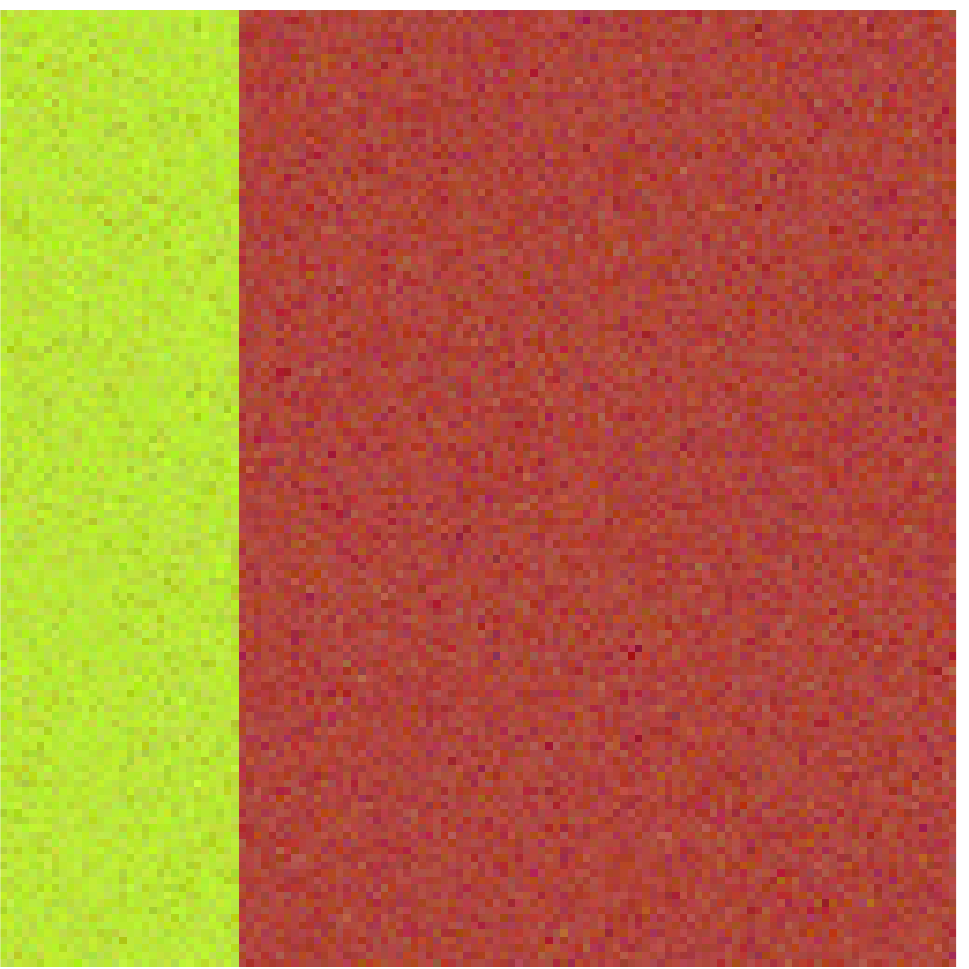}& 
\includegraphics[width=.23\linewidth,height=.2\linewidth]{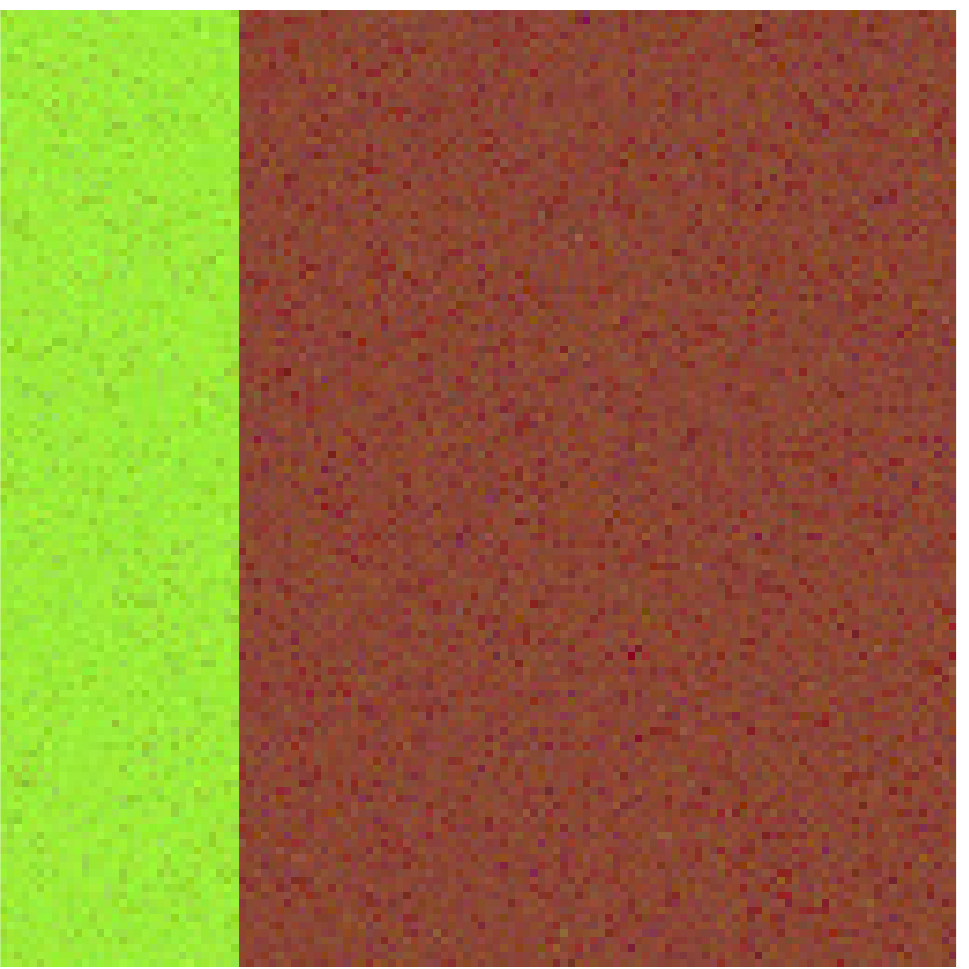} &
\includegraphics[width=.23\linewidth,height=.2\linewidth]{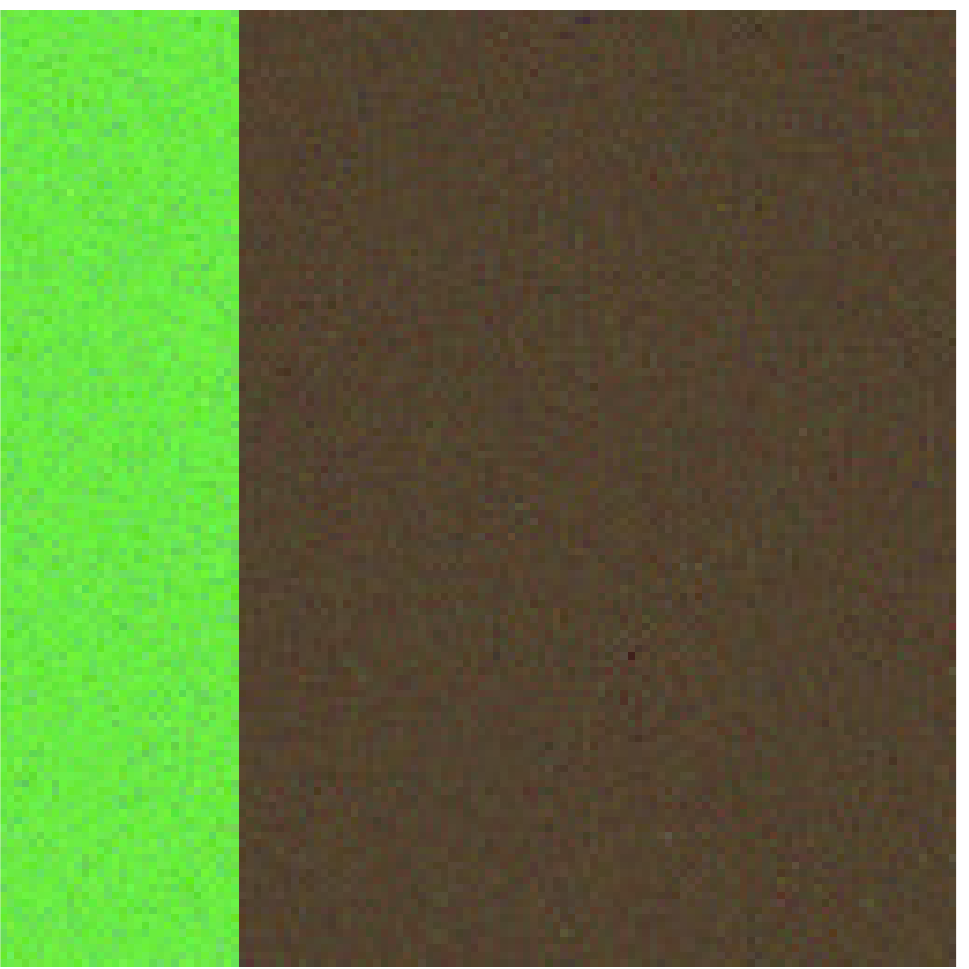}\\
\includegraphics[width=.23\linewidth,height=.2\linewidth]{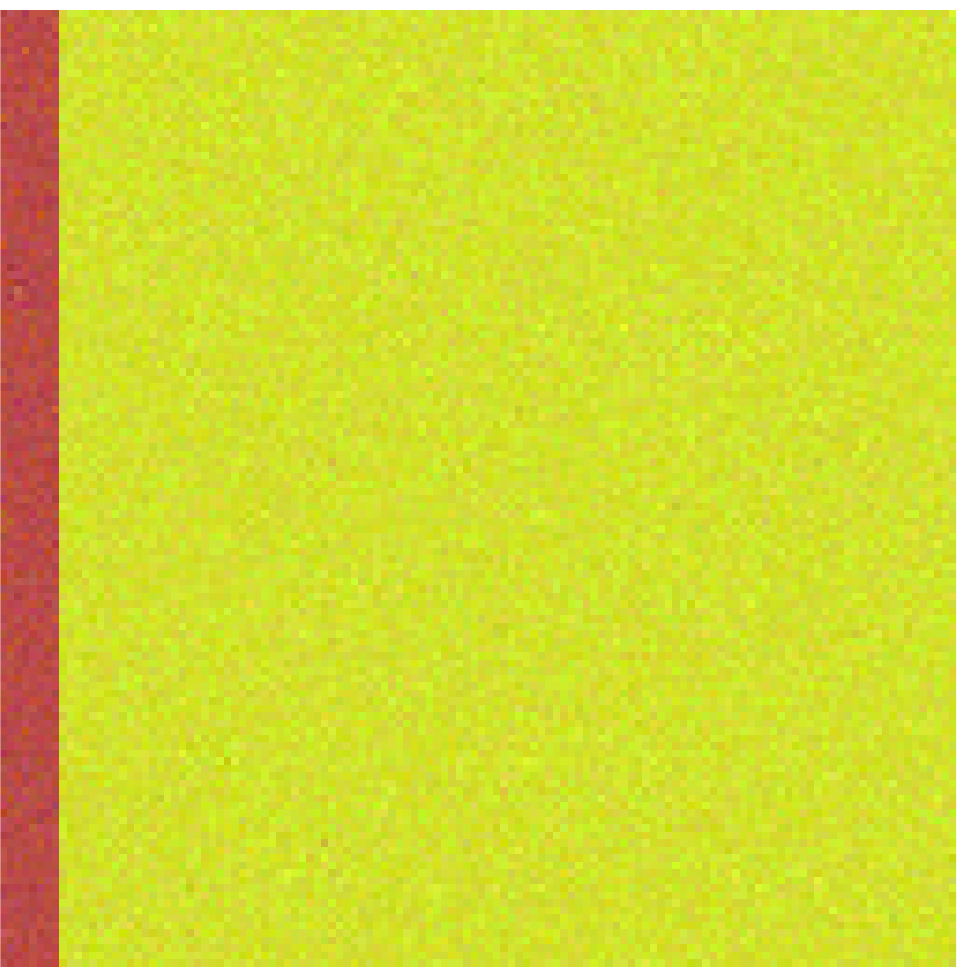} &
\includegraphics[width=.23\linewidth,height=.2\linewidth]{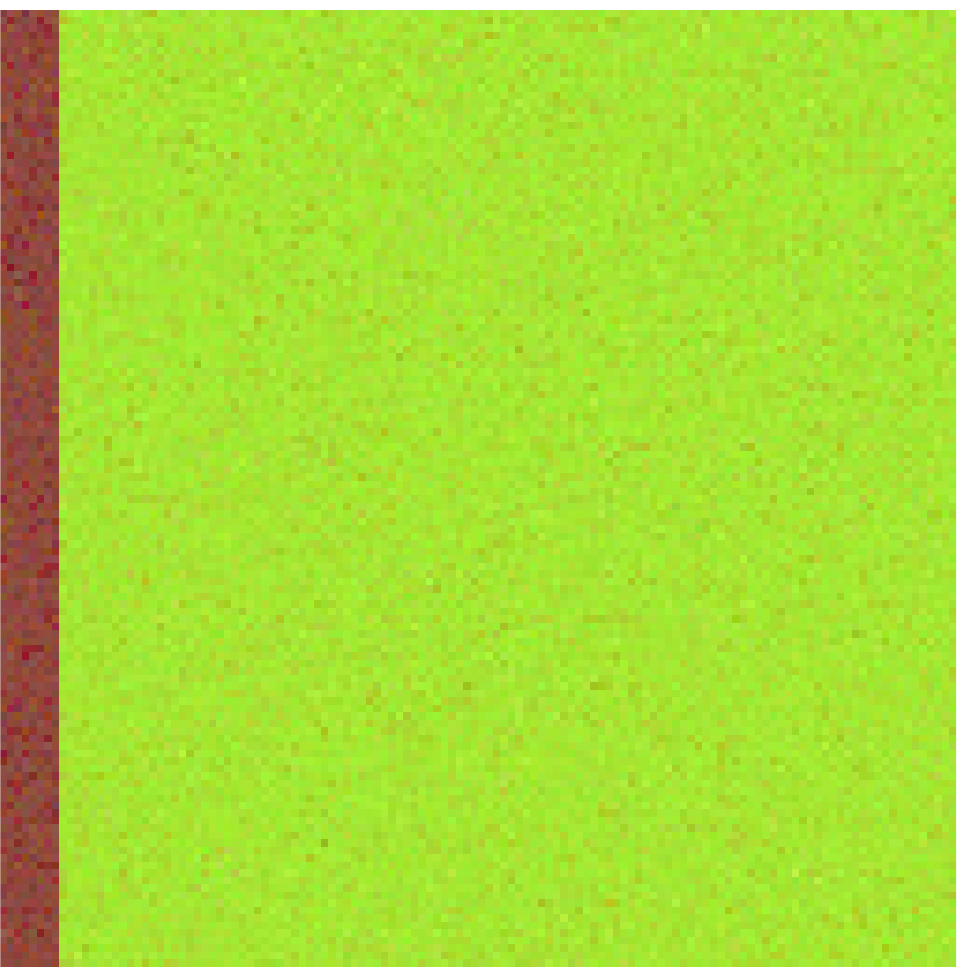} & 
\includegraphics[width=.23\linewidth,height=.2\linewidth]{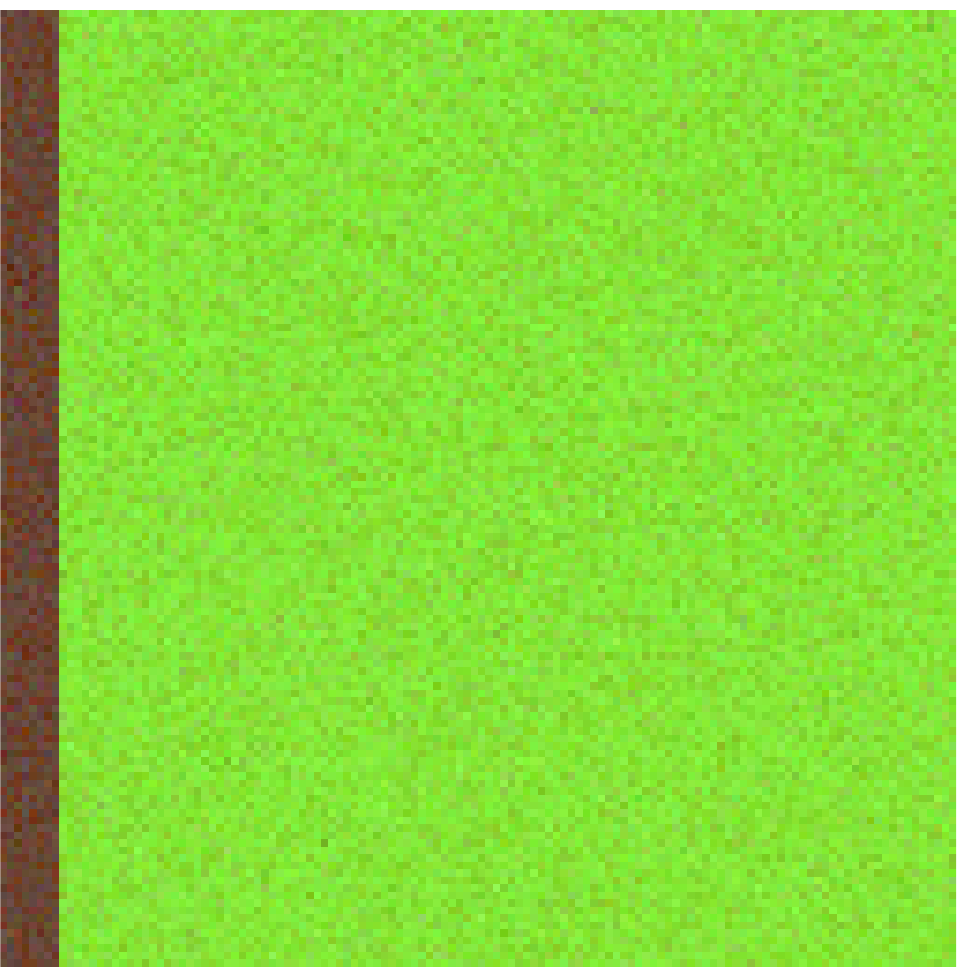} &
\includegraphics[width=.23\linewidth,height=.2\linewidth]{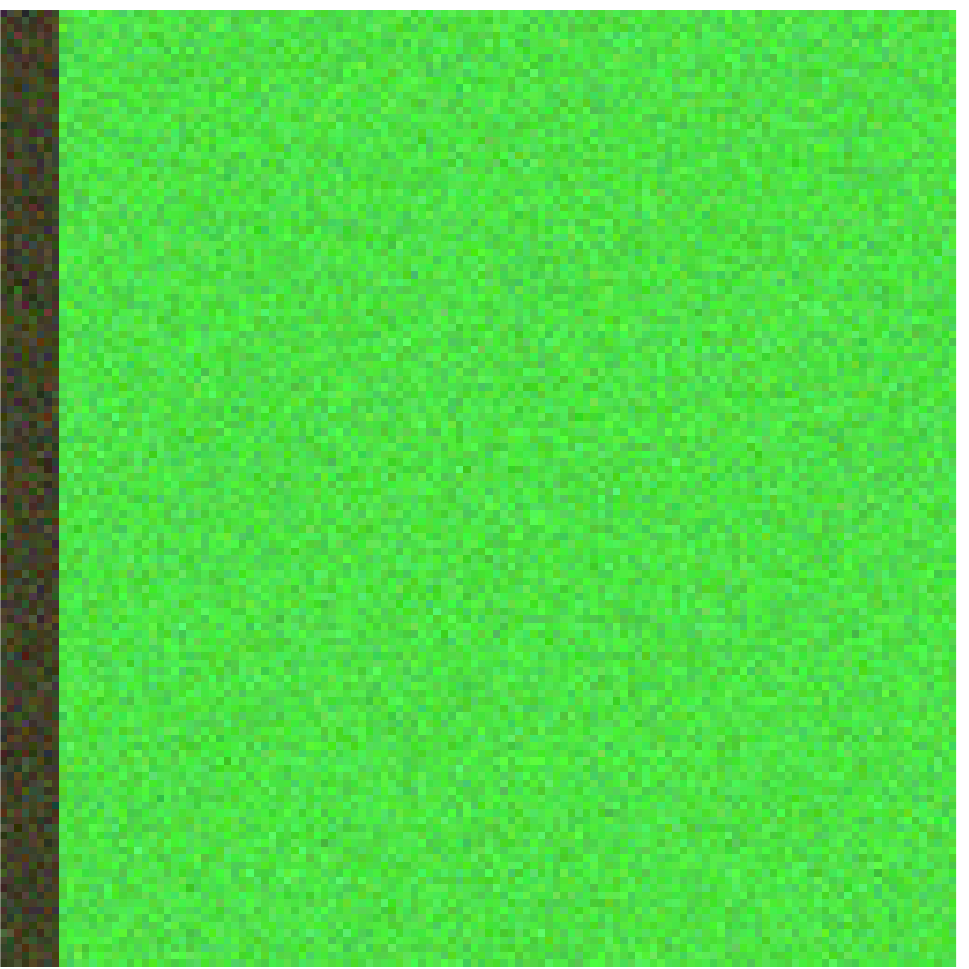} \\
\end{tabular}
 \caption{Comparison of classical OT (top 3 first rows) and relaxed/regularized OT (bottom 3 last rows). The original input images $X^{0,[1]}$ and $X^{0,[2]}$ are shown in Figure~\ref{im:synth}~(a).  
 		Rows \#1 and \#4 shows the 2-D projections of $X^{[1]}$ (blue) and $X^{[2]}$ (red), and in green the barycenter distribution for different values of $\rho$. We display a line between $X^{[r]}_i$ and $X_j$ if $\Sig^{[r]}_{i,j} > 0.1$. 
		Rows \#2 and \#5 (resp. \#3 and \#6) show the resulting normalized images $\tilde X^{0[1]}$ (resp. $\tilde X^{0[2]}$), 
		for each value of $\rho$.  
 	\textbf{Top 3 first rows:} classical OT corresponding to setting $k=1$ and $\la=0$. 
	\textbf{Bottom 3 last rows:}  regularized and relaxed OT, with parameters $k=20$ and $\lambda=0.0005$. See main text for comments. \vspace{0.5cm}
}
\label{im:baryillu}
\end{figure*}

\paragraph{Synthetic example}

Figure~\ref{im:baryillu} shows a comparison of normalization of two synthetic images using classical OT and our proposed relaxed/regularized OT. The results obtained using Algorithm~\ref{alg-norm} (setting $p=q=2$), using the set of two images ($|R|=2$) already used in Figure~\ref{im:synth}~(a), denoting here $X^{0[1]}=X^0$ and $X^{0[2]}=Y^0$. Each column shows the same experiment but with different values of $\rho$, which allows to visualize the interpolation between the color palettes (the colors in the images  evolve from the colors in $X^{[1]}$ towards the colors of $X^{[2]}$). 

With classical OT, the structure of the original data sets in not preserved as we change $\rho$, and the consequence on the final images (second and third row), is that the geometry of the original images changes in the barycenters. In contrast to classical OT, for all values of $\rho$ the relaxed/regularized barycenters $X$ have the same number of clusters of the original sets. Note that the consequence of having a transport that maintains the clusters of the original images, is that the geometry is preserved, while the histograms change.

\newcommand{\sidecapY}[1]{ \begin{sideways}\parbox{.19\linewidth}{\centering #1}\end{sideways} }
\newcommand{\myimgY}[1]{\includegraphics[width=.26\linewidth,height=.22\linewidth]{#1}}

\begin{figure*}[!h]
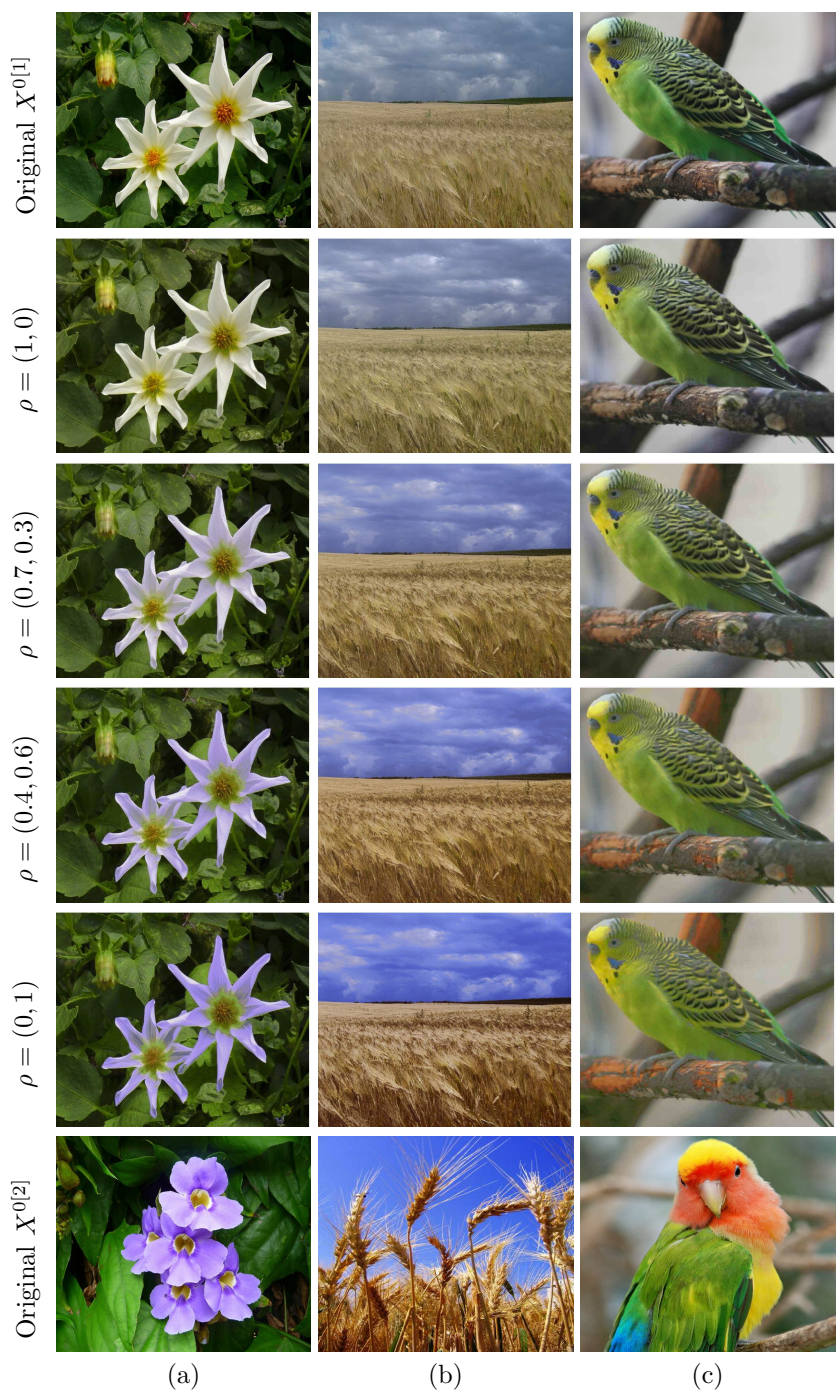

\centering
\begin{tabular}{@{}c@{\hspace{1mm}}c@{\hspace{1mm}}c@{\hspace{1mm}}c@{} }
\sidecapY{ Original $X^{0[1]}$ }  & 
\myimgY{star/fleur_1} &
\myimgY{star/wheat_1} &
\myimgY{star/parrot_1} \\
\sidecapY{$\rho=(1,0)$ } & 
\myimgY{barycenter/DiagYfleurrho-1-ksum11lambda00009nnx4QP1.jpeg} &
\myimgY{barycenter/wheat/Diag1-wheatrho-1-ksum13lambda001nnx4QP1.jpeg} &
\myimgY{barycenter/parrot/Diag1-parrotrho-1-ksum1lambda0001nnx4QP1.jpeg} \\
\sidecapY{$\rho=(0.7,0.3)$} & 
\myimgY{barycenter/DiagYfleurrho-06-ksum11lambda00009nnx4QP1} &
\myimgY{barycenter/wheat/Diag1-wheatrho-06-ksum13lambda001nnx4QP1}  &
\myimgY{barycenter/parrot/Diag1-parrotrho-06-ksum1lambda0001nnx4QP1} \\
\sidecapY{ $\rho=(0.4,0.6)$ } & 
\myimgY{barycenter/DiagYfleurrho-03-ksum11lambda00009nnx4QP1} &
\myimgY{barycenter/wheat/Diag1-wheatrho-03-ksum13lambda001nnx4QP1} &
\myimgY{barycenter/parrot/Diag1-parrotrho-03-ksum1lambda0001nnx4QP1} \\
\sidecapY{ $\rho=(0,1)$ } & 
\myimgY{barycenter/DiagYfleurrho-0-ksum11lambda00009nnx4QP1} &
\myimgY{barycenter/wheat/Diag1-wheatrho-0-ksum13lambda001nnx4QP1} &
\myimgY{barycenter/parrot/Diag1-parrotrho-0-ksum1lambda0001nnx4QP1} \\
\sidecapY{ Original $X^{0[2]}$ } & 
\myimgY{star/fleur_2} &
\myimgY{star/wheat_2} &
\myimgY{star/parrot_2} \\
& (a) &  (b) & (c)
\end{tabular}
\caption{Results for the barycenter algorithm on different images computed with the method proposed in Section~\ref{algobarysobolev}. The parameters were set to \textbf{(a)} $k=1.1,\la=0.0009$, \textbf{(b)} $k=1.3,\la=0.01$, and  \textbf{(b)} $k=1,\la=0.001$. Note how as $\rho$ approaches $(0,1)$, the histogram of the barycenter image becomes similar to the histogram of $X^{0[2]}$.}
\label{im:bar}
\end{figure*}

\paragraph{Example on natural images} Fig.~\ref{im:bar} shows the results of the same experiment as in Fig.~\ref{im:baryillu}, but on the natural images labeled as $X^{0[1]}$ and $X^{0[2]}$ in rows $\#1$ and $\#6$. In this case, we only show the transport from $X$ to $X^{0[1]}$, that is to say, we maintain the geometry of $X^{0[1]}$ (row $\#1$) and match its histogram to the barycenter distribution. As in the previous experiment, note how the colors change smoothly from $(1,0)$ to $(0,1)$ without generating artifacts and match the color and contrast of image $X^{0[2]}$ for $\rho=(0,1)$. The change in contrast is specially visible for the (b) wheat image.

\paragraph{Color Normalization} 

Computing the barycenter distribution of the histograms of a set of images is useful for color normalization. We show in Figures~\ref{barflower}, and~\ref{barclock} the results obtained with Algorithm~\ref{alg-norm}, and compare them with the standard OT and the method proposed by Papadakis et al.~\cite{Papadakis_ip11}. The improvement of the relaxation and regularization is specially noticeable in Figures~\ref{barflower} where OT creates artifacts such as coloring the leaves on violet for Figure~\ref{barflower}~(a), or introducing new colors on the background in Figure~\ref{barflower}~(c). In Figure~\ref{barclock}, OT and Papadakis et al.'s method introduce artifacts mostly on the sky of Figure~\ref{barclock}~(a) and  Figure~\ref{barclock}~(b), while the relaxed and regularized version displays a smoother result for Figure~\ref{barclock}~(a) and~(c) and a more meaningful color transformation (all the clouds have the same color in the fourth row) for Figure~\ref{barclock}~(b).

\newcommand{\myimgZ}[1]{\includegraphics[width=.28\linewidth,height=.26\linewidth]{#1}}
\newcommand{\myTriTab}[1]{ \begin{tabular}{@{}c@{\hspace{2mm}}c@{\hspace{2mm}}c@{} } #1	\end{tabular} }

\begin{figure*}[ht]
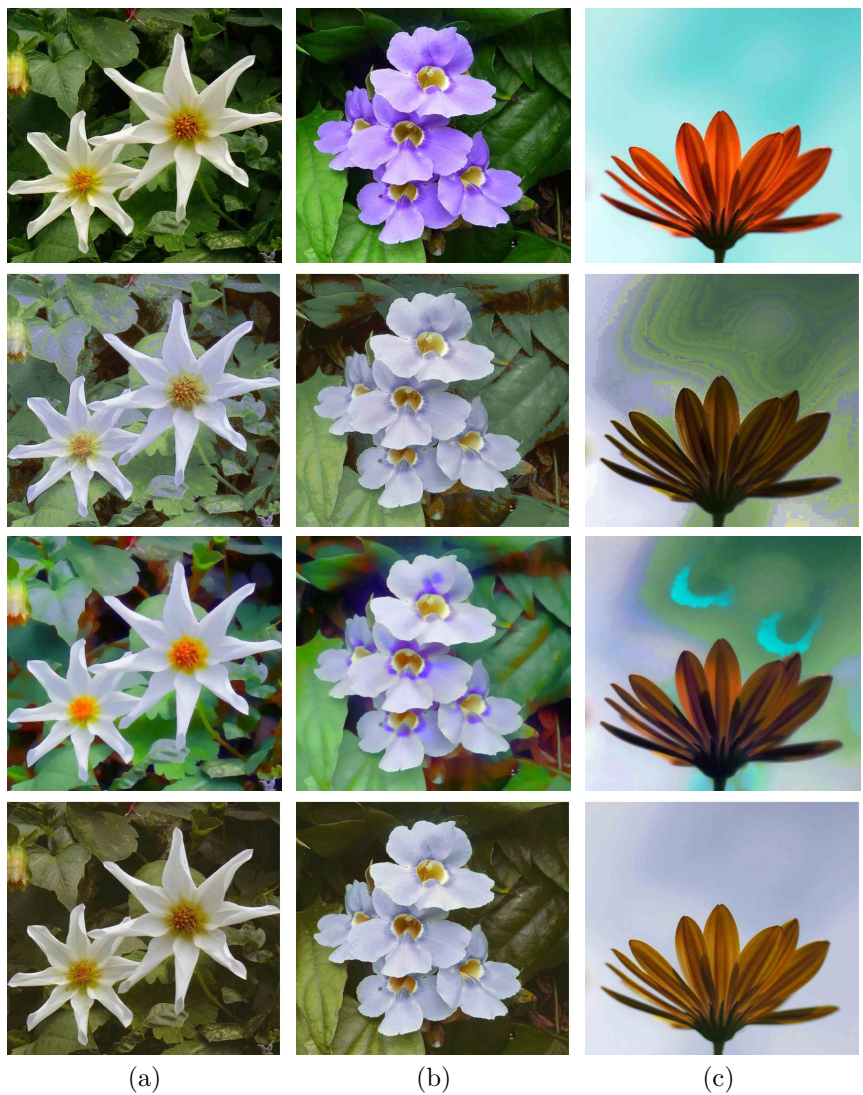

\centering
\myTriTab{ 
\myimgZ{./barycenter/flowers/flowers-1} &
\myimgZ{./barycenter/flowers/flowers-2} &
\myimgZ{./barycenter/flowers/flowers-3} \\
\myimgZ{./barycenter/flowers/Diag1-OT} &
\myimgZ{./barycenter/flowers/Diag2-OT} &
\myimgZ{./barycenter/flowers/Diag3-OT} \\
\myimgZ{./barycenter/nicoflowers1} &
\myimgZ{./barycenter/nicoflowers2} &
\myimgZ{./barycenter/nicoflowers3} \\
\myimgZ{./barycenter/flowers/Diag1-flowersrho-033333-ksum2lambda0005nnx4QP1} &
\myimgZ{./barycenter/flowers/Diag2-flowersrho-033333-ksum2lambda0005nnx4QP1} &
\myimgZ{./barycenter/flowers/Diag3-flowersrho-033333-ksum2lambda0005nnx4QP1} \\ 
(a) & (b) & (c)
}
\caption{In the first row, we show the original images. In the following rows, we show the result of computing the barycenter histogram and imposing it on each of the original images, with different algorithms. In the second row, we use OT. In the third row, the results were obtained with the method proposed by Papadakis et al.~\cite{Papadakis_ip11}. On the last row, we show the results obtained with the relaxed and regularized OT barycenter with $k=2,\la=0.005$. Note how the proposed algorithm is the only one that does not produce artifacts on the final images such as (a) color artifacts on the leaves and (c) different colors on the background.}
\label{barflower}
\end{figure*}

\begin{figure*}[ht]
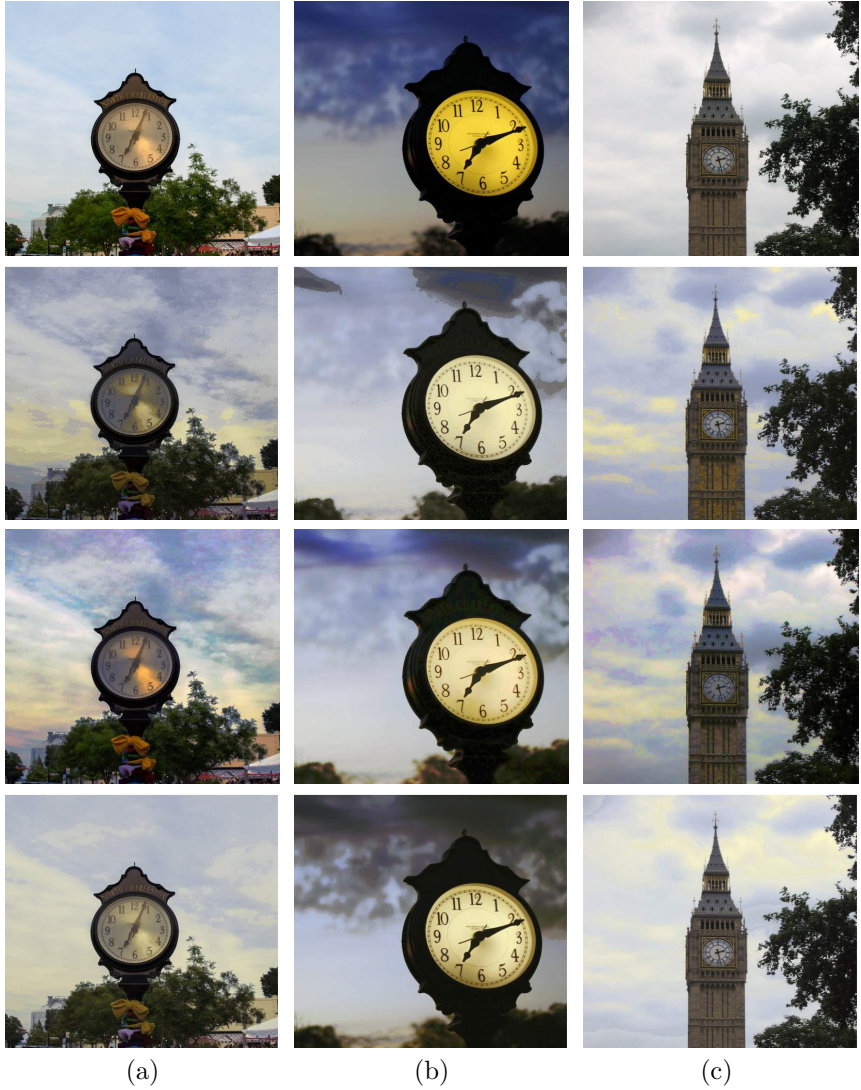

\centering
\myTriTab{ 
\myimgZ{./barycenter/clockmontague-1} &
\myimgZ{./barycenter/clockmontague-2} &
\myimgZ{./barycenter/clockmontague-3} \\
\myimgZ{./barycenter/Diag1-clockmontaguerho-033333-ksum1lambda0nnx4QP1} & 
\myimgZ{./barycenter/Diag2-clockmontaguerho-033333-ksum1lambda0nnx4QP1} &
\myimgZ{./barycenter/Diag3-clockmontaguerho-033333-ksum1lambda0nnx4QP1} \\
\myimgZ{./barycenter/nicoclock1} &
\myimgZ{./barycenter/nicoclock2} &
\myimgZ{./barycenter/nicoclock3} \\
\myimgZ{./barycenter/Diag1-clockmontaguerho-033333-ksum13lambda00005nnx4QP1} &
\myimgZ{./barycenter/Diag2-clockmontaguerho-033333-ksum13lambda00005nnx4QP1} &
\myimgZ{./barycenter/Diag3-clockmontaguerho-033333-ksum13lambda00005nnx4QP1} \\
(a) & (b) & (c)
}
\caption{Same experiment as in Figure~\ref{barflower}, but setting for the final row $k=1.3,\la=0.0005$. Note how the proposed method does not create artifacts on the sky and the clock for images (a) and (c), as OT or the method proposed by Papadakis et al.~\cite{Papadakis_ip11}.}
\label{barclock}
\end{figure*}

\begin{figure*}[ht]
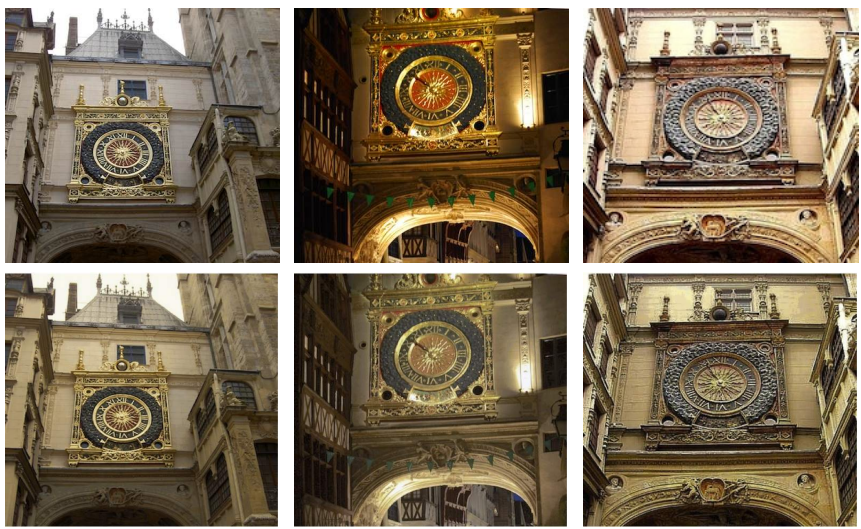

\centering
\myTriTab{ 
\myimgZ{./barycenter/clockHD-1} &
\myimgZ{./barycenter/clockHD-2} &
\myimgZ{./barycenter/clockHD-3} \\
\myimgZ{./barycenter/Diag1-clockHDrho-033333-ksum11lambda00005nnx4QP1} &
\myimgZ{./barycenter/Diag2-clockHDrho-033333-ksum11lambda00005nnx4QP1} &
\myimgZ{./barycenter/Diag3-clockHDrho-033333-ksum11lambda00005nnx4QP1} \\
}
\caption{The proposed method can be applied as a preprocessing step in a pipeline for objects detection or image registration, where canceling illumination is important. On the first row, we show a set of pictures of the same object taken at different hours of the day or night, and on the second row, the result of our algorithm setting  $(p,q)=(2,2)$, $k=1$ and $\la=0.0005$. Note how the algorithm is able to normalize the illumination conditions of all the images.}
\label{im:colornorm}
\end{figure*}

As a final example, we would like to show in Figure~\ref{im:colornorm} how this method can be applied as a preprocessing before comparing/registering images of the same object obtained under different illumination conditions.  

\section*{Conclusion}

In this paper, we have proposed a generalization of the discrete optimal transport that enables to relax the mass conservation constraint and to regularize the transport map. We showed how this novel class of transports can be applied to color transfer and that regularization is crucial to reduce noise amplification artifacts, while relaxation enables to cope with mass variation of the modes of the color palettes. We have extended these ideas to compute the relaxed and regularized barycenter of a set of input distributions. We illustrate the usefulness of this novel barycenter to perform color palette normalization of a group of input images.

\section*{Acknowledgements}
The authors would like to thank Julien Rabin for advises on color transfer and stimulating discussions.

\bibliographystyle{siam}
\bibliography{refs}
\end{document}